\documentclass{article}

\usepackage{arxiv}

\usepackage[T1]{fontenc}    % use 8-bit T1 fonts
\usepackage{hyperref}       % hyperlinks
\usepackage{url}            % simple URL typesetting
\usepackage{booktabs}       % professional-quality tables
\usepackage{amsfonts}       % blackboard math symbols
\usepackage{nicefrac}       % compact symbols for 1/2, etc.
\usepackage{microtype}      % microtypography
\usepackage{lipsum}
\usepackage{graphicx}
\graphicspath{/images} % Path to Image files 
\usepackage{caption} % To use caption with figure and images
\usepackage{subcaption}
\usepackage{adjustbox}
\usepackage{tabularx}
\usepackage{makecell}
\usepackage{subcaption} %% provides subfigure
\usepackage{xurl}
\usepackage{nicefrac}       % compact symbols for 1/2, etc.
\usepackage{algorithm}
\usepackage{algpseudocode}
\usepackage{lscape}
\usepackage{mathtools}
\usepackage{array} % The array environment is used to make a table of information, with column alignment (left, center, or right) and optional vertical lines separating the columns
\usepackage[T1]{fontenc}    % use 8-bit T1
\usepackage{tcolorbox}
\usepackage{lscape}
\usepackage{booktabs}
\usepackage{amsthm}
\usepackage{thmtools}

\usepackage{csquotes}

\newenvironment{proof-sketch}{\begin{proof}[Proof sketch]}{\end{proof}}
\usepackage{soul}
\renewcommand{\hl}[1]{#1}
\usepackage{cleveref}
\usepackage{multirow}
\usepackage[backend=biber,style=ieee,sorting=ynt]{biblatex} % for more plz click https://www.overleaf.com/learn/latex/Biblatex_citation_styles
\graphicspath{ {./images/} }

\title{Supersampling Stable Diffusion and Beyond: A Seamless, Training-Free Approach for Scaling Neural Networks Using Common Interpolation Methods}

\author{
 Md Abu Obaida Zishan \\
 School of Data and Sciences\\
 BRAC University, Dhaka\\
 \texttt{md.abu.obaida.zishan@g.bracu.ac.bd} \\
   \And
 Jannatun Noor \\
  Computing for Sustainability and Social Good (C2SG) Research Group\\
  Department of Computer Science and Engineering\\
  United Internation University, Dhaka\\
  \texttt{jannatun@cse.uiu.ac.bd} \\
  \And
 Annajiat Alim Rasel \\
 School of Data and Sciences\\ 
 BRAC University, Dhaka\\
\texttt{annajiat@gmail.com} \\
}

\begin{document}
\maketitle
\begin{abstract}
Stable Diffusion (SD) has evolved DDPM (Denoising Diffusion Probabilistic Model) based image generation significantly by denoising in latent space instead of feature space. This popularized DDPM-based image generation as the cost and compute barrier was significantly lowered. However, these models could only generate fixed-resolution images according to their training configuration. When we attempt to generate higher resolutions, the resulting images show object duplication artifacts consistently. To solve this problem without finetuning SD models, recent works have tried dilating the convolution kernels of the models and have achieved a great level of success. But dilated kernels are harder to fine-tune due to being zero-gapped. Apart from this, other methods, such as patched diffusion, could not solve the object-duplication problem efficiently. Hence, to overcome the limitations of dilated convolutions, we propose kernel interpolation of SD models for higher-resolution image generation. In this work, we show mathematically that interpolation can correctly scale convolution kernels if multiplied by a constant coefficient and achieve competitive empirical results in generating beyond-training-resolution images with Stable Diffusion using zero training. Furthermore, we demonstrate that our method enables interpolation of deep neural networks to adapt to higher-dimensional training data, with a worst-case performance drop of $2.6\%$ in accuracy and F1-Score relative to the baseline. This shows the applicability of our method to be general, where we interpolate fully-connected layers, going beyond convolution layers. We also discuss how we can reduce the memory footprints of training neural networks, using our method up to at least $4\times$.
\end{abstract}

\keywords{Convolution \and DDPM \and Dilation \and Interpolation  \and Kernel\and Stable Diffusion}

\section{Introduction}
% \linenumbers        % Start numbering
\begin{figure}[!ht]
    \centering
    \includegraphics[width=\linewidth]{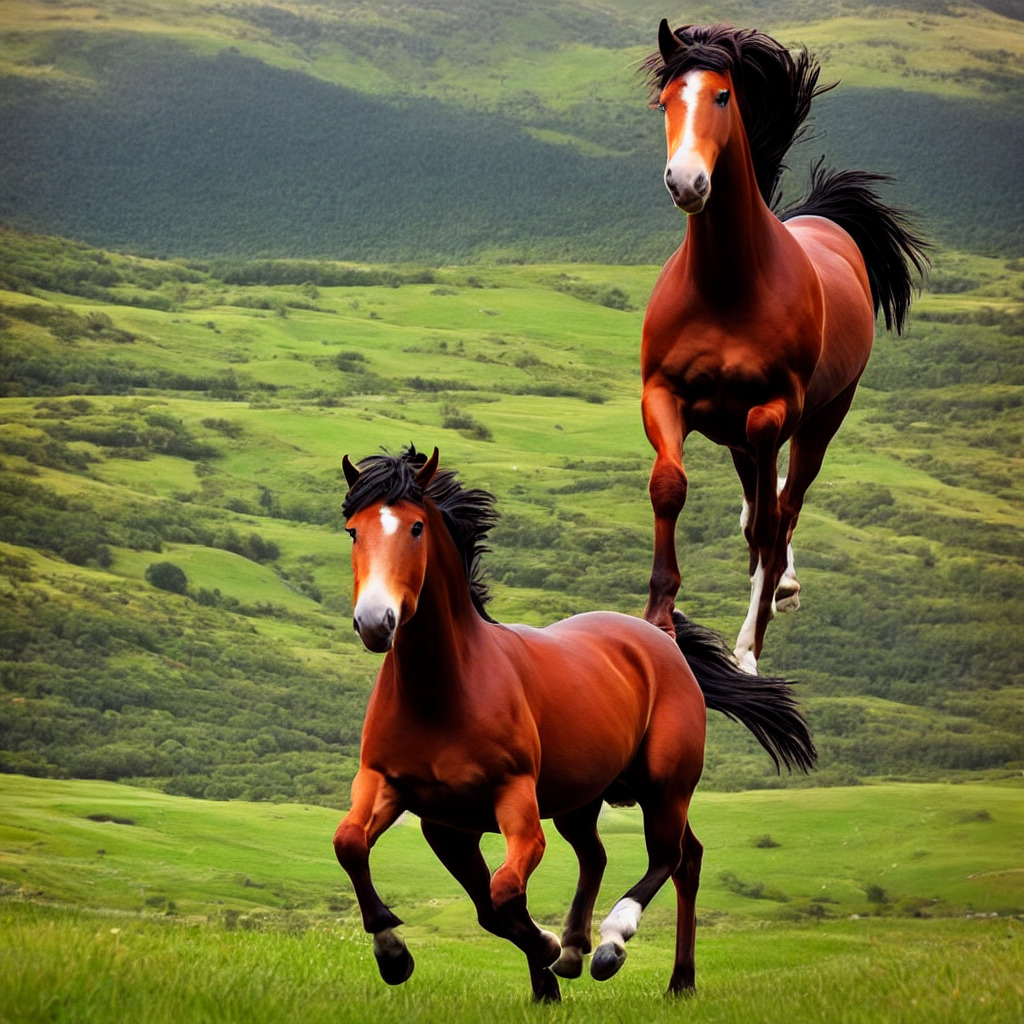}
    \caption{Beyond-training-resolution generation using Stable Diffusion-1.5. A $1024×1024$ image (model’s base/training: $512×512$) produced with the prompt --- \textit{Generate a photorealistic picture of a horse, running on a beautiful valley}. This reveals the object duplication artifact, where two horses are produced instead of one.}
    \label{fig:obj-duplication}
\end{figure}

Recently, text-to-image generative models improved significantly on generating near-realistic images while reducing the effects of hallucination with support for arbitrary resolutions \cite{nanobanana, openai2025gpt4oimage, wu2025qwenimagetechnicalreport, labs2025flux1kontextflowmatching}. However, these models often require billions of parameters, making them infeasible for consumers to generate images from their personal workstations. Again, the dramatic inflation of RAM prices has made buying subscriptions from software-as-a-service (SaaS) or infrastructure-as-a-service (IaaS) providers near-obligatory to run or deploy commercial and open-sourced generative models, respectively \cite{micronexitram, bbc2026articleram}. Additionally, the cost of graphical-processing-units has been inflating for some time already, which was further boosted by the RAM price inflation \cite{gpuramprice}. Thus, the research community was influenced to develop beyond-training-resolution images (for a range of tasks such as text-to-image, image-to-image, etc.) without training or fine-tuning, by modifying the Stable-Diffusion family of models \cite{ldm-main, he2023scalecrafter, fouriscale2024, du2024demofusion, zhang2025hidiffusion, lin2024accdiffusion}.

\subsection{Problem Statement}
The Stable Diffusion family of models is unable to generate beyond training resolution images on their own, as they hallucinate and duplicate desired objects from the prompt, as shown in \Cref{fig:obj-duplication}. To solve this issue through a training-free method, two approaches have become popular, namely, (1) patch-wise diffusion and (2) convolution-kernel dilation. Through the patch-wise diffusion method, the key complication was object-duplication, where even though beyond-training-resolution images were generated, they often consisted of object-duplication artifacts \cite{du2024demofusion, lin2024accdiffusion}.

Again, for the convolution-kernel dilation approach, the key drawback was the lack of fine-tuning capability for higher resolution inputs. This is because the dilated convolution filters are sparse (zero-gapped) and not close to the ideal super-sampled filter. Hence, this may result in a complete rewrite of the convolution-weights, requiring significant training time, which is equivalent to training from scratch.

% the misrepresentation of high-frequency details due to aliasing. Again, \textcite{fouriscale2024} attempted to solve it through low-pass filtering, but this resulted in the total loss of these features. Other works consist of similar issues due to the direct use of dilated-convolution \cite{zhang2025hidiffusion, he2023scalecrafter, fouriscale2024,  zhang2025hidiffusion}. 

\subsection{Contributions}
Thus, in this work, we attempt to solve the drawbacks of previous approaches. To this end, we have the following contributions:
\begin{itemize}
    \item We provide a theoretical basis for dilation being a special case of interpolation.
    \item We successfully generate beyond-training-resolution images by super-sampling kernels of SD-1.5 \cite{stable-diffusion-v1-5}.
    \item We attain competitive results in terms of picture generation quality against the state-of-the-art for $4\times$ base resolution, while being the most fine-tuning friendly.
    \item  We also demonstrate the general applicability of our method by interpolating convolution and fully-connected layers, adapting VGGNet-16, ResNet-18, and ViT-B-16 models for $4\times$ input, with no training and \hl{a worst-case performance loss of $\sim 2.6\%$ in accuracy and F1-Score relative to the baseline.}
\end{itemize}

In the next section, we provide a brief outline of this work.

\subsection{Organization}
In \Cref{sec:litreview}, we review related works briefly while discussing their drawbacks. In \Cref{sec:methodology} we discuss our methodology in detail. Then, in \Cref{sec:experiments} we evaluate our approach empirically against SOTA and discuss our findings in \Cref{sec:discussion}. Finally, in \Cref{sec:conclusion}, we discuss our limitations and provide suggestions for future work.

In the next section, we discuss related literature and its drawbacks.

\section{Literature Review}
\label{sec:litreview}
In this section, we discuss previous works from 2024-2025 on training-free methods for beyond-training-resolution image-generation using pretrained diffusion models. We broadly divide related works based on their approach, where they either utilized (1) patch-wise diffusion or (2) dilated convolution kernels.

\begin{figure}[h]
 \begin{subfigure}[t]{0.45\textwidth}    \centering
    \includegraphics[width=\linewidth]{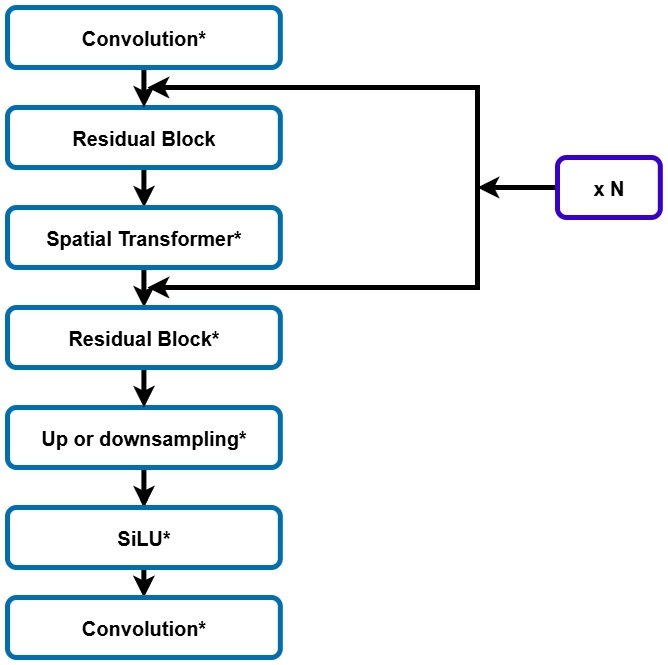}
    \caption{The UNet block consisting of a few sub-blocks}
    \label{fig:unet_block}        
\end{subfigure}
 \begin{subfigure}[t]{0.45\textwidth}   
 \centering
    \includegraphics[width=\linewidth]{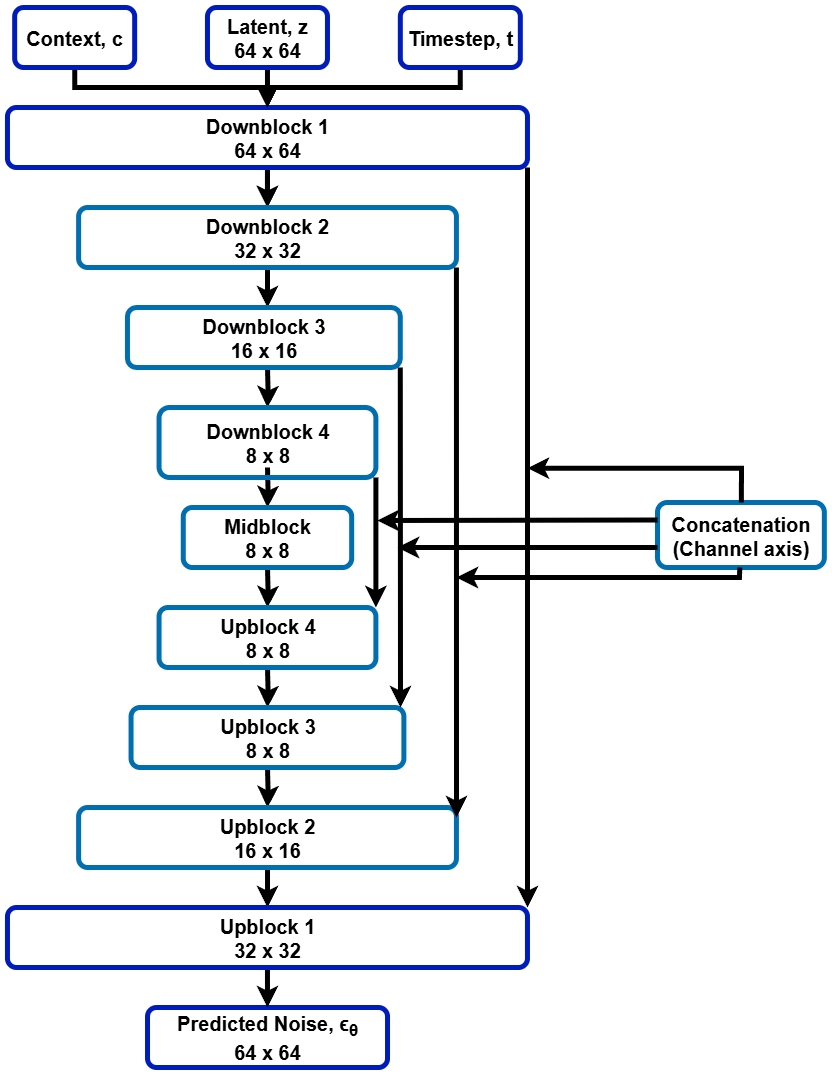}
    \caption{The UNet architecture}
    \label{fig:unet_architecture}
\end{subfigure}
\caption{UNet architecture of SD-1.5 for $512 \times 512$ image generation \cite{stable-diffusion-v1-5, ldm-main} (the latent space is $8\times$ downsampled by the first-stage model). Figure \ref{fig:unet_block} shows the structure of each UNet block in Figure \ref{fig:unet_architecture}. The dimensions shown below the block names in Figure \ref{fig:unet_architecture} are input height and width dimensions for the respective block (channel dimensions are not shown). In Figure \ref{fig:unet_block}, * denotes that the sub-block may or may not be present depending on the block's position. N is the number of times the set of Residual Block and Spatial Transformer is repeated. The initial convolution in Figure \ref{fig:unet_block} is only present in the first downblock (after input). The final convolution and SiLU activation are present in the last upblock (before output). The middle block consists of a Residual, a Spatial Transformer, and another Residual sub-block without any other sub-blocks marked *.}
\label{fig:unet}
\end{figure}

\begin{figure}
    \centering
    \begin{subfigure}{\textwidth}
        \includegraphics[width=\linewidth]{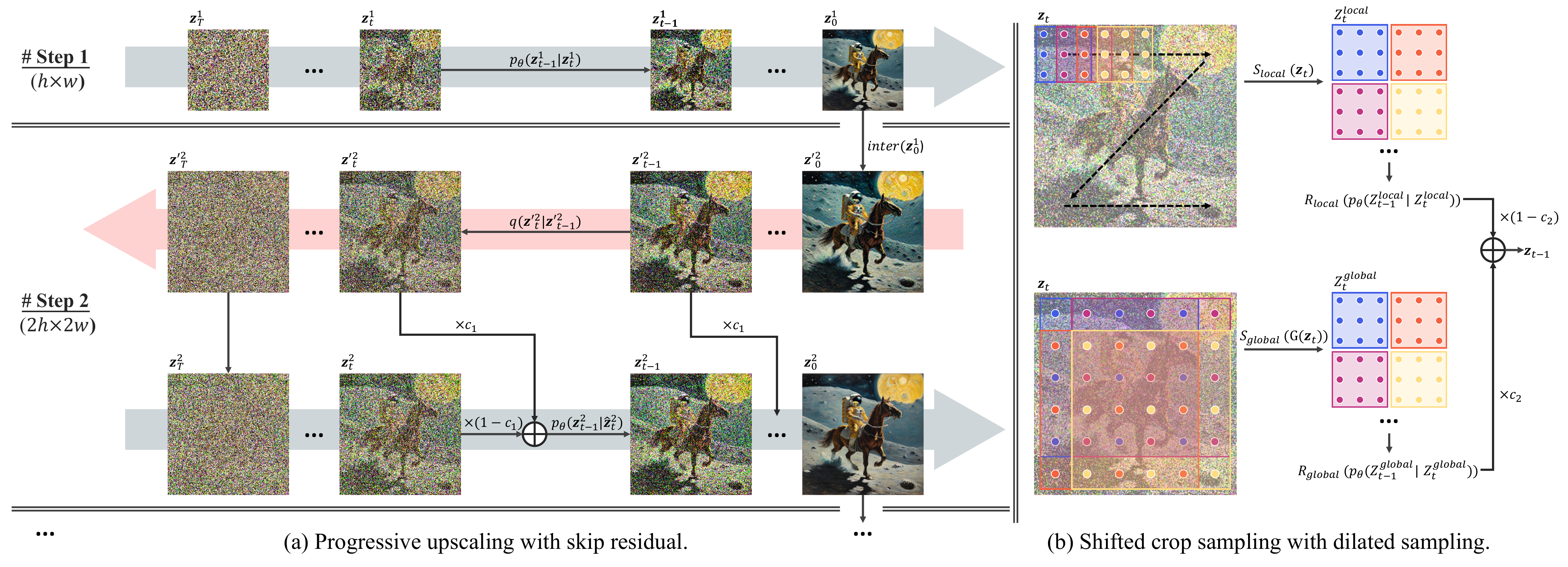}
        \caption{DemoFusion \cite{du2024demofusion}. (a) Shows \textit{upsample-diffuse-denoise} loop with skip-residuals. At first, an image is generated within training resolutions, then passed to the loop (step 2 and onwards), and continues until the desired higher resolution is achieved. (b) Shows working mechanisms of shifted crop and dilated sampling, enabling globally coherent image generation.}
        \label{fig:demofusion}
    \end{subfigure}
    \begin{subfigure}{\textwidth}
    \includegraphics[width=\linewidth]{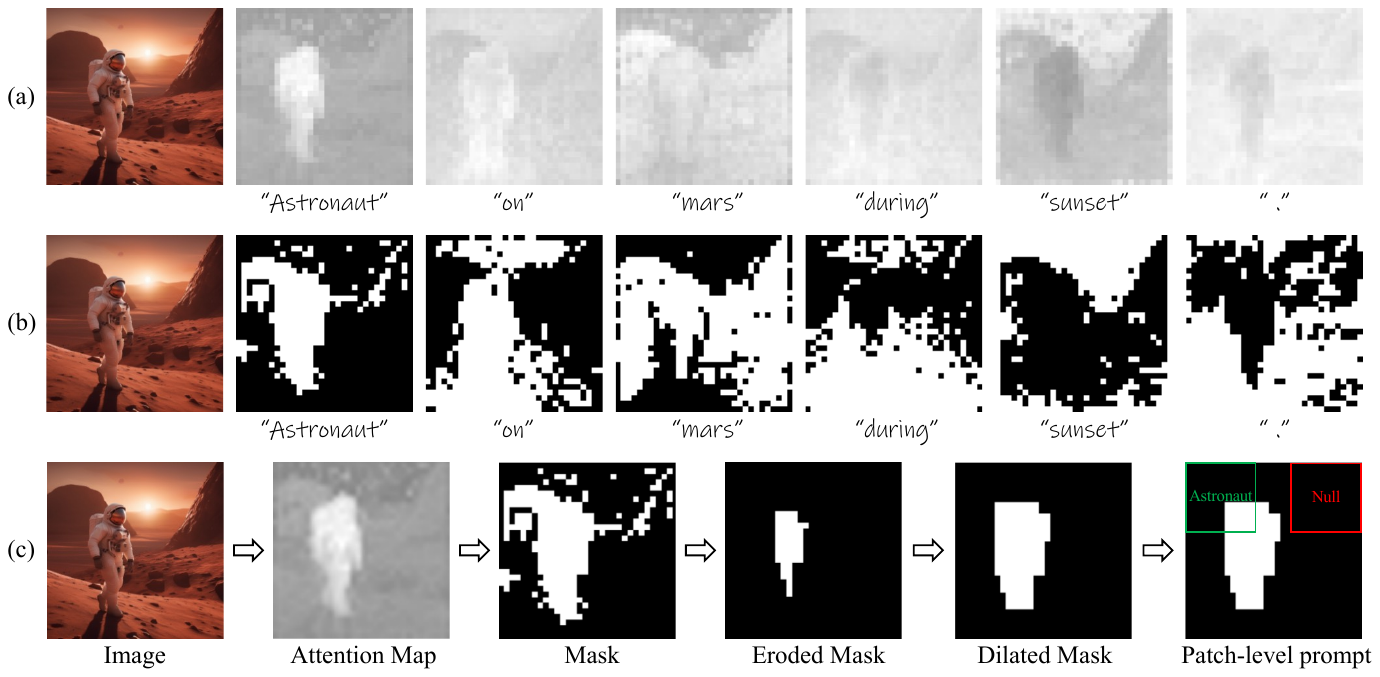}
    \caption{Accdiffusion \cite{lin2024accdiffusion}. We can observe the averaged cross-attention maps (reshaped to 2D) from up and down blocks in the UNet (see Figure \ref{fig:unet})  per token. (a) Default cross-attention map visualization. (b) Most responsive regions per token. (c) Generating a patch-level prompt through morphological operations of the token "Astronaut". All tokens go through the same process.}
    \label{fig:accdiffusion}
    \end{subfigure}
    \caption{Training free methods for patch-wise diffusion for beyond-training-resolution image generation}
\end{figure}

\subsection{Patch-wise Diffusion}
In \cite{du2024demofusion}, the authors developed \textit{DemoFusion} by building upon the patch-wise denoising for MultiDiffusion \cite{bar2023multidiffusion}. The authors introduce three novel procedures, namely, progressive-upscaling, skip-residual, and dilated sampling, in addition to generating the image in overlapping patches. This resulted in the generation of beyond-training-resolution images with more coherent structure compared to previous works. Their progressing-upscaling method works by initially generating the latent representation of an image from a text-prompt through a pretrained SD model. Then, the generated representation is upscaled/upsampled through image-interpolation methods (e.g., bicubic).  After upsampling, the latent representation goes through a diffusion process (noise addition) and subsequent denoising, followed by upsampling for the next phase. This method of upsampling, diffusion, and noise-inversion (denoising) occurs in a loop that the authors dubbed the upsample-diffuse-denoise loop, as shown in Figure \ref{fig:demofusion}. However, the degree of noise addition for the diffusion phase is critical for the denoising phase. When more noise was added, the generated images had undesired repetition of the same subject, demonstrating a hampered global perception. Again, the addition of less noise resulted in grainy image generation. To solve this problem, authors introduced skip-residuals to add a noise-inversed version of the latent vector with a noisy-latent generated in the diffusion phase with cosine-scheduling for $t \in [1, T-1]$, resulting in better image generation. Finally, their shifted dilated-sampling allowed the introduction of global context to all patch-wise denoising paths, making the end result coherent and consistent.

\textcite{lin2024accdiffusion} developed \textit{AccDiffusion}, building on \textit{DemoFusion} \cite{du2024demofusion} for generating images from text-prompts beyond training resolutions through SD with no additional training. One of the major problems encountered while generating images beyond training resolutions was object repetition and was partially addressed by  \textit{DemoFusion} through residual connections and dilated sampling. However, the problem with repeating objects persisted. This inspired the authors to include patch-content aware text-prompts and dilated sampling with window-interactions to address the problem of repeating images with great success. Their method includes generating the image in overlapping patches through the SD model, where each patch falls within its training resolutions. Subsequently,  for each patch, a sub-prompt is generated from the main prompt, only including tokens existing in that patch, as shown in Figure \ref{fig:accdiffusion}.  Furthermore, they enhance the global-structure information of the generated images through dilated sampling with window interaction through a bijective function.

\subsection{Dilating Convolution}
\textit{ScaleCrafter} \cite{he2023scalecrafter} provides a way for generating beyond-training-resolution image resolution through re-dilated and dispersed convolution for SD-UNet text-to-image models. Additionally, it utilized noise-damped classifier-free guidance to improve the denoising capability of the model. For generating images at $4\times$ the training resolution, dilating the convolution kernel is sufficient. However, images consisted of grinding-artifacts for $8\times$ and $16\times$ generated through only re-dilated convolution. To this end, the authors introduced convolution dispersion that essentially finds the kernel, equivalent to the increased perception field, than simple dilation applying both structure and pixel level calibration (shown in Figure \ref{fig:scalecrafter}).  Finally, the authors utilize the noise from the base SD-UNet and re-dilated and/or dispersed-convolution variant for noise-damped classifier-free guidance.

\begin{figure}
    \centering
    \begin{subfigure}{\textwidth}
        \centering
        \includegraphics[width=0.8\linewidth]{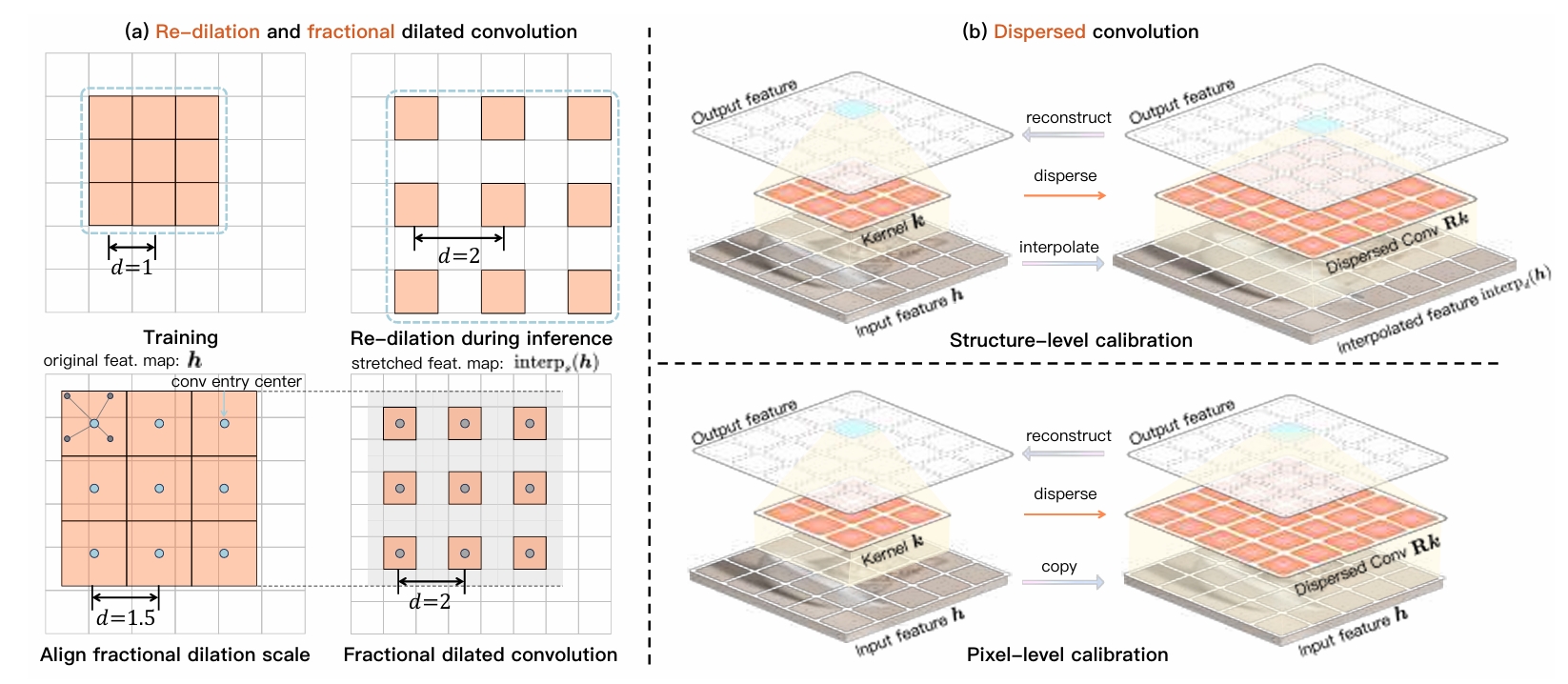}
        \caption{ScaleCrafter \cite{he2023scalecrafter}. (a) Shows redilation and fractional redilation in consecutive rows. (b) Shows dispersed convolution while demonstrating its structure and pixel-level calibration in consecutive rows.}
        \label{fig:scalecrafter}
    \end{subfigure}
    \begin{subfigure}{\textwidth}
    \centering 
    \includegraphics[width=0.8\linewidth]{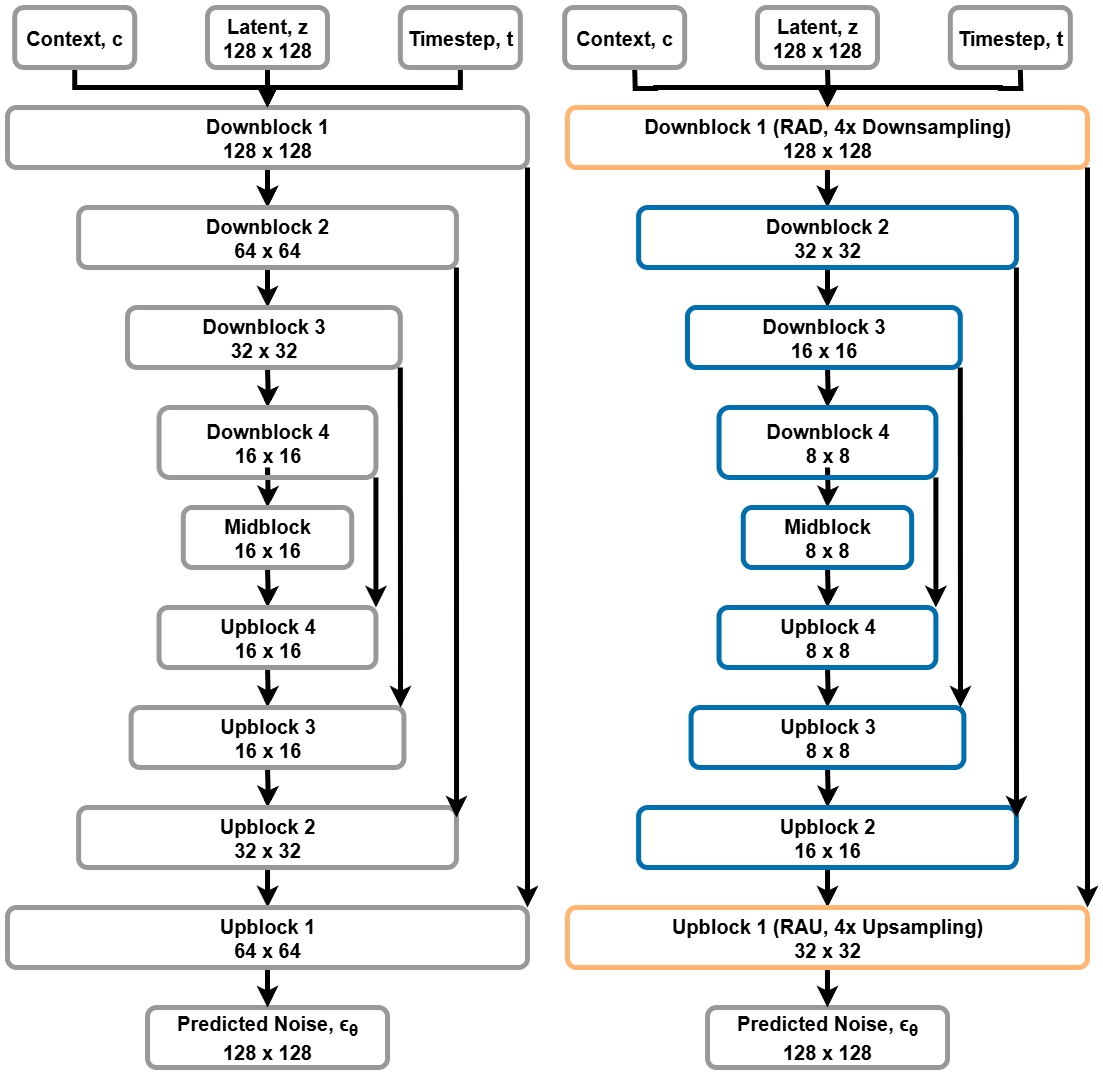}
    \caption{Methodology of HiDiffusion \cite{zhang2025hidiffusion}. Shows the difference between Vanilla SD-UNet (left) and the proposed solution of HiDiffusion (right). Blue denotes different feature-map dimensions and orange denotes  Resolution Aware Downsampler and Upsampler embedding, respectively (RAD \& RAU). Note the dimensional similarity with Figure \ref{fig:unet_architecture} except for the RAD and RAU embedded layers.}
    \label{fig:hidiffusion}
    \end{subfigure}
\end{figure}
\begin{figure}
    \ContinuedFloat
    \begin{subfigure}{\textwidth}
    \includegraphics[width=\linewidth]{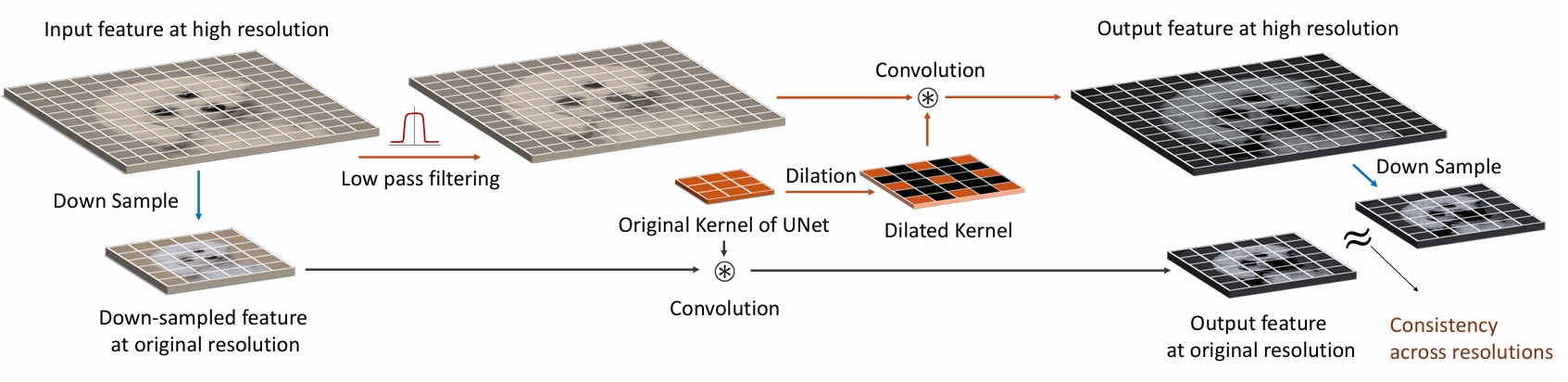}
    \caption{Methodology of FouriScale \cite{fouriscale2024} showing downsampling, low-pass filtering, and dilated convolution.}
    \label{fig:fouriscale}
    \end{subfigure}
    \caption{Training free methods for beyond-training-resolution image generation through dilating convolution}
\end{figure}

Furthermore, \textcite{zhang2025hidiffusion} discovered that object duplication in beyond-training-resolution image-generation is the direct result of feature duplication in the last self-attention layers of the SD-UNet models. To this end, they proposed a tuning-free method, \textit{HiDiffusion}, consisting of a Resolution-Aware-UNet or RAUNet, reaching up to $4096\times4096$ resolution. Here, the RAUNet has two main components, namely, Resolution Aware Downsampler (RAD) and Resolution Aware Upsampler (RAU), depicted in Figure \ref{fig:hidiffusion}.  The RAD and RAU perform $4 \times$ down and upsampling of the latent vector with a Switching Threshold in the input and output blocks, respectively. This is different from $2 \times$ down and upsampling performed in the default SD models, allowing the UNet to perceive the latent vector being the same shape as its training resolution, eventually solving the problem of object repetition for higher resolution latent vectors. The Switching Threshold enables switching to $2\times$ down and upsampling after certain timesteps $T_1$ to prevent degraded image quality. For DDIM scheduling $T_1 \in [20, 40]$ for $50$ timesteps. On the other hand, their Modified Shifted Window Multi-Head Self-Attention (MSW-MSA)  accelerates the image-generation process by modifying the self-attention layers in the top blocks (input and output blocks) to perform windowed self-attention with a window of $\frac{H}{2}\times\frac{W}{2}$ with random strides/shifts based on the current timestep, enabling selection of diverse windows. This modification allows 1.5-6x speed compared with the vanilla LDMs and previous works for high-resolution image generation.

\textcite{fouriscale2024} provides \textit{FouriScale}, a training-free and frequency-analysis-based methodology for generating beyond-training-resolution images through text-to-image stable-diffusion models, and has achieved high-quality image generation without object-repetition for arbitrary resolutions. To achieve this, they have replaced the convolution operator in the SD-UNet with a dilated convolution. Although applying dilated convolution solved the image repetition issue, it caused a frequency domain distribution gap between high and low resolution images due to spatial down-sampling. To solve this, the authors applied low-pass filtering to reduce the distribution gap, consequently improving the image quality. Also, since the authors needed to convert the image to the frequency domain for low-pass filtering, zero-padding was necessary. But image generation for arbitrary dimensions may cause the output image to become grainy/distorted due to different dilation rates for height and width. To solve this, they zero-padded the input image by a factor of $r=max( \lceil \frac{H_{i}}{h_{t}}\rceil, \lceil \frac{W_{i}}{w_{t}}\rceil)$ which is the maximum ratio of the intended and training height or width. This was followed by the aforementioned low-pass filtering and subsequent cropping, which the authors called the padding-then-crop strategy, as shown in Figure \ref{fig:fouriscale}. Consequently, the image quality was improved for arbitrary resolutions while reducing computations. Additionally, the authors modified context-free-guidance (CFG) by utilizing a UNet with a milder low-pass filter whose attention maps were replaced by the UNet performing conditional estimation with the usual low-pass filters. This, along with the unconditional UNet estimation, provided better results than vanilla-CFG. Finally, they used annealing to decrease $r$ to 1, removing dilation, as in the later timesteps of denoising, the UNet simply focuses on improving finer-grained details than developing object structure. 

\Cref{tab:eval} summarizes the training-free approaches of the discussed literature to generate beyond-training-resolution images using Stable Diffusion.

% \begin{landscape}
\begin{table}
    \centering
  \begin{tabularx}{\textwidth}{|c|X|}
    \toprule
        \textbf{Literature} & \textbf{Approach} \\
    \midrule    
        DemoFusion \cite{du2024demofusion} & Patch-wise denoising\\
        
        AccDiffusion \cite{lin2024accdiffusion} & Cross-attention, patchwise denoising\\
        
        ScaleCrafter \cite{he2023scalecrafter} & Dilated and dispersed convolution\\
        
        HiDiffusion \cite{zhang2025hidiffusion} & Dilated and downsampling convolution\\
        
        FouriScale \cite{fouriscale2024} & Dilated convolution with low-pass filtering\\
    \bottomrule
    \end{tabularx}
    \caption{A summary of training-free approaches generating beyond-training-resolution images using stable-diffusion.}
    \label{tab:eval}
\end{table}
% \end{landscape}

% \section{Requirements, Impacts and Constraints
% }

% \input{chapters/chapter_3.tex}

\section{Proposed Methodology}
\label{sec:methodology}

\begin{figure}
\centering
    \begin{subfigure}{0.6\textwidth}
        \includegraphics[width=\linewidth]{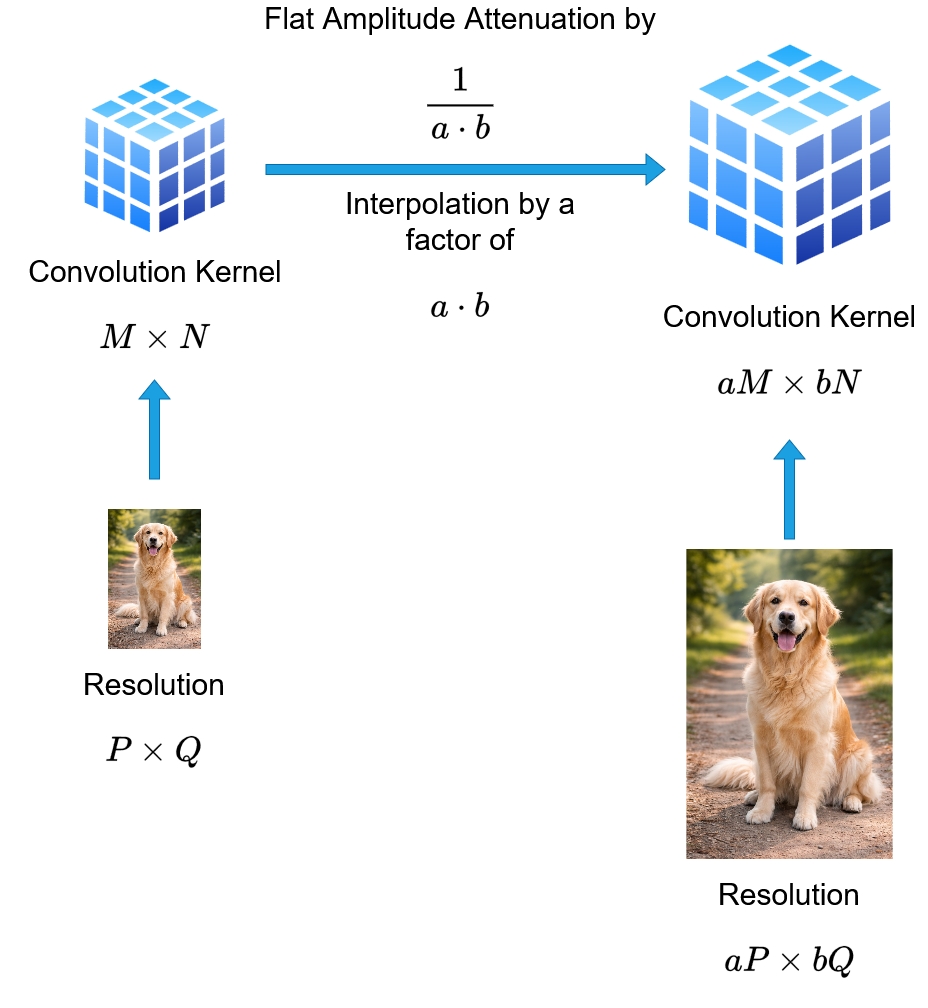}
        \caption{Beyond-training-resolution image generation in Stable Diffusion models by convolution-kernel interpolation}
        \label{fig:framework-sd}
    \end{subfigure}
    \begin{subfigure}{0.9\textwidth}
        \centering
        \includegraphics[width=\linewidth]{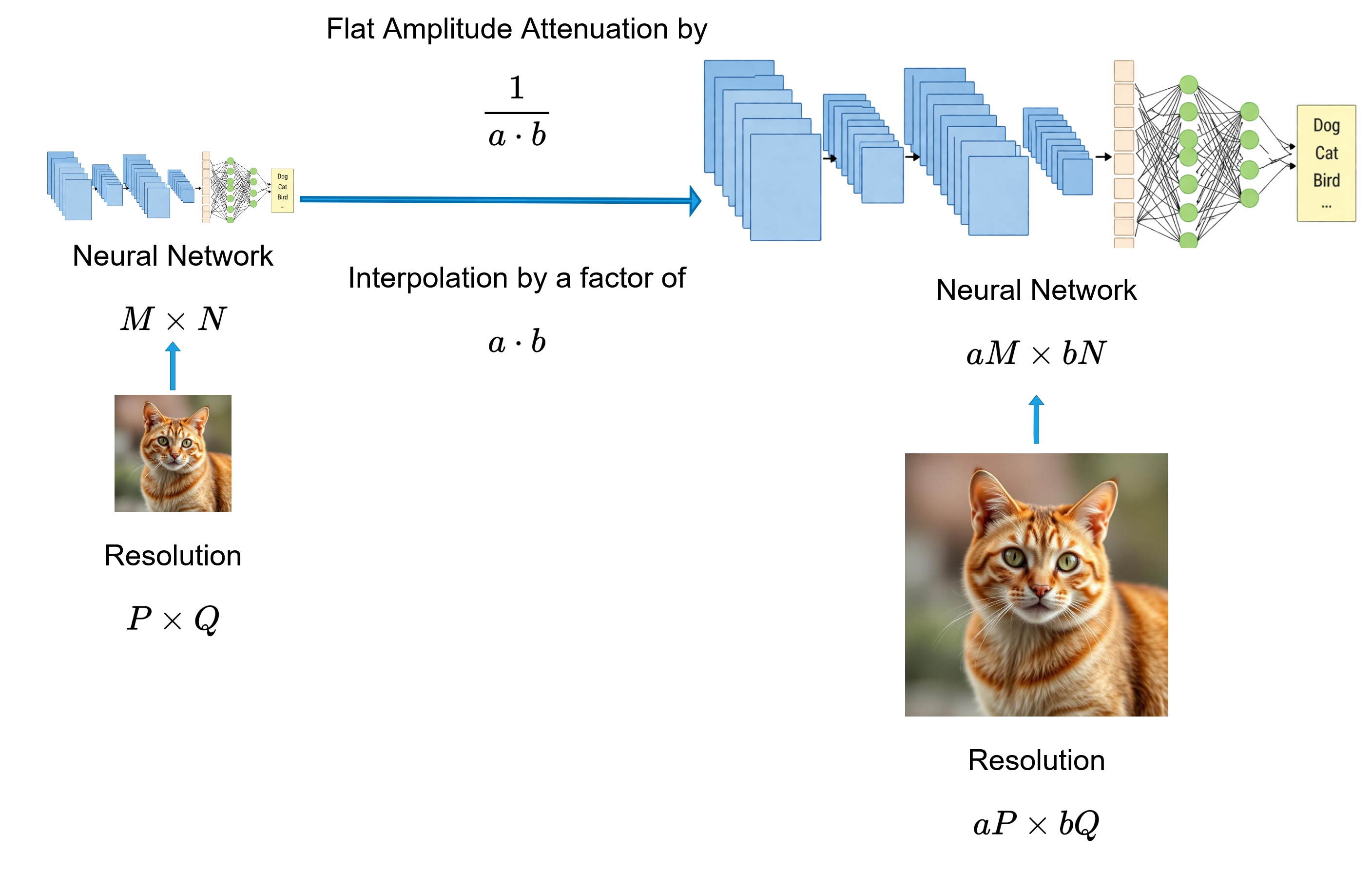}
        \caption{Supersampling of deep neural networks for accepting beyond-training-resolution inputs}
        \label{fig:framework-nn}
    \end{subfigure}
    \caption{\hl{An overview of the interpolation and attenuation framework presented in this work.}}
    \label{fig:framework}
\end{figure}

In our work, we propose the interpolation of convolution layers (instead of dilation \cite{smootheddilationwang}) of the Stable Diffusion models for training-free, beyond-training-resolution image generation. Our approach is backed by both theoretical and empirical analysis, where we use \textit{bicubic} and \textit{bilinear} methods for interpolation \cite{zhu2022efficientbicubic}. To the theoretical end, we first show that vanilla interpolation of convolution kernels does not work as the base kernel receives an amplitude gain by the factor of interpolation in the frequency domain in \Cref{th:ratio}.

\begin{restatable}{theorem}{ratiotheorem}
\label{th:ratio}
The ratio of amplitudes of a three-dimensional discrete cosine wave and its supersampled counterpart is proportional to the ratio of their resolutions.
\end{restatable}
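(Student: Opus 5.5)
We model a convolution kernel as a samples of a continuous three-dimensional cosine wave. Let $f(x,y,z)=A\cos(2\pi(ux+vy+wz))$ and sample it on a grid of resolution $N_1\times N_2\times N_3$ over a fixed physical domain, giving the base kernel $k[n_1,n_2,n_3]$. The supersampled counterpart $\tilde k$ samples the same function on a grid of resolution $sN_1\times sN_2\times sN_3$ over the same domain; this is exactly what ideal interpolation (bilinear or bicubic being approximations) produces. We then compare the amplitudes of the two in the frequency domain via the three-dimensional DFT, and show that the amplitude of $\tilde k$ at the cosine's frequency equals that of $k$ times the resolution ratio $s^3$ (or $\prod_i s_i$ for anisotropic scaling). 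This is the claimed proportionality.

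First, choose the frequencies $(u,v,w)$ to be integers, that is, an integer number of cycles over the domain, so that no leakage occurs. Write the cosine as $\tfrac{A}{2}(e^{i\theta}+e^{-i\theta})$ with $\theta=2\pi(un_1/N_1+vn_2/N_2+wn_3/N_3)$. Next, compute the unnormalized DFT
\[
K[m_1,m_2,m_3]=\sum_{n}k[n]\,e^{-2\pi i(m_1n_1/N_1+m_2n_2/N_2+m_3n_3/N_3)}.
\]
By orthogonality of complex exponentials, each separable geometric sum over $n_j$ equals $N_j$ when $m_j\equiv \pm u_j \pmod{N_j}$ and $0$ otherwise. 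Hence $|K|$ at the peak equals $\tfrac{A}{2}N_1N_2N_3$. Repeating this for $\tilde k$, the sampled exponent $u\,\tilde n_1/(sN_1)$ describes the same continuous frequency, so the peak lands at the same index $(u,v,w)$ and has magnitude $\tfrac{A}{2}s^3N_1N_2N_3$. The ratio of peak amplitudes is therefore $(sN_1 sN_2 sN_3)/(N_1N_2N_3)$, the ratio of the resolutions. Finally, by linearity, an arbitrary kernel is a sum of such cosines (its DCT/DFT expansion), so each component is amplified by the same factor. This motivates dividing the interpolated kernel by the constant $s^3$ (or $s^2$ for the spatial-only 2D case in convolutions) to restore the original response.

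The main obstacle is the modelling step, not the calculation. The statement is only meaningful under the convention that amplitudes are measured by the unnormalized DFT, and that supersampling preserves the continuous frequency while increasing the sample count. I would state these conventions explicitly, and note that with a $1/N$-normalized transform the ratio would become $1$. I would also handle the Nyquist condition, $|u_j|<N_j/2$, so that the base kernel does not alias and the peak indices coincide. If the frequencies are non-integer, leakage spreads the energy across bins. In that case I would instead compare total energy via Parseval, where the ratio of the sums of squared amplitudes is again $\prod_j s_j$ up to the same normalization, and I would argue proportionality in that aggregated sense.
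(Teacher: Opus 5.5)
Your proposal is correct and takes essentially the same route as the paper. Both split the cosine into two complex exponentials, evaluate the unnormalized DFT at the matching bin by orthogonality to get $\frac{A}{2}$ times the number of samples at each resolution, and divide. Your anisotropic version with $s_1=a$, $s_2=b$, $s_3=1$ is exactly the paper's setting, where the channel dimension is not supersampled and $K=1$.

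One small slip is in your optional Parseval remark for non-integer frequencies. The ratio of the summed squared DFT magnitudes is $(\prod_j s_j)^2$, so it is the root-energy, not the energy, that scales by $\prod_j s_j$.
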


\begin{proof-sketch}
Let \(u\), \(v\), \&, \(w\) be three frequencies of a discrete cosine function with a phase \(\phi\) and resolution $M \times N \times C$ with amplitude \(A\). Where $M,N,\& \ C$ are height, width, and channel dimensions, respectively. We prove that the ratio of amplitudes of the cosine function and its supersampled counterpart is proportional to the ratio of their resolutions. Let the proportionality constant be $K$.  The Discrete Fourier Transform (DFT) of the cosine function for its matching frequencies $(u, v, w)$ is as follows:
\begin{equation}
\label{eqn:MN}
\boxed{
DFT_{M\times N}\{A cos(2\pi\frac{u}{M}m+2\pi\frac{v}{N}n+2\pi\frac{w}{C}c+\phi)\} = \frac{AMNC\cdot\exp(j\phi)}{2}}
\end{equation}

After supersampling the cosine wave to $aM\times bN \times C$ the DFT becomes ---
\begin{equation}
\label{eqn:aMbN}
\boxed{
DFT_{aM\times bN}\{A cos(2\pi\frac{u}{aM}m+2\pi\frac{v}{bN}n+2\pi\frac{w}{C}c+\phi)\} = \frac{AabMNC\cdot\exp(j\phi)}{2}}
\end{equation}

Now we find the ratio of amplitudes, $ratio_{a}$, by dividing \Cref{eqn:aMbN} by \Cref{eqn:MN} ---

\begin{equation}
\label{eq:ratio-dft}
\boxed{ratio_{a}=a \cdot b}
\end{equation}

Subsequently, we find the ratio of resolutions, $ratio_r$ ---

\begin{equation}
\label{eq:ratio-res}
\boxed{ratio_{r}= a\cdot b}
\end{equation}

Finally, we observe that Equations \ref{eq:ratio-dft} and \ref{eq:ratio-res} are proportional:
\begin{equation}
\label{eq:prop-const}
\boxed{K=\frac{ratio_{a}}{ratio_{r}}=\frac{a\cdot b}{a \cdot b} = 1}
\end{equation}
Where the proportionality constant, $\boxed{K=1}$.
\end{proof-sketch}

Due to the amplitude-gain, the convolution of the input and the kernel becomes erroneous, as the bias remains unchanged (as we show in \Cref{th:conv-interp}). Dilation solves this problem by naturally attenuating the amplitude, while zero-gapping the weight kernel by the factor of dilation. This also proves that interpolation or supersampling is a generalization of dilation as shown in \Cref{th:dilation}.

\begin{restatable}{corollary}{dilationcorollary}
\label{th:dilation}
Dilation is a specialized interpolation that supersamples a three-dimensional discrete cosine wave while attenuating its amplitude by the factor of its super-resolution.
\end{restatable}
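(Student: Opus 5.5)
We treat the corollary as a direct consequence of \Cref{th:ratio}. The plan is to write out the DFT of a dilated kernel and compare it with the DFT of the supersampled kernel computed in the proof sketch of \Cref{th:ratio}. The comparison is restricted to the matching frequency bin $(u,v,w)$, since that is where all the energy of the cosine sits.

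Fix a discrete cosine $x[m,n,c]=A\cos(2\pi\frac{u}{M}m+2\pi\frac{v}{N}n+2\pi\frac{w}{C}c+\phi)$ on an $M\times N\times C$ grid. Define its dilation by integer factors $(a,b)$ as the $aM\times bN\times C$ array
\[
d[m,n,c]=x[m/a,n/b,c]\ \text{if } a\mid m,\ b\mid n,\ \text{and } 0 \text{ otherwise}.
\]
This is the zero-gapped kernel. Its DFT at $(u,v,w)$ on the larger grid is
\[
\sum_{m'=0}^{M-1}\sum_{n'=0}^{N-1}\sum_{c} x[m',n',c]\,e^{-j2\pi(\frac{u}{aM}am'+\frac{v}{bN}bn'+\frac{w}{C}c)}.
\]
The only nonzero terms are those with $m=am'$ and $n=bn'$. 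The exponent then collapses to the original-grid exponent, so the sum reproduces the value $\frac{AMNC\,e^{j\phi}}{2}$ of \Cref{eqn:MN}.

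Next, recall from \Cref{th:ratio} that an ideal supersampling of the same wave to $aM\times bN\times C$ has DFT value $\frac{AabMNC\,e^{j\phi}}{2}$ (\Cref{eqn:aMbN}). Define the attenuated supersampled wave as $\frac{1}{ab}$ times the supersampled wave. Its DFT at $(u,v,w)$ equals exactly the dilated value above. So, at the matching frequency, dilation coincides with supersampling followed by attenuation by the super-resolution factor $ab$, which is the ratio of resolutions $ratio_r$ from \Cref{eq:ratio-res}. Equivalently, dilation places the wave on the $aM\times bN$ grid with the amplitude it would have on the $M\times N$ grid, and the zero insertions act as the interpolation rule.

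The main obstacle is making ``dilation is an interpolation'' precise, since zero insertion does not reproduce the cosine at the new samples. Zero insertion also creates spectral images at the aliased bins $(u+kM,\,v+lN)$. My first attempt would be to argue that interpolation is any operator that agrees with the original samples on the coarse sublattice, which zero-insertion satisfies. I would then restrict the amplitude comparison to the baseband bin $(u,v,w)$, where the corollary's claim lives, noting that the images can be removed by an ideal low-pass (sinc) interpolation. If that is too loose, the fallback is to state the result spectrally: the baseband component of the dilated kernel equals $\frac{1}{ab}$ times that of the supersampled kernel. That is exactly the attenuation the corollary asserts.
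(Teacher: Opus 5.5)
Your proposal is correct and follows essentially the paper's route. The paper takes the DFT of the supersampled cosine on the $aM\times bN\times C$ grid, writes $m=aq_a+r_a$ and $n=bq_b+r_b$, discards every term with $r_a\ge 1$ or $r_b\ge 1$ (the zero-gapped samples), and collapses the remaining sums to $\frac{AMNC\,e^{j\phi}}{2}$, exactly as your sublattice re-indexing does; it then divides by \Cref{eqn:aMbN} to get the attenuation factor $\frac{1}{ab}$, and your added discussion of spectral images and of what ``interpolation'' means goes beyond the paper, which simply takes this single-bin $(u,v,w)$ comparison as the content of the corollary.
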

\begin{proof-sketch}
   Based on Theorem \ref{th:ratio}, we can show that, due to dilation, the amplitude of a 3D cosine wave is attenuated by the factor of its super-resolution. We continue from the Euler term with positive power to find the amplitude of the discrete cosine wave of resolution $aM\times bM\times C$, as the term with negative power will give zero from Proof \ref{pf:ratio}. We also keep the channel dimension as is, since it is not dilated:
\[
= \frac{A}{2}\sum_{m=0}^{aM-1}\exp(j2\pi\frac{u}{aM}m+\phi) \cdot \exp(-j2\pi\frac{u}{aM}m) \sum_{n=0}^{bN-1}\exp(j2\pi\frac{v}{bN}n) \cdot \exp(-j2\pi\frac{v}{bN}n)
\]
\[
\cdot\sum_{c=0}^{C-1}\exp(j2\pi\frac{w}{C}c)\cdot\exp(-j2\pi\frac{w}{C}c) 
\]
Here, we take $m=q_a+r_a$ and $n=q_b+r_b$ where $q_a \in [0, M-1], q_b \in [0, N-1], r_a \in [0, a-1], r_b \in [0, b-1]$. Then we omit all the frequencies where $m \  \%  \ a, n \ \% \ b > 0$, since only frequencies that are multiples of $a, b$ are kept during dilation per their respective dimensions.

\[
 = \frac{A}{2}\sum_{q_a=0}^{M-1}\exp(j2\pi\frac{u}{M}q_a+\phi)\cdot\exp(-j2\pi\frac{u}{M}q_a)\cdot\sum_{q_b=0}^{N-1}\exp(j2\pi\frac{v}{N}q_b)\cdot\exp(-j2\pi\frac{v}{N}q_b) 
\]
\[
\cdot\sum_{c=0}^{C-1}\exp(j2\pi\frac{w}{C}c)\cdot\exp(-j2\pi\frac{w}{C}c) 
\]
\[
= \boxed{
 \frac{AMNC\cdot\exp(j\phi)}{2}
}
\]

Thus, we observe the amplitude of $aM\times bM \times C$ discrete cosine wave is attenuated by $a\cdot b$ as we divide the DFT of the dilated discrete cosine wave by that of the supersampled wave (Equation \ref{eqn:aMbN}):
\[ \boxed{
    Attenuation \  Factor = \frac{ \frac{AMNC\cdot\exp(j\phi)}{2}}{ \frac{AabMNC\cdot\exp(j\phi)}{2}} = \frac{1}{a\cdot b}
}
\]
\end{proof-sketch}

However, dilation eventually introduces aliasing and demands low-pass filtering to curb this effect. Hence, we attenuate the amplitude of the interpolated kernels by the factor of interpolation, to preserve the activation consistency of the convolution layers, as shown in \Cref{th:conv-interp}.

\begin{restatable}{theorem}{convinterp}
\label{th:conv-interp}
Three-dimensional convolution renders the same output after scaled interpolation, given a supersampled input. 
\end{restatable}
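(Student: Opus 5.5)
We model the three-dimensional convolution as a multichannel correlation. Let $x$ be an input of spatial size $H\times W$ with $C$ channels, and let $k$ be a kernel of size $M\times N\times C$ with bias $\beta$:
\[
y[p,q]=\sum_{m,n,c} k[m,n,c]\,x[p+m,q+n,c]+\beta .
\]
The supersampled input $\tilde x$ has size $aH\times bW$, and the kernel is supersampled to $\tilde k$ of size $aM\times bN\times C$. The channel dimension is left untouched, exactly as in \Cref{th:ratio} and \Cref{th:dilation}. The scaled interpolation is $\hat k=\frac{1}{ab}\tilde k$, with the bias unchanged. The claim to prove is that $\hat y[ap,bq]=y[p,q]$ at corresponding locations. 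Where interpolation errors are unavoidable, the claim holds exactly for band-limited signals; this is the same regime, sums of discrete cosines, in which \Cref{th:ratio} is stated.

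The cleanest route goes through the DFT, reusing \Cref{th:ratio}. First, expand $x$ and $k$ into sums of discrete cosine components, and write the convolution sum per frequency $(u,v,w)$. The correlation's value at a point is an inner product, so Parseval turns it into $\frac{1}{MNC}\sum \hat K(u,v,w)\overline{\hat X(u,v,w)}$ over the window. Second, apply \Cref{th:ratio} to each component after supersampling. Each kernel coefficient at a matching frequency is multiplied by $ab$, and so is each coefficient of the input window, which is also supersampled by $(a,b)$. The Parseval normalisation, however, changes only from $MNC$ to $abMNC$. The net effect is that the interpolated, unscaled response equals $ab\cdot(\text{original sum})$. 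Equivalently, and more concretely in the spatial domain, the sum over the $ab$-times larger support behaves like a Riemann sum with $ab$ samples per original cell.

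Third, multiply by the scaling factor $\frac{1}{ab}$. This cancels the gain, so the weighted sum reproduces $\sum k\,x$, and adding the unchanged $\beta$ gives $y[p,q]$. As a consistency check, I will take the sampling to be pure dilation with scale $1$. Then only the $MN$ samples where $m\,\%\,a=n\,\%\,b=0$ contribute, and the output is identical to the original. This matches the attenuation factor $1/(ab)$ of \Cref{th:dilation}, which is applied implicitly there by the zero gaps.

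The main obstacle is the exactness of the second step. Interpolation such as bilinear or bicubic is only approximately a band-limited resampling. The product of two supersampled sequences then picks up cross terms: the negative-frequency Euler terms and mismatched frequencies $(u,v,w)\neq(u',v',w')$. These must vanish by the orthogonality already used in the proof of \Cref{th:ratio}. I would handle this by restricting to ideal (band-limited, trigonometric) interpolation, where orthogonality of the complex exponentials on the $aM\times bN$ grid kills all cross terms exactly. Bilinear or bicubic interpolation would then be treated as an approximation to that case, with an error that vanishes for low-frequency content.
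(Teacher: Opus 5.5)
Your argument is sound at the same level of idealization as the paper's, but it takes a different route. The paper never opens up the sum. It posits that the convolution \emph{output} $f$ is itself a cosine series on the $M\times N\times C$ grid, with the input taken to have the kernel's dimensions. It applies \Cref{th:ratio} once to that output, so the output's DFT gains $ab$ while $b_c$ does not, divides the Fourier-domain term by $ab$, and inverts the DFT. It then rewrites the result as the scaled sum of $I_s\theta_s$ by reusing its assumed identity between the convolution sum and the cosine series, and concludes from $F_s=F$. You instead keep the correlation as an inner product of the kernel with an input window and pass to the frequency domain by Parseval. You apply \Cref{th:ratio} separately to the kernel and to the window, so the gain $(ab)^2$ of the two spectra is offset by the normalisation changing from $MNC$ to $abMNC$. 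That leaves a single factor $ab$, which the $1/(ab)$ scaling removes; your Riemann-sum reading and the dilation check against \Cref{th:dilation} corroborate this. The paper's route is a one-shot reuse of \Cref{th:ratio}, but it gets the factor $ab$ by supersampling the output directly. The link between supersampling the kernel and input and the resulting gain in the output is therefore asserted rather than derived. Your route derives that link, pins the claim down to $\hat y[ap,bq]=y[p,q]$ at corresponding locations, and separates the exact case (ideal trigonometric interpolation) from the approximate one (bilinear, bicubic), which the paper does not. One hypothesis you should state explicitly: orthogonality on the $aM\times bN$ grid kills the cross terms only if the $aM\times bN$ window of the supersampled input equals the trigonometric supersampling of the original $M\times N$ window. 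That is, the input restricted to each window must be a sum of cosines with frequencies on that window's DFT grid. The paper builds this in by giving the input the kernel's size. If instead your $H\times W$ input is band-limited-interpolated as a whole, its exponentials live on the $aH\times bW$ grid and are not orthogonal over a window. In that case you only get the Riemann-sum approximation, not exact equality.
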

\begin{proof-sketch}
Let a discrete convolution operation \(f(m, n, c)\) consist of $C$ output channels. Then, the height, width, and input channel dimensions for each kernel \(\theta(c, m, n, c_{in})\) per output channel will be $M, \ N, \ \& \ C_{in}$ respectively, with a distinct bias $b_c$, and an input \(I(m, n, c_{in})\) with the same dimensions \cite{digitalimageprocessing}. Hence, \(f(m, n, c)\) can be expressed as:

\begin{equation}
\label{eq:conv-discrete}
    f(m, n, c) = \sum_{x=0}^{M-1}\sum_{y=0}^{N-1}\sum_{c_{in}=0}^{C_{in}-1}  I(x, y, c_{in})  \cdot \theta(c, m-x, n-y, c_{in}) + b_c
\end{equation}

For the interpolated kernel (and input) of $aM\times bN\times C$ we scale them by $\frac{1}{a\cdot b}$:

\begin{equation}
\label{eq:scaled-conv}
\boxed{
f_s(m, n, c) = \frac{1}{a\cdot b} \sum_{x=0}^{aM-1}\sum_{y=0}^{bN-1}\sum_{c_{in}=0}^{C_{in}-1}  I_s(x, y, c_{in})  \cdot \theta_s(c, m-x, n-y, c_{in}) + b_c
}
\end{equation}

The bias $b_c$, per output channel, will remain unchanged as \Cref{eq:conv-discrete}. Hence, we can say -
\begin{equation}
\label{eq:scaled-conv-equality}
\boxed{
f_s(m, n, c) = f(m, n, c)
}
\end{equation}
Hence, Equation \ref{eq:scaled-conv-equality} can be applied to interpolate convolution layers. 
\end{proof-sketch}

Thus, in this work, we interpolate convolution layers following \Cref{eq:scaled-conv}. Furthermore, this can also be applied to interpolate fully-connected layers along with convolution, allowing the interpolation of entire deep neural networks (DNNs) for adapting beyond-training-resolution inputs without training. We demonstrate this in \Cref{sec:dnn-interp} by interpolating VGG-16, ResNet-18, and ViT-b-16. \hl{This interpolation and attenuation framework can be visualized in} \Cref{fig:framework}.

The complete proofs of the theorems and the corollary are provided in \Cref{sec:theorems-and-proofs}. In the next section, we provide arguments supporting interpolation over dilation. 
\subsection{Interpolation vs Dilation}
Even though dilation and scaled-interpolation of convolution-kernels are equivalent in terms of amplitude, as shown in \Cref{th:dilation,th:conv-interp}, dilation causes aliasing, which we discuss in this section.

\textit{Aliasing.} A significant reason to not opt for dilation is aliasing \cite{welaratna2002effects, fouriscale2024}. The aliasing phenomenon can be observed through \Cref{eq:base-sampled-transform,eq:subsampled-transform} \cite{digitalimageprocessing}. Where \Cref{eq:base-sampled-transform} represents the Fourier transform of a single-dimensional discrete signal, $\Tilde{F}(\mu)$ and $\Tilde{F_s}(\mu)$ is it's subsampled (by a factor of $\frac{1}{a}$) counterpart in \Cref{eq:subsampled-transform}. We can observe that $\Tilde{F_s}(\mu)$ will repeat $a$ times more than $\Tilde{F}(\mu)$ for any frequency $\mu$. Again, \Cref{th:dilation} shows that dilation introduces subsampling. Thus, we can conclude that dilation will induce aliasing.

\begin{equation}
\label{eq:base-sampled-transform}
    \Tilde{F}(\mu) = \frac{1}{\Delta T} \sum_{n=-\infty}^{\infty} G (\mu - \frac{n}{\Delta T}) 
\end{equation}

\begin{equation}
\label{eq:subsampled-transform}
    \Tilde{F_s}(\mu) = \frac{1}{a\Delta T} \sum_{n=-\infty}^{\infty} G (\mu - \frac{n}{a\Delta T}) 
\end{equation}

Because of aliasing, the lower and higher frequencies repeat across the spectrum. To mitigate this effect, low-pass filtering becomes essential, at the cost of attenuating high-frequency components. This may cause an image to lose important details, hindering quality and photorealistic characteristics. We can observe this property in \Cref{fig:dilated}, which shows the log-one-plus DFT spectrum of a dilated convolution kernel. By comparing this with other spectra consisting of bicubic, bilinear, and nearest-neighbor interpolation in \Cref{fig:bicubic,fig:bilinear,fig:nearest} respectively, we can be certain of the aliasing effect. The base kernel in \Cref{fig:base} was plotted from SD-1.5 with module name, \verb|down_blocks.2.resnets.0.conv1|.

\begin{figure}[h!]
    \centering 
    \begin{subfigure}{\textwidth}
        \centering        \includegraphics[width=0.8\linewidth]{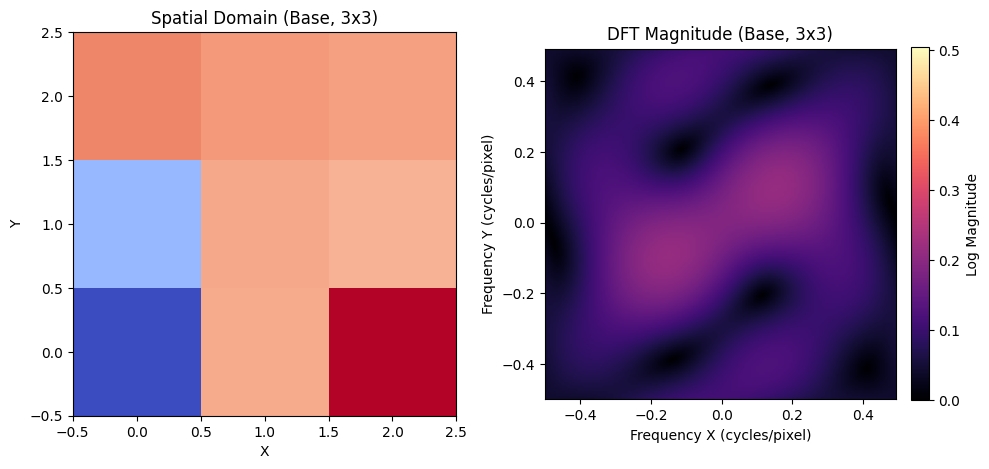}
        \caption{Base $3 \times 3$ kernel}
        \label{fig:base}
    \end{subfigure} 
\end{figure}
\begin{figure}
\centering
\ContinuedFloat
    \begin{subfigure}{\textwidth}
        \centering      \includegraphics[width=0.8\linewidth]{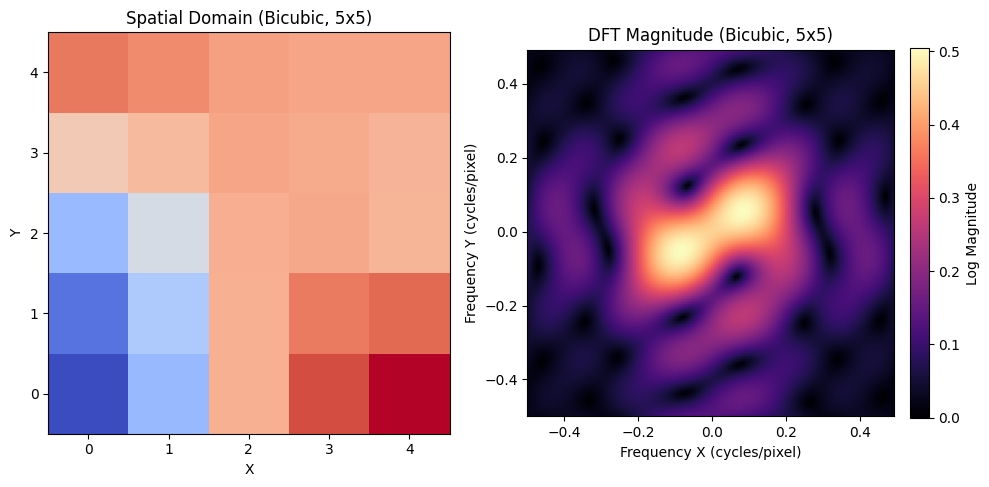}
        \caption{Interpolated $5 \times 5$ kernel (bicubic)}
        \label{fig:bicubic}
    \end{subfigure}
    \begin{subfigure}{\textwidth}
        \centering
        \includegraphics[width=0.8\linewidth]{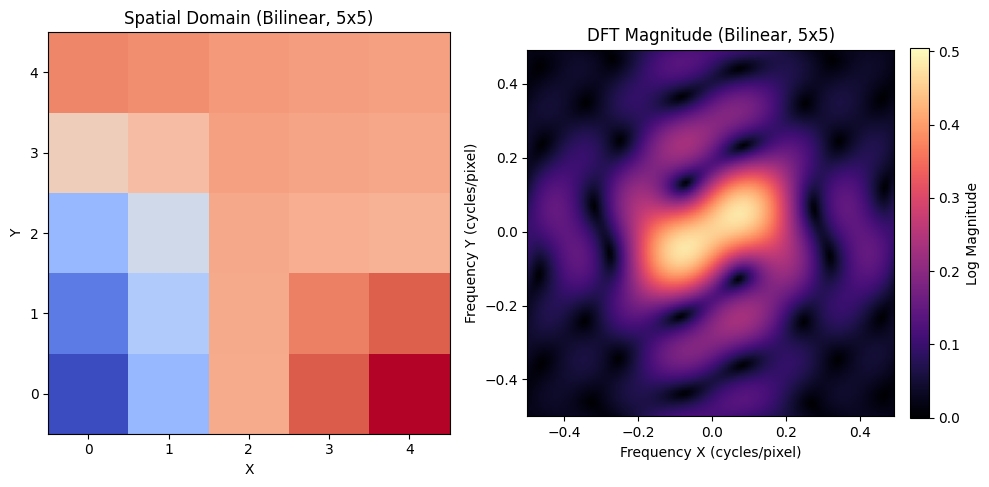}
        \caption{Interpolated $5 \times 5$ kernel (bilinear)}
        \label{fig:bilinear}
    \end{subfigure}
    \begin{subfigure}{\textwidth}
        \centering
        \includegraphics[width=0.8\linewidth]{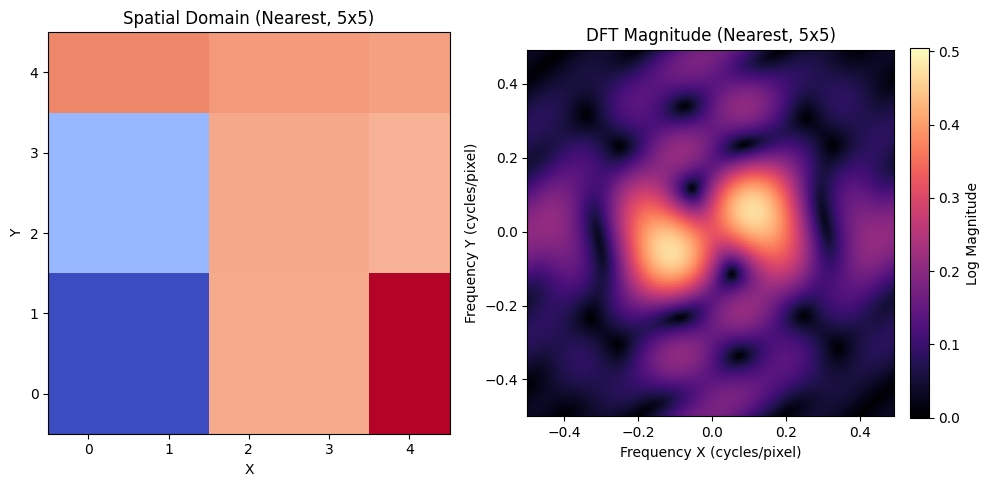}
        \caption{Interpolated $5 \times 5$ kernel (nearest)}
        \label{fig:nearest}
    \end{subfigure}
\end{figure}
\begin{figure}
\centering
\ContinuedFloat
    \begin{subfigure}{\textwidth}
        \centering
    \includegraphics[width=0.8\linewidth]{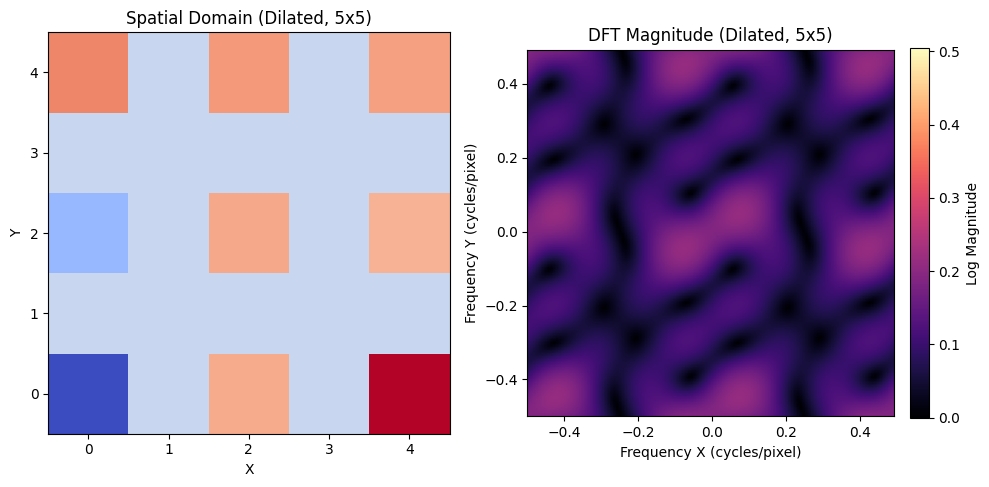}
        \caption{Interpolated $5 \times 5$ kernel (dilation)}
        \label{fig:dilated}
    \end{subfigure}
    \caption{Comparison of kernel interpolation methods. The figures on the left and right demonstrate spatial and spectral dimensions, respectively. The spectral dimensions are plotted as log-one-plus of the DFT spectrum. The base kernel was plotted from SD-1.5 with module name, \texttt{down\_blocks.2.resnets.0.conv1}.}
\label{fig:kernel-interplation-comparison} 
\end{figure}

\subsection{Implementation}
\Cref{fig:kernel-interplation-comparison} demonstrates a curious phenomenon. We observe that both \Cref{fig:base} and \Cref{fig:dilated} have lesser intensities than the rest of the interpolated kernels. Moreover, several studies have already shown that dilated convolution can generate images beyond the training resolution \cite{fouriscale2024,zhang2025hidiffusion}. Furthermore, \Cref{th:conv-interp,th:dilation,th:ratio} suggests the attenuation of kernel intensities for equivalent outputs in the case of interpolation. This provides the basis for our kernel-interpolation technique for SD models shown in \Cref{eq:scaled-conv}. Hence, we multiply the kernel amplitude by the inverse of the supersampling factor to get our equivalent supersampled kernels. This means if we intend to generate $aM\times bN$ resolution images from SD models, we need to scale the kernel by $\frac{1}{a\cdot b}$ provided the default (training) resolution is $M \times N$.

\begin{align}
    H_{s} &= a * (H-1) + 1 \\
    W_{s} &= b * (W-1) + 1  
\label{eq:kernel-interplation-formula}
\end{align}

However, since convolution kernels utilized in the SD pipeline are odd to begin with, we kept them odd. For instance, for $4\times$ supersampling, $3\times3$ kernel were supersampled to $5\times5$ instead of $6\times6$ following related works \cite{fouriscale2024}.  This is done following \Cref{eq:kernel-interplation-formula}, where $a$ \& $b$ stand for the supersampling factor for the height and width of the latent-image, respectively. $H_s$ \& $W_s$ stands for the interpolated, whereas $H$ \& $W$ stands for the default height and width of the kernel, respectively. Some other modifications to the SD pipeline include ---

\begin{itemize}
    \item \textit{Convolution Sublayer Selection. }We observe better results when interpolating most of the kernels than all of them. \Cref{fig:unet-highlighted-kernels} highlights the layers whose convolution sub-layers are interpolated; the rest are kept as is. 
    \item \textit{Denoising Step Range.} Again, instead of denoising for all 50 DDIM timesteps, we denoise for the first 30 steps, inspired by \cite{fouriscale2024}.
\end{itemize}

\begin{figure}
 \begin{subfigure}{\textwidth}
        \centering    \includegraphics[width=\linewidth]{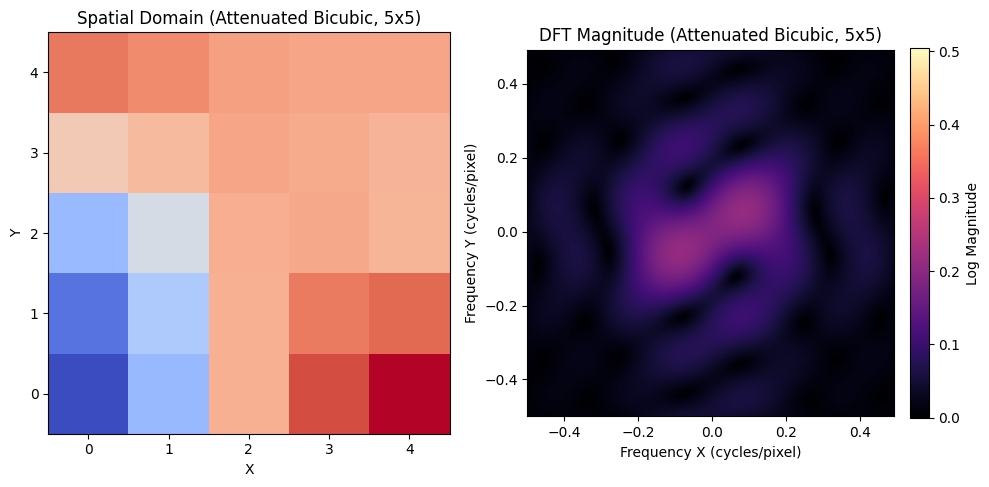}
        \caption{Interpolated and attenuated $5 \times 5$ kernel (bicubic)}
        \label{fig:bicubic-at}
    \end{subfigure}
    \begin{subfigure}{\textwidth}
        \centering    \includegraphics[width=\linewidth]{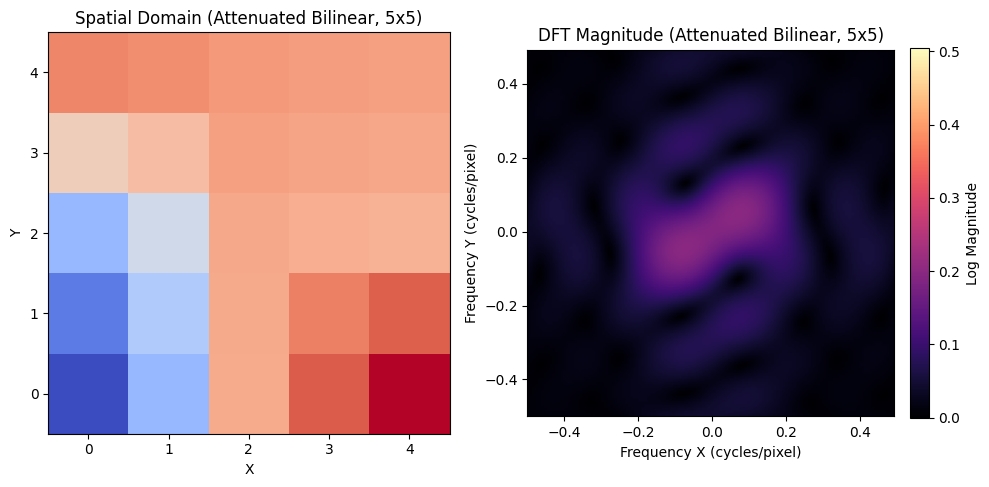}
        \caption{Interpolated and attenuated $5 \times 5$ kernel (bilinear)}
        \label{fig:bilinear-at}
    \end{subfigure}
\end{figure}
\begin{figure}
\ContinuedFloat
    \begin{subfigure}{\textwidth}
        \centering
    \includegraphics[width=\linewidth]{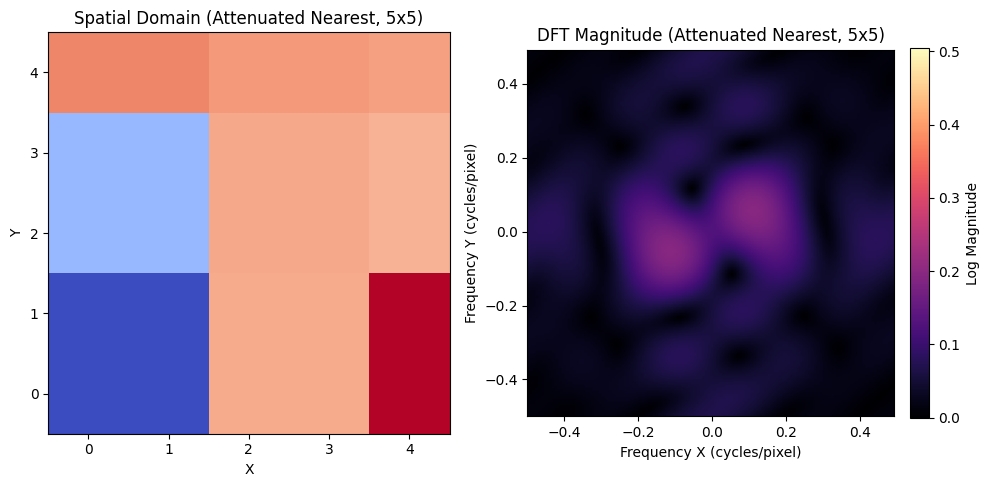}
        \caption{Interpolated and attenuated $5 \times 5$ kernel (nearest)}
        \label{fig:nearest-at}
    \end{subfigure}
    \caption{Attenuated and interpolated version of \Cref{fig:bicubic,fig:bilinear,fig:nearest}. The interpolated kernel shows a clear similarity in appearance \textit{and} intensity with \Cref{fig:base,fig:dilated}.}
\end{figure}

\begin{figure}
    \centering
    \includegraphics[width=\linewidth]{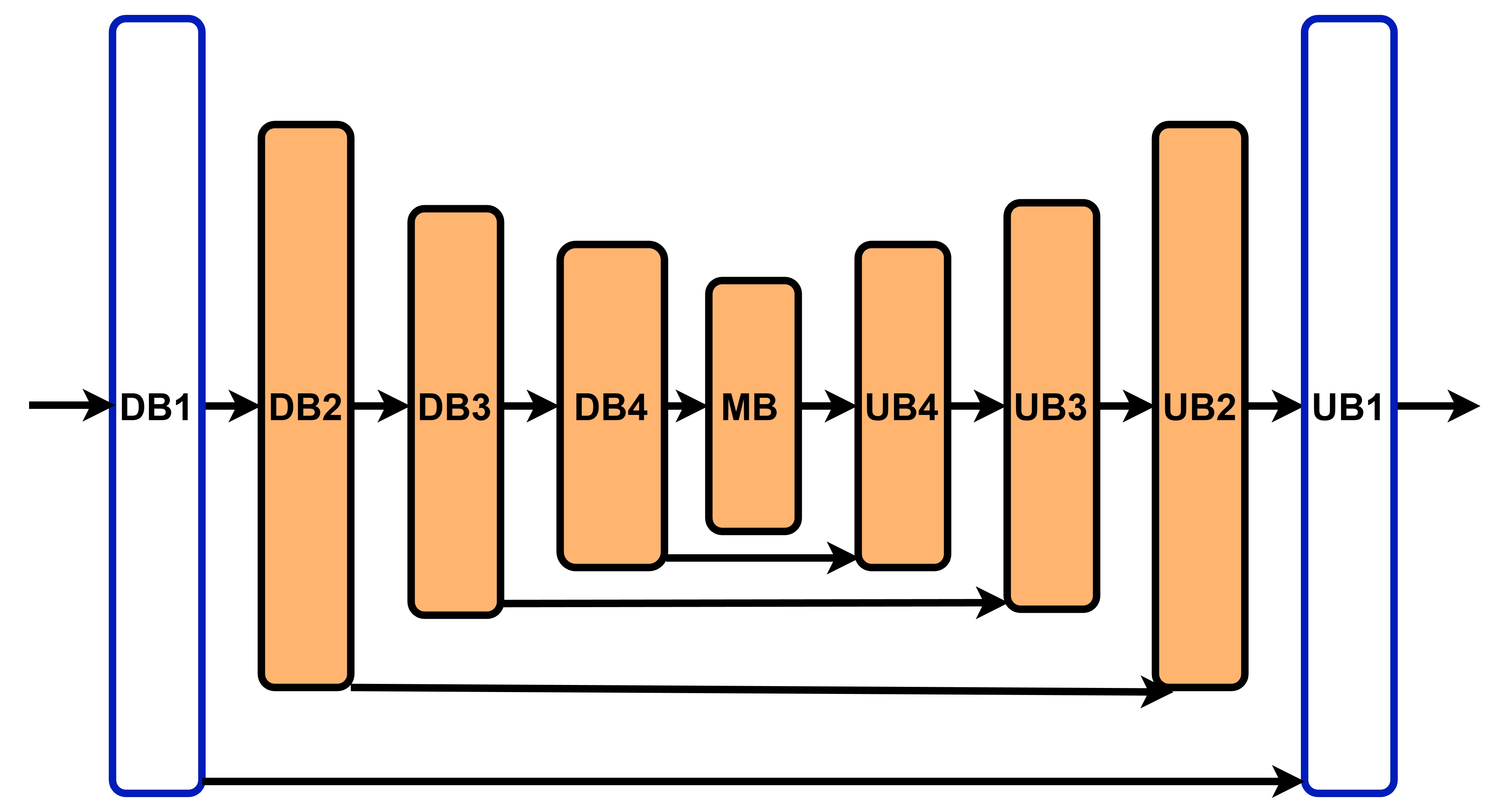}
    \caption{UNet model architecture for denoising latents of the Stable Diffusion pipeline. The convolution sublayers of the layers marked orange are interpolated for the best results. They are selected through empirical analysis, as suggested by \cite{fouriscale2024}.}
    \label{fig:unet-highlighted-kernels}
\end{figure}

The rest of the settings are kept as is. In the next section, we discuss our experimental approach and findings to validate our methodology. 

\section{Experimental Evaluation}
\label{sec:experiments}
This section discusses our experimental setup, our findings, and comparison with related works. To do so, we download 3K images from the LAION-5B dataset to compare Frechet Inception Distance (FID) \cite{fid} and Kernel Inception Distance (KID) \cite{kid}, against the generated images.  Again, using the same principle of kernel-interpolation, we also interpolate deep neural networks, adapting them for beyond-training-resolution inputs, and compare their performance against a baseline with no training. However, for this task, we use the Imagenette dataset.

\subsection{Experimental Setup}
In this section, we discuss our experimental setup. We begin by describing our dataset.

\textit{LAION-5B Subset.} We select a 3K $1024\times1024$ image subset from the aforementioned LAION-5B dataset \cite{schuhmann2022laionb,relaion}. Since the dataset contains duplicate images, we ensure uniqueness by hashing the downloaded images using the SHA256 algorithm \cite{sha256}. 

\textit{Workstation Specification.} We perform our experiments in a workstation consisting of Intel\textsuperscript{®} Xeon\textsuperscript{®} processor \cite{intelxeon}, variable (but plenty of) RAM, and NVIDIA A100 GPU \cite{nvidiaa100} in our remote virtual machine (VM). We use both 40 and 80 GB variants of the GPU (one at a time), depending on their availability from our platform-as-a-service provider.  

\subsection{Experimental Metrics}
To conduct our experiments, we choose the Fr\'{e}chet Inception Distance (FID) \cite{fid} and Kernel Inception Distance (KID) metrics \cite{kid}. However, we use variants of the metrics namely, FID\textsubscript{b}, FID\textsubscript{r}, KID\textsubscript{b}, and KID\textsubscript{r} as done by \cite{fouriscale2024, he2023scalecrafter}. 

\subsection{Quantitative Results}
In this section, we provide the empirical analysis of our methods' capability for generating beyond-training-resolution images using SD-1.5 and supersampling deep neural networks.
\subsubsection{Beyond Training Resolution Image Generation}
For empirical evaluation of our methodology, we generate 3K images using the captions of the corresponding real images from the LAION-5B dataset. To ensure a fair experiment, we use the same captions to generate images for the related work. Then we utilize the selected experimental metrics for benchmarking. To generate images for this experiment, we used \texttt{sd-legacy/stable-diffusion-v1-5} from huggingface\footnote{The evaluated model can be found at \url{https://huggingface.co/stable-diffusion-v1-5/stable-diffusion-v1-5}}. For comparison with related works, we generate images $4\times$ of the base resolution ($512\times512$) of SD-1.5, that is $1024\times1024$ as shown in \Cref{tab:results-sota}. 
% However, our model can generate images for any arbitrary resolution beyond the base resolution.

\begin{table}[h]
    \centering
    \begin{tabular}{c|c|c|c|c|c}
        \toprule
        Method& FID\textsubscript{r}$\downarrow$ & KID\textsubscript{r}$\downarrow$ & FID\textsubscript{b}$\downarrow$ & KID\textsubscript{b}$\downarrow$ & Fine-tuning friendly\\
        \midrule
        FouriScale \cite{fouriscale2024} & \textbf{7.47} & \textbf{0.0035} & \textbf{11.82} & \textbf{0.0026} & No \\
        ScaleCrafter \cite{he2023scalecrafter} & \underline{7.92} & \underline{0.0038} & \underline{12.23} & \underline{0.0028} & No \\     
        Ours (bicubic) & 8.67 & 0.004 & 14.59 & 0.0034 & \textbf{Yes} \\
        Ours (bilinear) & 9.42 & 0.0045 & 16.57 & 0.0045 & \textbf{Yes} \\        
        \bottomrule
    \end{tabular}
    \caption{Results for calculating FID and KID metrics using the SD-1.5 model for $4\times$ image generation $(1024\times1024)$. Best and second-best results are in bold and underlined, respectively. The table also highlights the approaches that are easy to fine-tune.}
    \label{tab:results-sota}
\end{table}

In \Cref{tab:results-sota}, we observe that our method achieves competitive performance against the state-of-the-art while solely modifying the convolution layers. For instance, FouriScale \cite{fouriscale2024} generates an additional conditional noise estimate beyond unconditional and conditional estimations for classifier-free guidance. This noise was generated with identical dilated convolutions but with milder low-pass filters. Again, ScaleCrafter \cite{he2023scalecrafter} re-dilates QKV attentions by splitting the features of length $n$ into  $m$ sub-features of length $\frac{n}{m}$ (where $m$ is the super-resolution factor). In contrast, our method solely relies on up-sampling the convolution layers, keeping other components, including the attention layers, intact. Furthermore, this also makes our approach more finetuning-friendly than others, as dilated convolution-kernels consist of amplified and zeroed values. This makes the kernels equivalent to being randomly initialized relative to the ideal supersampled versions of themselves. Hence, the compared approaches may require a long time to retrain their convolution layers, while our approach has almost converged with no fine-tuning. Again, we achieve this while being only $\sim 13.8\%$, $12.5\%$, $\sim 19\%$, and $\sim 23.5\%$ behind than the state-of-the-art in FID\textsubscript{r}, KID\textsubscript{r}, FID\textsubscript{b}, and KID\textsubscript{b} respectively by using bicubic interpolation. We also observe that using bilinear interpolation deteriorates results, suggesting that the upsampling method used is crucial for a better image quality. 

This also suggests that our method is more general and may apply to other neural-network layers, beyond just up-sampling convolution. This is because convolution is a special case of feed-forward neural networks \cite{li2021surveyconvspecialcase}. To demonstrate this, we interpolate three Deep Neural Network (DNN) architectures to adapt for higher-resolution inputs in the next section.

\subsubsection{Deep Neural Network Interpolation}
\label{sec:dnn-interp}
In this section, we discuss the quantitative results of interpolating deep neural networks for accepting beyond-training-resolution input. To do so, we interpolate three neural network architectures, namely VGGNet \cite{simonyan2015deepconvolutionalnetworkslargescalevggnet}, ResNet \cite{he2016deepresnet}, and ViT \cite{vit}. All of them have been trained/designed to use $224\times224$ images, hence we adapt them to accommodate $448\times448$ resolution, without significant loss of performance through training-free interpolation. For testing our method's applicability, we use the Imagenette dataset \footnote{The dataset can be found through \url{https://github.com/fastai/imagenette}. This dataset is a subset of ImageNet with 10 classes.} to train the base-variants of all models on the base ($224\times224$) resolution, then we interpolate and evaluate the architectures on $448\times448$ without training. Finally, to allow the models to converge faster on the base resolution, we use pre-trained weights for all respective architectures, trained on the ImageNet-1k dataset. 

For all architectures, we interpolate all convolution layers from $H\times W$ to $2*H-1\times2*W-1$ where $H,W$ are height and width, respectively. We also update the padding from $(\lfloor \frac{H}{2} \rfloor, \lfloor \frac{W}{2} \rfloor)$ to $(\lfloor \frac{2*H-1}{2} \rfloor, (\lfloor \frac{2*W-1}{2} \rfloor)$, ensuring that the output before the first fully-connected layer (and the last convolution layer) is $4\times$ of the base output (for the $4\times$ input).

Again, when it comes to interpolating fully connected layers, we only interpolate the first fully connected layer and ignore the rest of the consecutive layers, due to the lack of necessity for adapting to a higher resolution. Hence, for a fully-connected layer of weights with shape $(\mathtt{output\_features, input\_features})$ we reshape to $(\mathtt{output\_features, C,\frac{\sqrt{input\_features}}{C}, \frac{\sqrt{input\_features}}{C}})$. Here, $C$ is the number of channels before flattening of the input for passing to the (first) fully-connected layer. Then, we interpolate the layer, such that the output shape of weights becomes $(\mathtt{output\_features, C,\frac{m*\sqrt{input\_features}}{C}, \frac{n*\sqrt{input\_features}}{C}})$
where ---
\[
    m = \frac{\text{Height of a supersampled input}}{\text{Height of a base input}}
\] 

\[
    n = \frac{\text{Width of a supersampled input}}{\text{Width of a base input}}
\]
Similar to interpolating convolution layers, we do not modify the bias. Again, we utilize bicubic interpolation for interpolating both convolution and fully-connected layers like before. In the latter sections, we discuss the model-specific adaptations made for interpolating the selected architectures.

\subsubsection*{VGGNet}
To interpolate VGGNet, we chose VGGNet-16's D variant from \cite{simonyan2015deepconvolutionalnetworkslargescalevggnet}. \Cref{tab:vgg16bvi} shows how the architecture of the interpolated model has been adapted for $448\times448$ resolution. Please note that along with the convolution layers, the first linear layer is also interpolated, showcasing the general use case of this work. The results for the evaluation dataset can be observed in \Cref{tab:resultsallmodelsinterp}, where we see a negligible difference between the performance of the base and the interpolated versions of the architecture. 

\begin{table}[h]
\centering
\caption{Base vs interpolated architecture of VGG-16 (D). The padding was also changed for all convolution layers as discussed earlier, but omitted from this table for brevity.}
\begin{tabular}{|c|c|}
\hline
Base & Interpolated \\
\hline
Input-$224\times224$ & Input-$448\times448$\\
\hline
 conv3-64 & conv5-64  \\
 conv3-64 & conv5-64 \\
\hline
\multicolumn{2}{|c|}{maxpool} \\
\hline
conv3-128 & conv5-128 \\
conv3-128 & conv5-128 \\
\hline
\multicolumn{2}{|c|}{maxpool} \\
\hline
 conv3-256 & conv5-256 \\
 conv3-256 & conv5-256 \\
 conv3-256 & conv5-256 \\
\hline
\multicolumn{2}{|c|}{maxpool} \\
\hline
 conv3-512 & conv5-512 \\
 conv3-512 & conv5-512 \\
 conv3-512 & conv5-512 \\
\hline
\multicolumn{2}{|c|}{maxpool} \\
\hline
 conv3-512 & conv5-512 \\
 conv3-512 & conv5-512 \\
 conv3-512 & conv5-512 \\
\hline
\multicolumn{2}{|c|}{maxpool} \\
\hline
\multicolumn{2}{|c|}{output shape} \\
\hline
 $512, 7, 7$ & $512, 14, 14$\\
\hline
\multicolumn{2}{|c|}{flattened shape} \\
\hline
 $25088$ & $100352$\\
\hline
 FC-$4096\times25088$ & FC-$4096\times100352$\\
\hline
\multicolumn{2}{|c|}{FC-$4096\times4096$} \\
\multicolumn{2}{|c|}{FC-$4096\times10$} \\
\multicolumn{2}{|c|}{softmax} \\
\hline
\end{tabular}
\label{tab:vgg16bvi}
\end{table}

\subsubsection*{ResNet-18}
Similar to VGG-16, we also interpolate ResNet-18 \cite{he2016deepresnet} following the procedure mentioned earlier. Therefore, we interpolate the convolution layers and the last fully connected layer. For brevity, we skip providing the architectural comparison, but it is similar to VGG-16 as shown in \Cref{tab:vgg16bvi}, and in accordance with the earlier discussion. However, we change the single \texttt{AvgPool (1, 1)} layer (after sequential convolutions and before the flattening operation for the FC layer) to output \texttt{AvgPool (2, 2)}, indicating the increase of input dimensions to $4\times$ of the base resolution. \Cref{tab:resultsallmodelsinterp} shows the performance comparison of the base and interpolated variants of the model. We observe the difference to be non-negligible but minute nonetheless, which may be decreased through finetuning.

\begin{table}[h]
\centering
\caption{Performance comparison of the base and interpolated VGG-16, ResNet-18, and, ViT-B-16. Here, the base scores refer to the performance of the variants before training with the Imagenette dataset. However, the interpolated versions were never trained, but was supersampled from the trained versions of the respective models.}
\begin{tabular}{|c|c|c|c|c|c|}
\hline

\text{} & \textbf{Variant} & \textbf{Base-Accuracy} & \textbf{Base-F1Score} & \textbf{Accuracy} & \textbf{F1-score} \\
\hline
\multirow{2}{*}{\textbf{VGG-16}} & Base & 0.099 & 0.098  & 0.988 & 0.988\\ 
&Interpolated & 0.097 & 0.094  & 0.982 & 0.982\\
\hline
\multirow{2}{*}{\textbf{ResNet-18}} & Base & 0.146 & 0.09 & 0.985 & 0.985\\
& Interpolated & 0.144 & 0.087 & 0.959 & 0.959\\
\hline
\multirow{2}{*}{\textbf{ViT-B-16}} & Base & 0.105 & 0.092 & 0.9975 & 0.9975\\
& Interpolated & 0.108 & 0.095 & 0.9972 & 0.9972\\
\hline
\end{tabular}
\label{tab:resultsallmodelsinterp}
\end{table}

\subsubsection*{ViT-B-16}
Similar to previous architectures, we also interpolate ViT-B-16 \cite{vit}. However, this architecture only has a single convolution layer for projection. We interpolate that layer while doubling the stride to accommodate a $448\times448$ resolution. However, we do not interpolate the fully connected layers, as their input does not change due to the increase in image size, due to self-attention in the previous layer. Furthermore, we skip interpolating the attention layers, as they are dependent on the channel dimension of the convolutional-projection layer for input projection. Hence, interpolation across the channel dimension does not make any sense mathematically for a higher resolution adaptation. The performance of base vs interpolated ViT-B-16 can be observed in \Cref{tab:resultsallmodelsinterp}, where we observe the performance difference to be virtually non-existent.

Hence, we observe the superior applicability of our method in terms of interpolating deep neural networks along with competitive performance among training-free, beyond-training-resolution image generation using stable diffusion. In the next section, we discuss the qualitative results and findings of our work.

\subsection{Qualitative Results}
For qualitative evaluation, we generate a set of nine images per methodology using the same prompts. Our images and images generated by the methods of ScaleCrafter and FouriScale have been provided in \Cref{fig:1castle,fig:2nightstand,fig:3koala,fig:4astronaut,fig:5teddy,fig:6women,fig:7shoe,fig:8cats,fig:9dogimages}. The prompts of the generated images have been provided in the respective captions of the figures, and in \Cref{tab:qual-prompts}.

\begin{table}[h]
    \centering
    \begin{tabularx}{\textwidth}{c|X} 
        \toprule
        Figures & Prompt \\
        \midrule
        \Cref{fig:1castle} & A castle is in the middle of a european city \\
        \Cref{fig:2nightstand} & A nighstand topped with a white land-line phone, remote control, a metallic lamp, and a black hardcover book. \\
        \Cref{fig:3koala} & A painting of a koala wearing a princess dress and crown, with a confetti background. \\
        \Cref{fig:4astronaut} & A professional photograph of an astronaut riding a horse \\ 
        \Cref{fig:5teddy} & A teddy bear mad scientist mixing chemicals depicted in oil painting style \\
        \Cref{fig:6women} & A watercolor portrait of a woman by Luke Rueda Studios and David Downton. \\
        \Cref{fig:7shoe} & Side-view blue-ice sneaker inspired by Spiderman created by Weta FX \\
         \Cref{fig:8cats} & Two cats, grey and black, are wearing steampunk attire and standing in front of a ship in a heavily detailed painting. \\
        \Cref{fig:9dogimages} & Two little dogs looking a large pizza sitting on a table \\
        \bottomrule
    \end{tabularx}
    \caption{Prompts used for qualitative evaluation with their corresponding subfigure number for each approach. The prompts are provided verbatim, hence some prompts missing a period (.) are intentional among other things.}
    \label{tab:qual-prompts}
\end{table}

We can observe, from the \Cref{fig:1castle,fig:2nightstand,fig:3koala,fig:4astronaut,fig:5teddy,fig:6women,fig:7shoe,fig:8cats,fig:9dogimages} that they are overall very close in terms of generation quality, where some prompts have fared better than others. For instance, FouriScale has the best (slightly better visually) \Cref{fig:fourigen1} than the rest. Again, \Cref{fig:ourgen2,fig:scalegen2,fig:fourigen2} are almost equivalent in terms of hallucination and picture quality, as none of the methods could follow the prompt to the dot. In case of \Cref{fig:ourgen3,fig:scalegen3,fig:fourigen3}, \Cref{fig:ourgen3} failed to follow the prompt closely (the crown and the princess dress are missing), yet aesthetically the images are all equivalent. However, both \Cref{fig:scalegen3,fig:fourigen3} are missing the \textit{princess-dress} as well in the generated images. In terms of \Cref{fig:ourgen4,fig:scalegen4,fig:fourigen4}, our method suffered the worst hallucination as the astronaut is missing a leg while still being visually appealing as \Cref{fig:scalegen4,fig:fourigen4}. Again, in \Cref{fig:ourgen5,fig:scalegen5,fig:fourigen5}, the teddy bear is not depicted as a mad scientist mixing chemicals in any of the images, rendering equal hallucination and quality. However, our image consisted of a bow tie, which may provide a hint of the scientist stereotype. \Cref{fig:ourgen6,fig:scalegen6,fig:fourigen6} are almost equally appealing and flawless. Next, \Cref{fig:scalegen7,fig:fourigen7} appear too blurry for a realistic image, where our method \Cref{fig:ourgen7} is sharper. Moreover, \Cref{fig:scalegen8} follows the prompt the closest; however, it generates an extra cat, where \Cref{fig:ourgen8,fig:fourigen8} lags behind in terms of prompt correctness. Finally, \Cref{fig:ourgen9} follows the prompt closely, but \Cref{fig:scalegen9,fig:fourigen9} produced somewhat deformed dog(s).

\begin{figure}[!h]
    \centering 
    \begin{subfigure}{0.48\textwidth}
            \centering        \includegraphics[width=\linewidth]{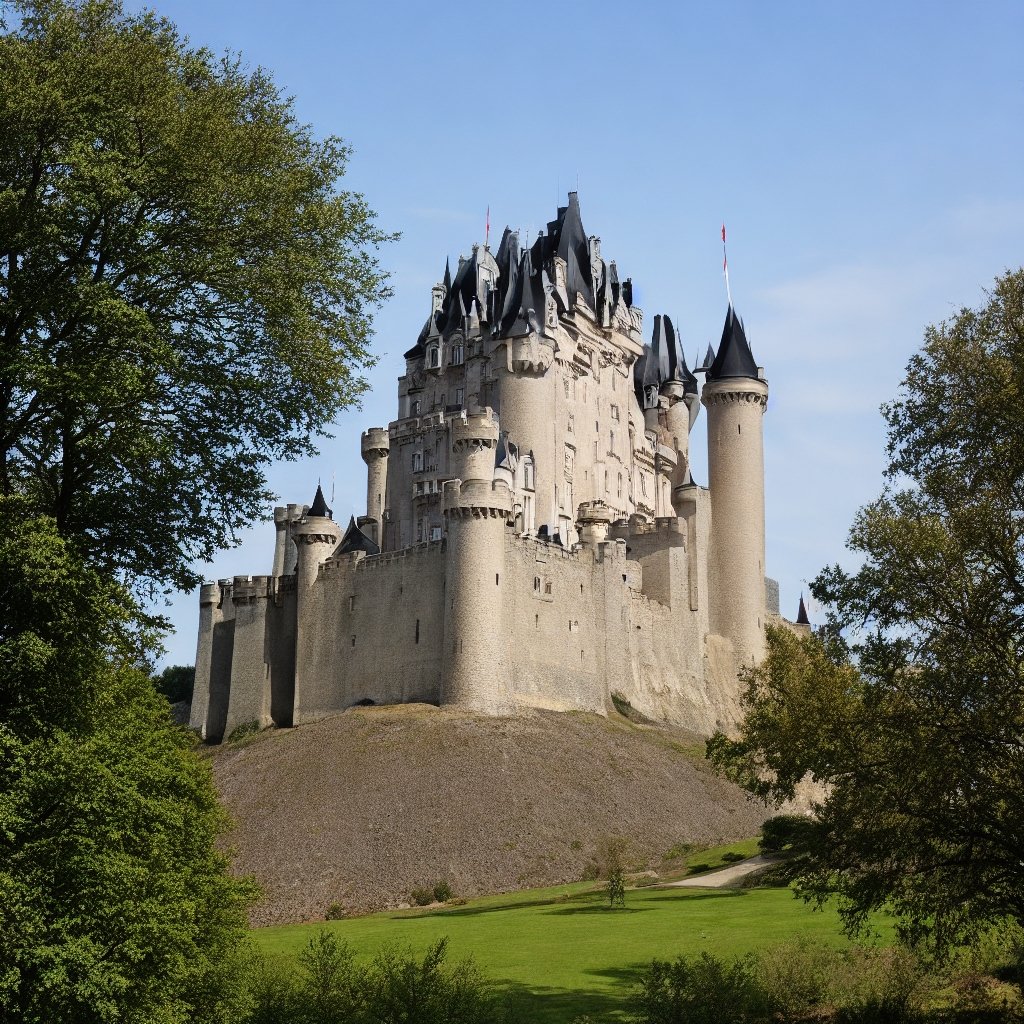}
            \caption{Our approach}
            \label{fig:ourgen1}
        \end{subfigure}
          \begin{subfigure}{0.48\textwidth}
            \centering        
            \includegraphics[width=\linewidth]{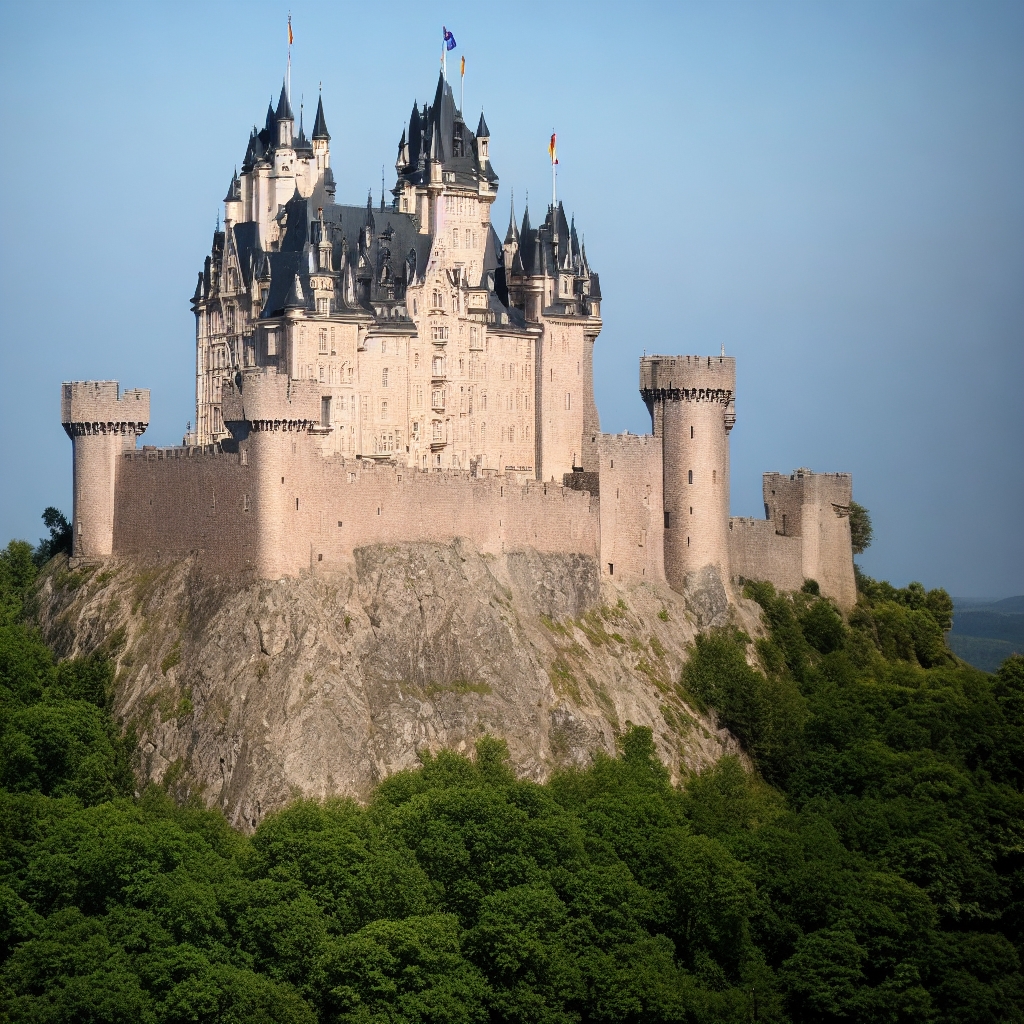}
            \caption{ScaleCrafter}
            \label{fig:scalegen1}
        \end{subfigure}
          \begin{subfigure}{0.48\textwidth}
            \centering        
            \includegraphics[width=\linewidth]{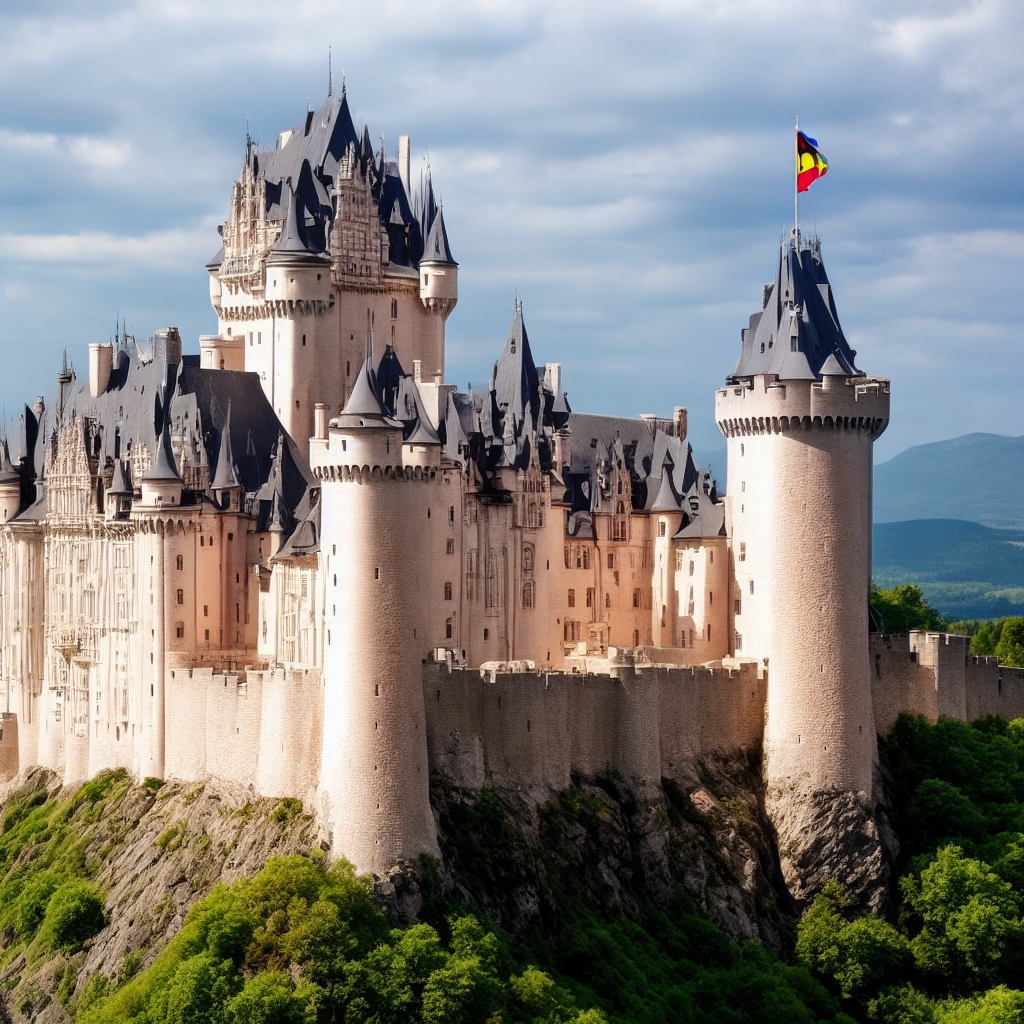}
            \caption{FouriScale}
            \label{fig:fourigen1}
        \end{subfigure}
\caption{A castle is in the middle of a european city}
\label{fig:1castle}
\end{figure}

\begin{figure}
\centering
    \begin{subfigure}{0.48\textwidth}
            \centering      \includegraphics[width=\linewidth]{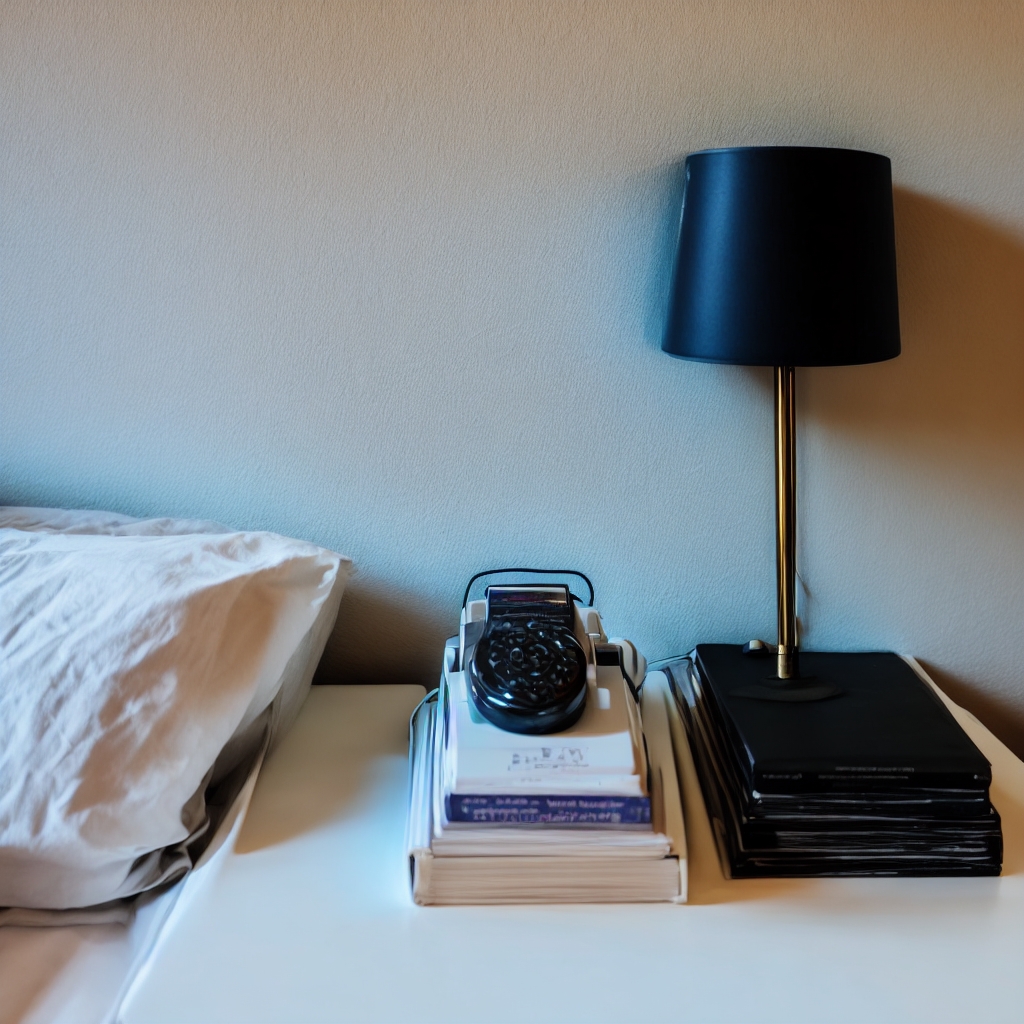}
            \caption{Our approach}
        \label{fig:ourgen2}    
        \end{subfigure} 
         \begin{subfigure}{0.48\textwidth}
            \centering      \includegraphics[width=\linewidth]{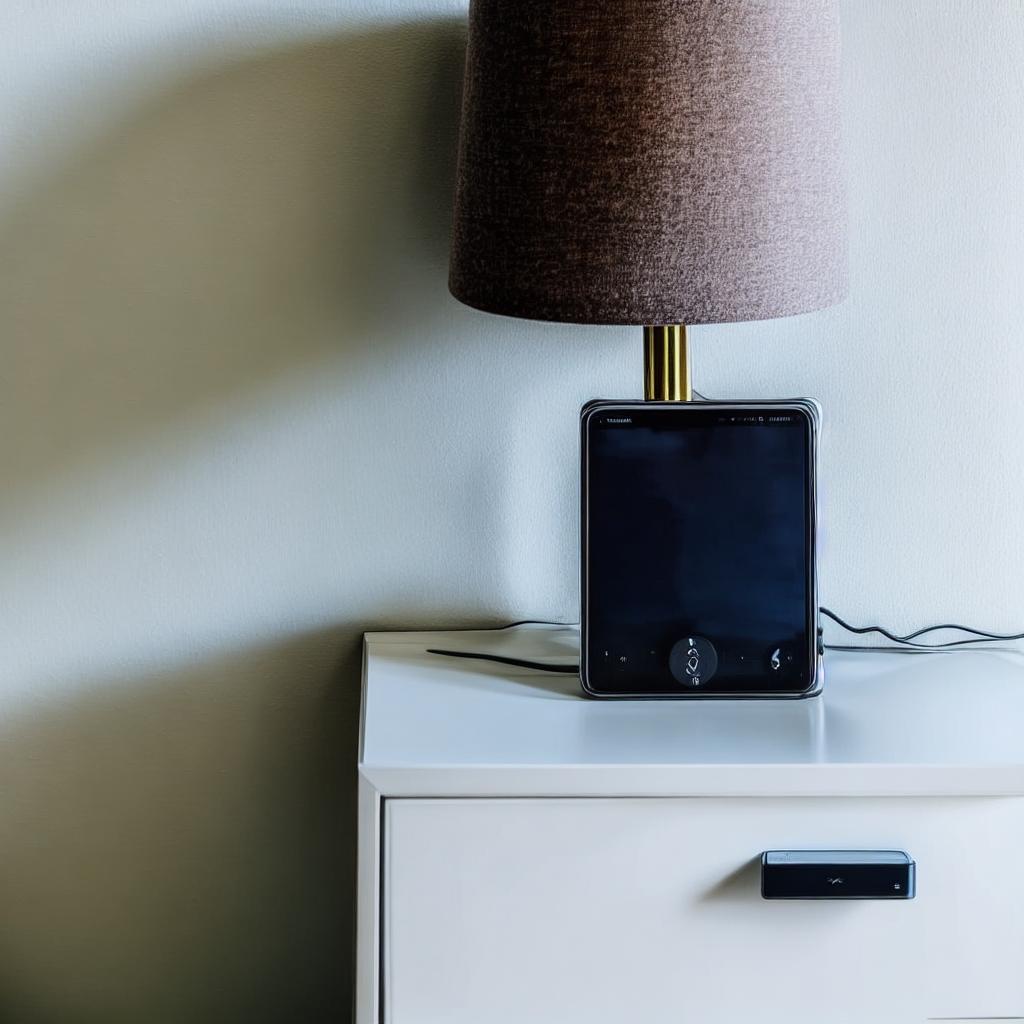}
            \caption{ScaleCrafter}
        \label{fig:scalegen2}    
        \end{subfigure} 
                \begin{subfigure}{0.48\textwidth}
            \centering      \includegraphics[width=\linewidth]{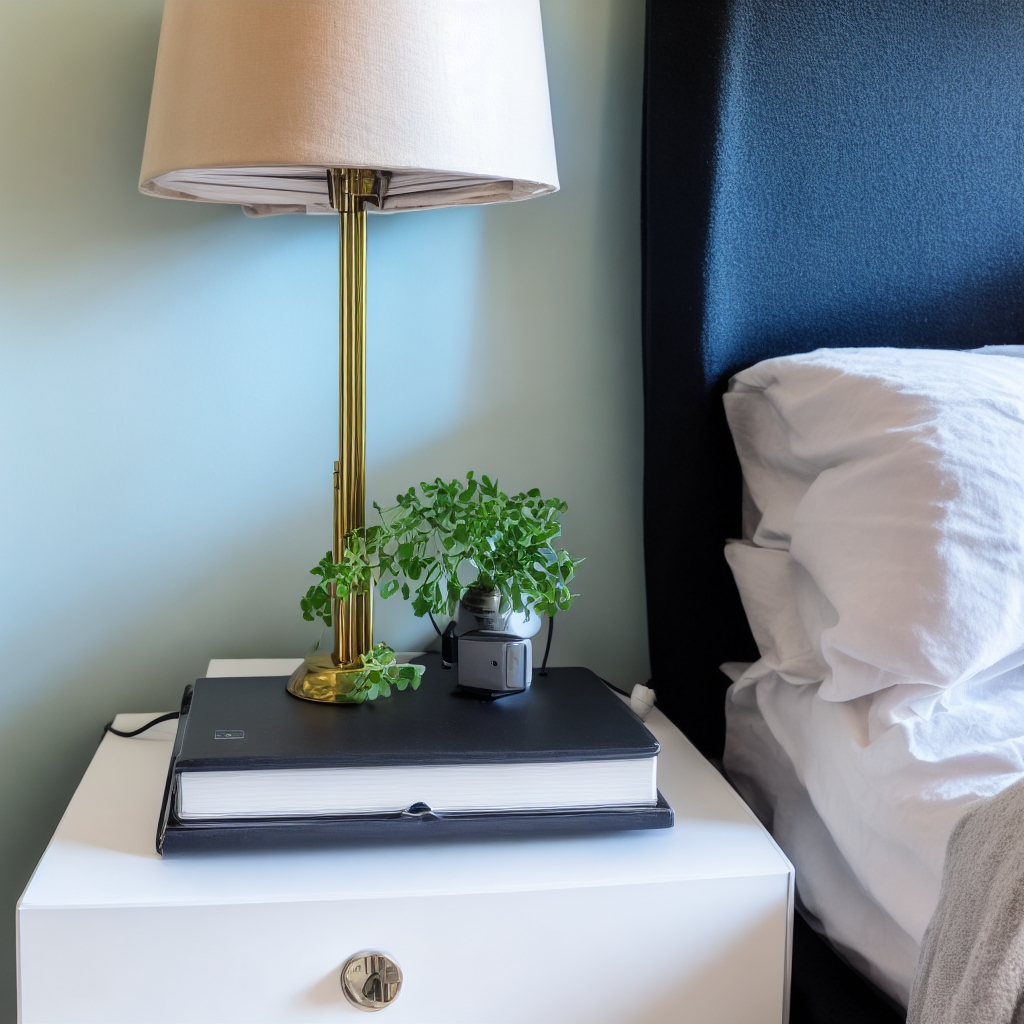}
            \caption{FouriScale}
        \label{fig:fourigen2}    
        \end{subfigure}         
\caption{A nighstand topped with a white land-line phone, remote control, a metallic lamp, and a black hardcover book.}
\label{fig:2nightstand}
\end{figure}        

\begin{figure}
\centering
\begin{subfigure}{0.48\textwidth}
        \centering     
        \includegraphics[width=\linewidth]{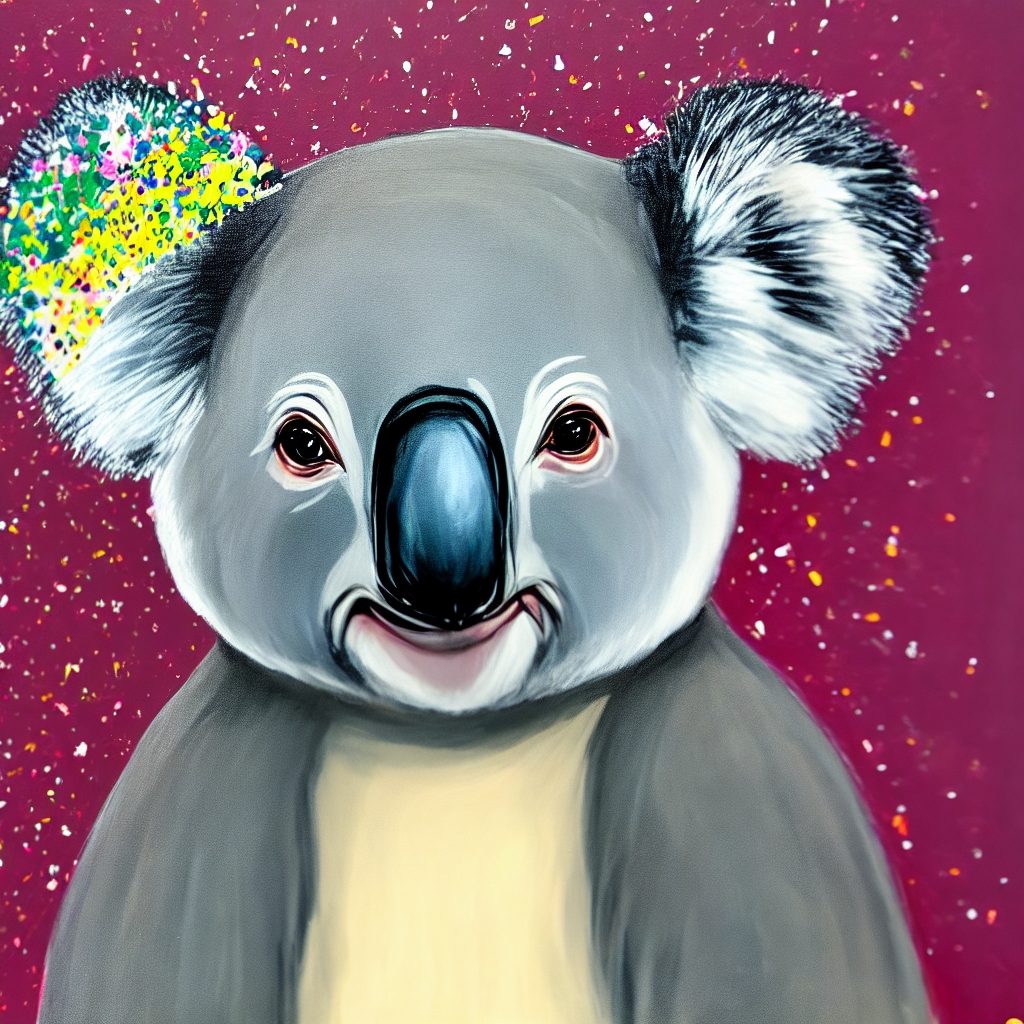}
        \caption{Our approach}
        \label{fig:ourgen3}    
    \end{subfigure}        
    \begin{subfigure}{0.48\textwidth}        
        \centering      
        \includegraphics[width=\linewidth]{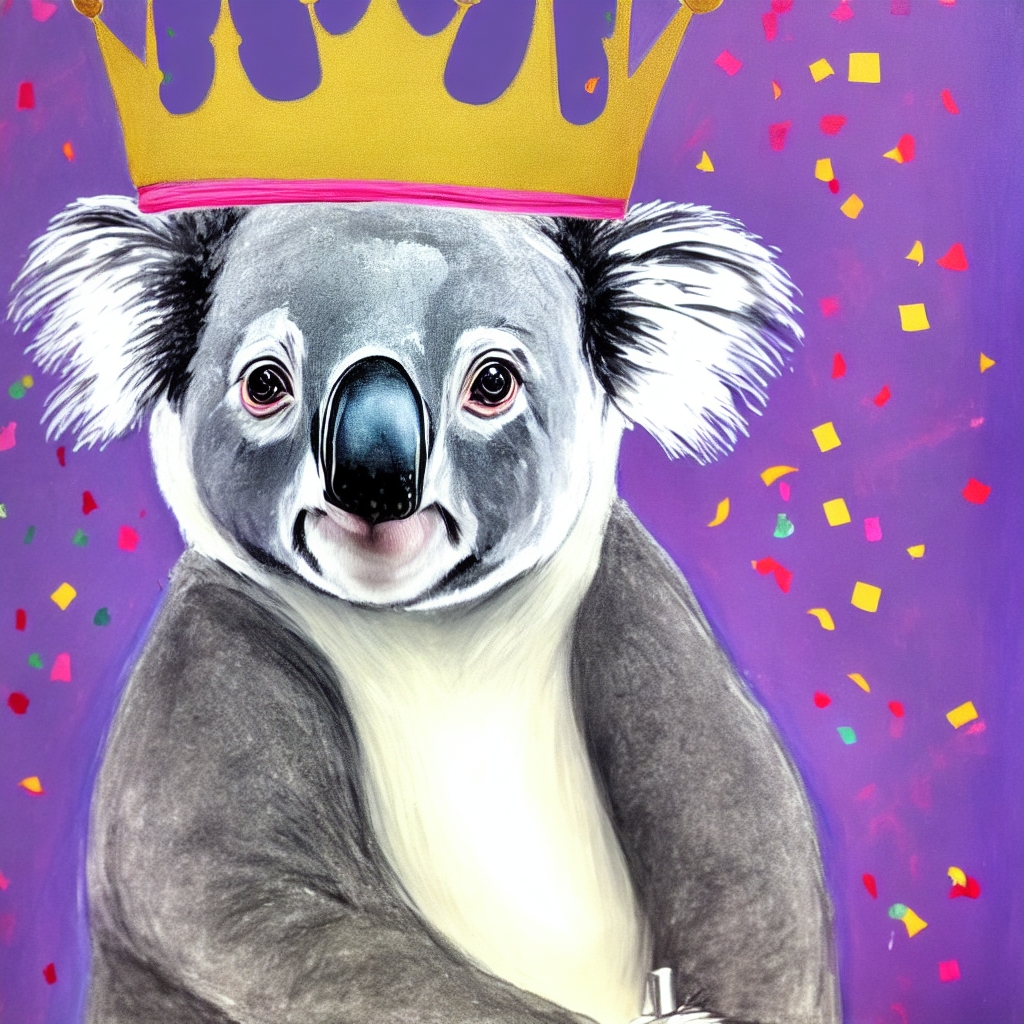}
        \caption{ScaleCrafter}
    \label{fig:scalegen3}    
    \end{subfigure} 
    \begin{subfigure}{0.48\textwidth}
        \centering      \includegraphics[width=\linewidth]{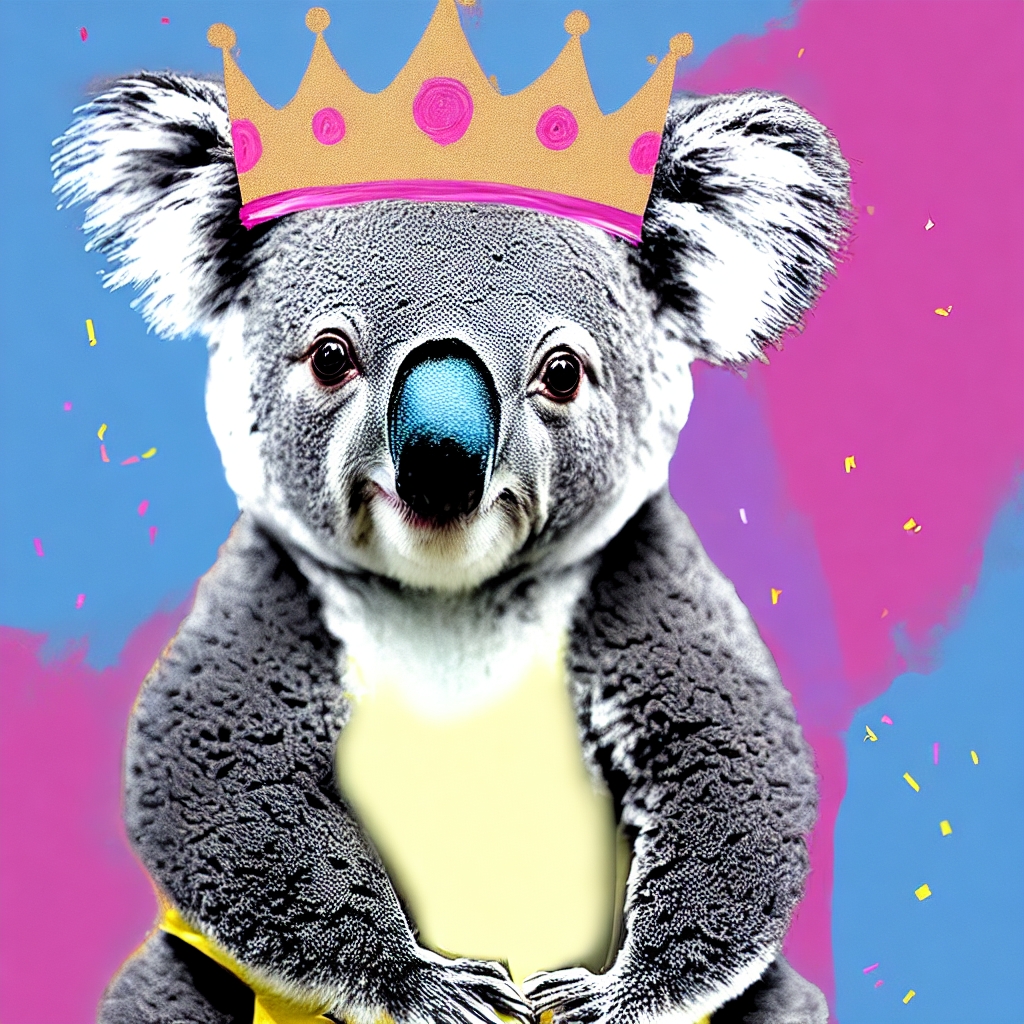}
        \caption{FouriScale}
    \label{fig:fourigen3}    
    \end{subfigure}
    \caption{A painting of a koala wearing a princess dress and crown, with a confetti background.}
    \label{fig:3koala}
\end{figure}

\begin{figure}
    \centering
    \begin{subfigure}{0.48\textwidth}
        \centering      \includegraphics[width=\linewidth]{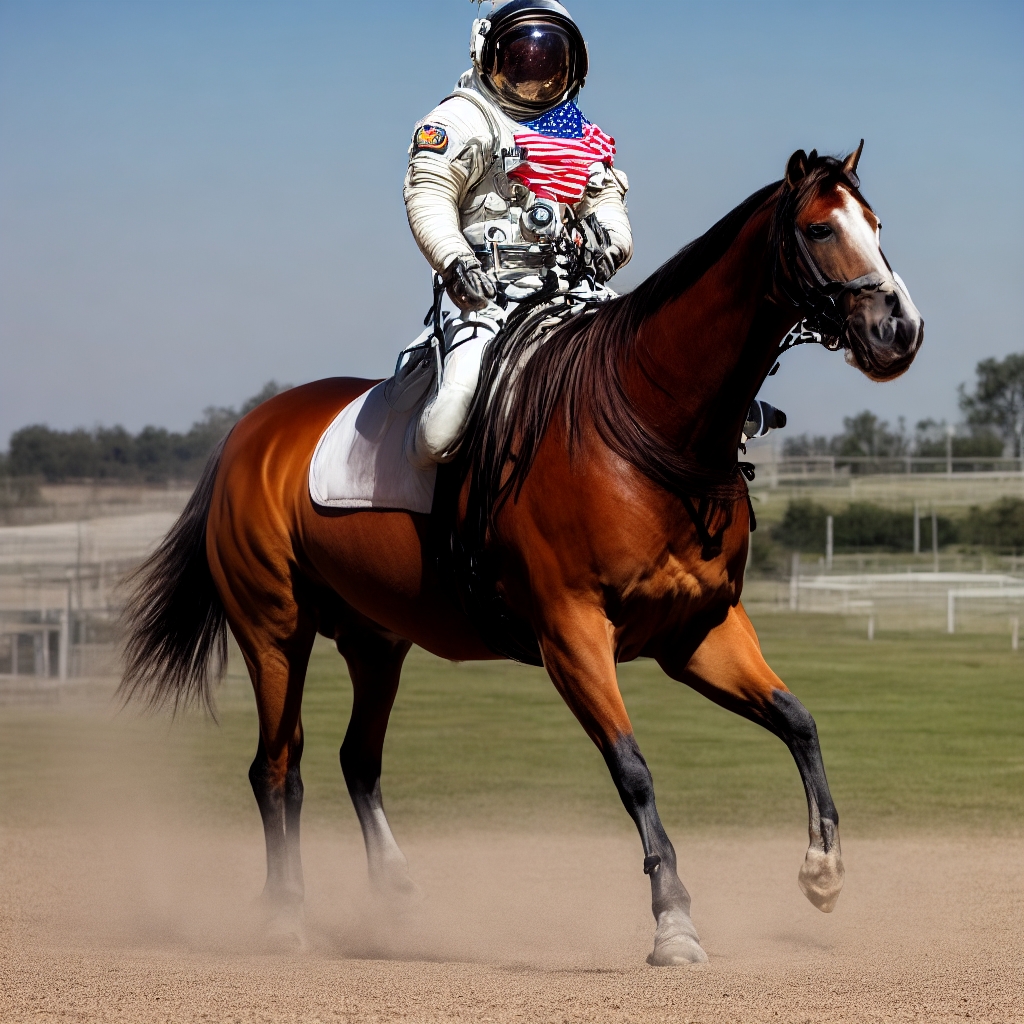}
        \caption{Our approach}
    \label{fig:ourgen4}    
    \end{subfigure} 
      \begin{subfigure}{0.48\textwidth}
        \centering      \includegraphics[width=\linewidth]{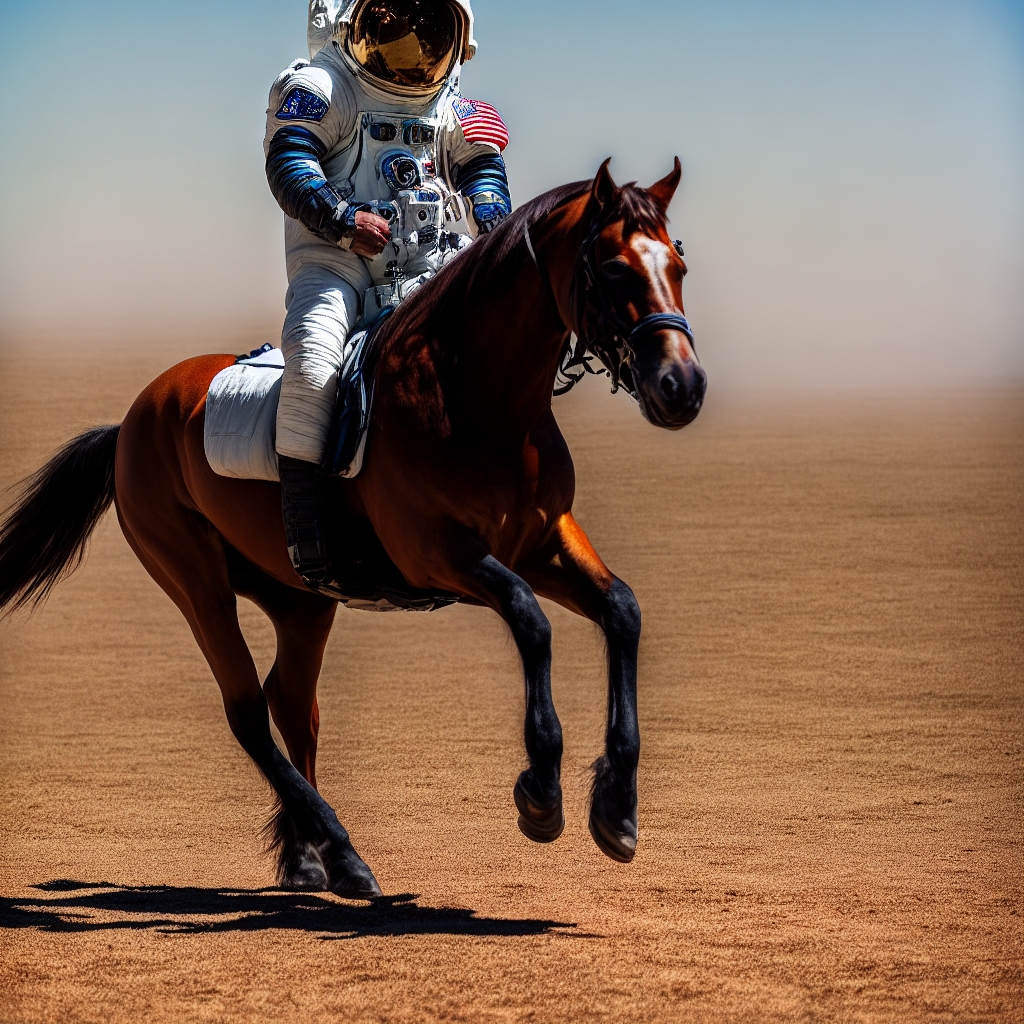}
        \caption{ScaleCrafter}
    \label{fig:scalegen4}    
    \end{subfigure} 
       \begin{subfigure}{0.48\textwidth}
        \centering      \includegraphics[width=\linewidth]{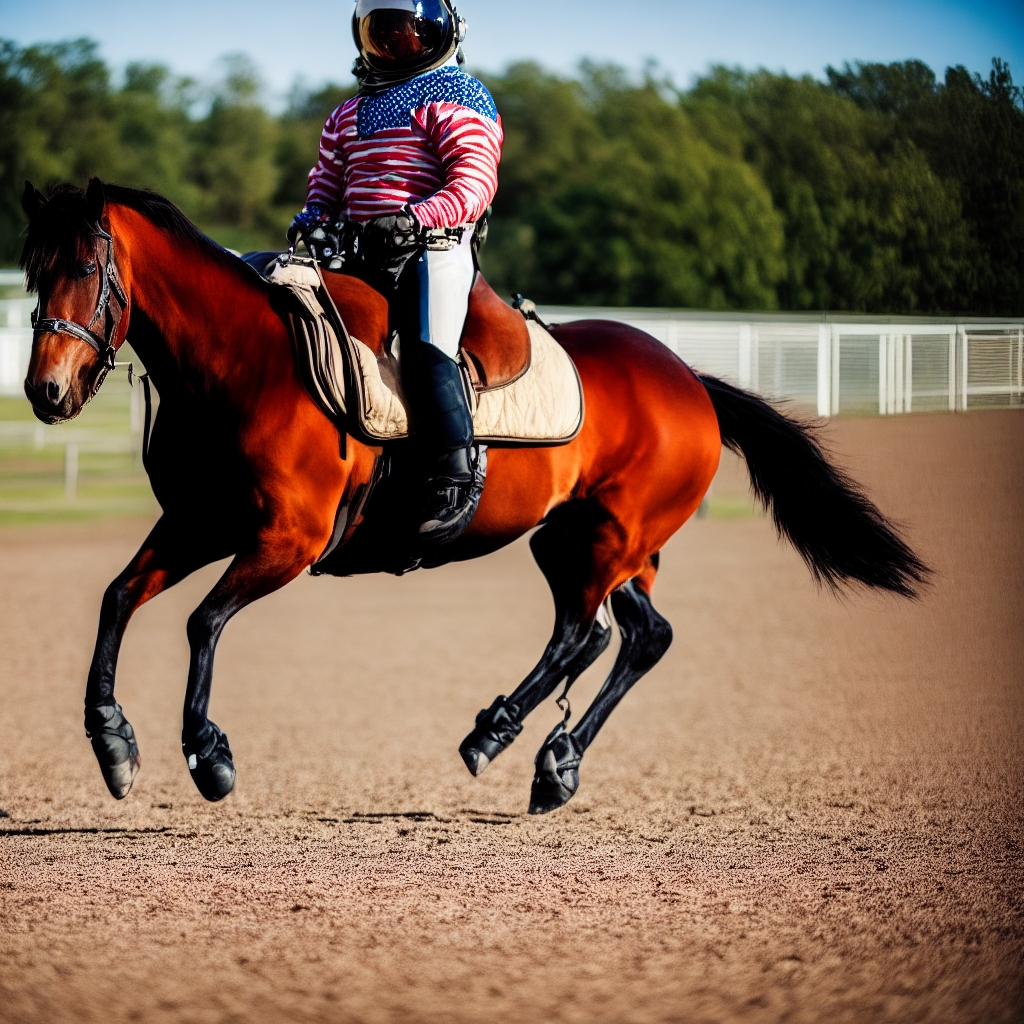}       
        \caption{FouriScale}
    \label{fig:fourigen4}    
    \end{subfigure} 
     \caption{A professional photograph of an astronaut riding a horse}
     \label{fig:4astronaut}
\end{figure}

\begin{figure}
\centering
\begin{subfigure}{0.48\textwidth}
    \centering      \includegraphics[width=\linewidth]{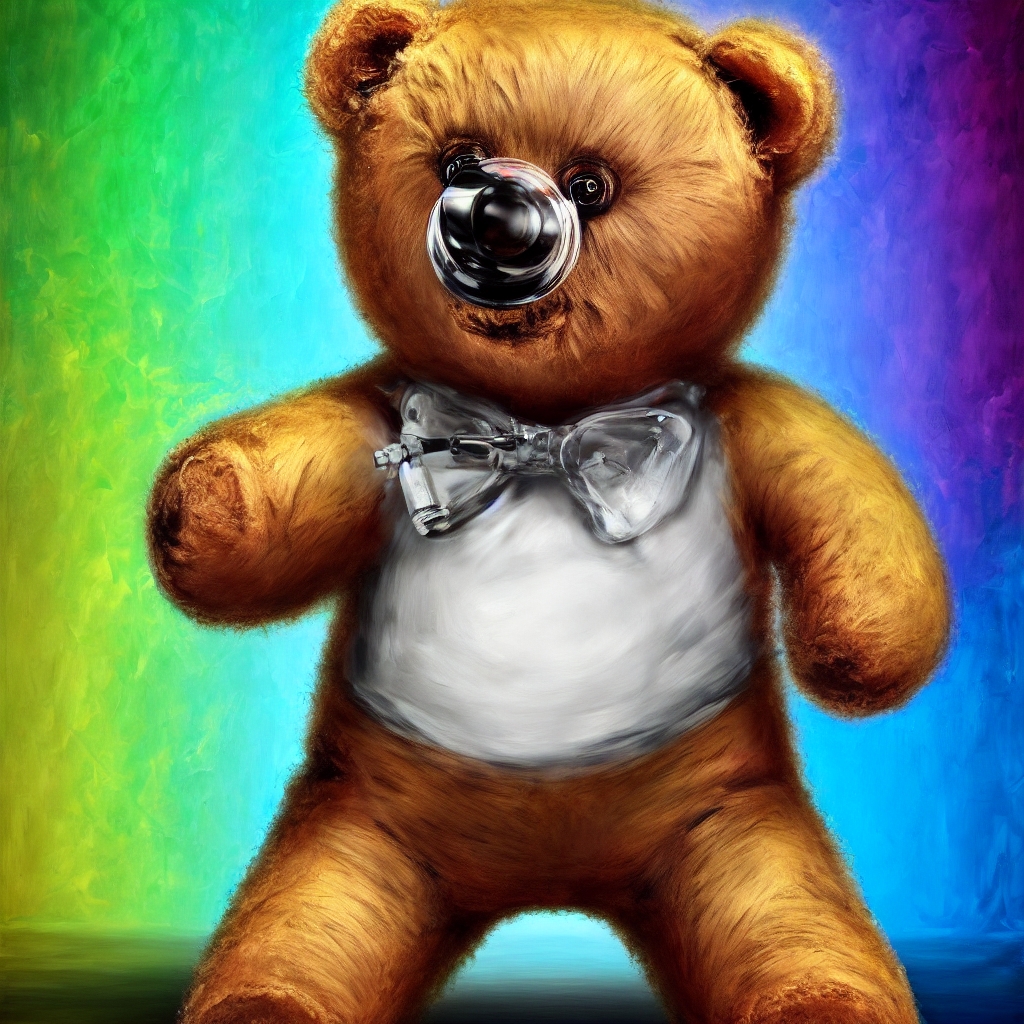}
    \caption{Our approach}
    \label{fig:ourgen5}    
    \end{subfigure}
\begin{subfigure}{0.48\textwidth}
    \centering      \includegraphics[width=\linewidth]{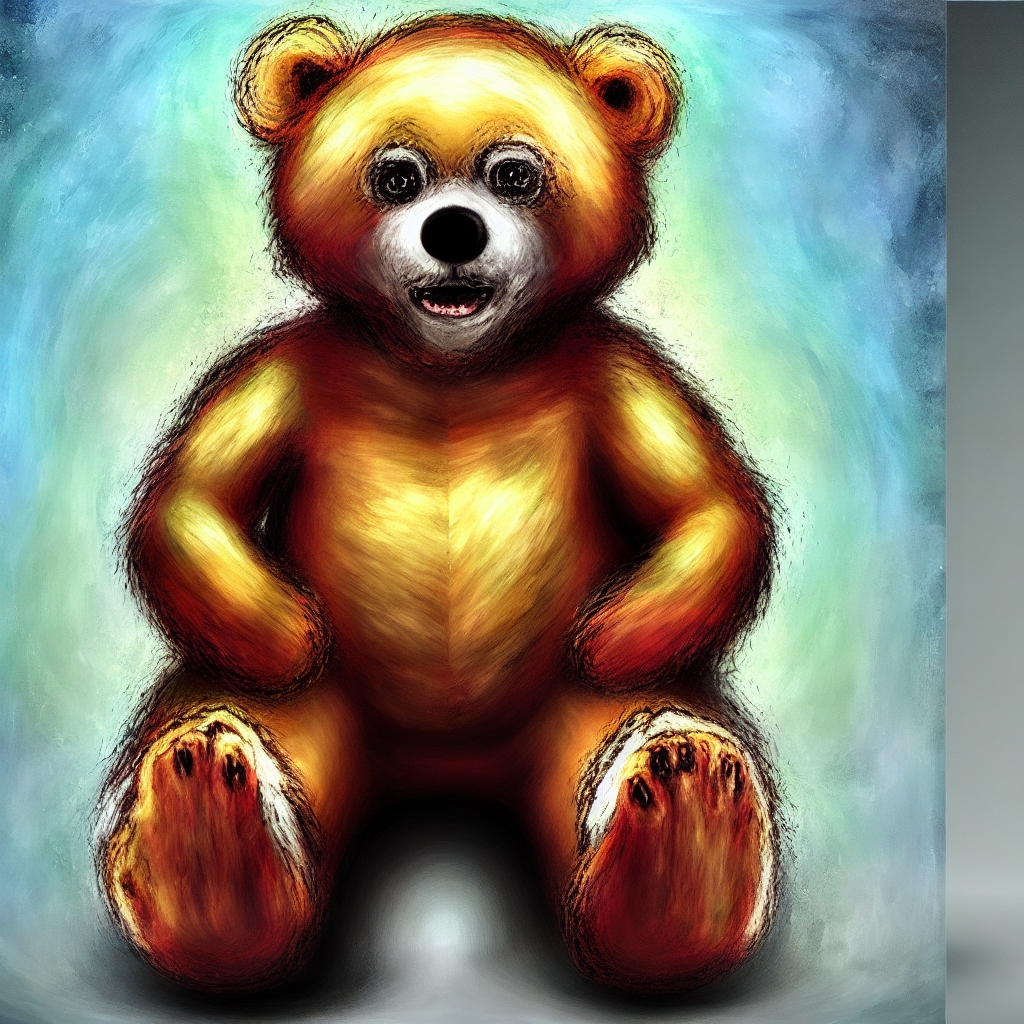}
    \caption{ScaleCrafter}
    \label{fig:scalegen5}    
    \end{subfigure}
\begin{subfigure}{0.48\textwidth}
    \centering      \includegraphics[width=\linewidth]{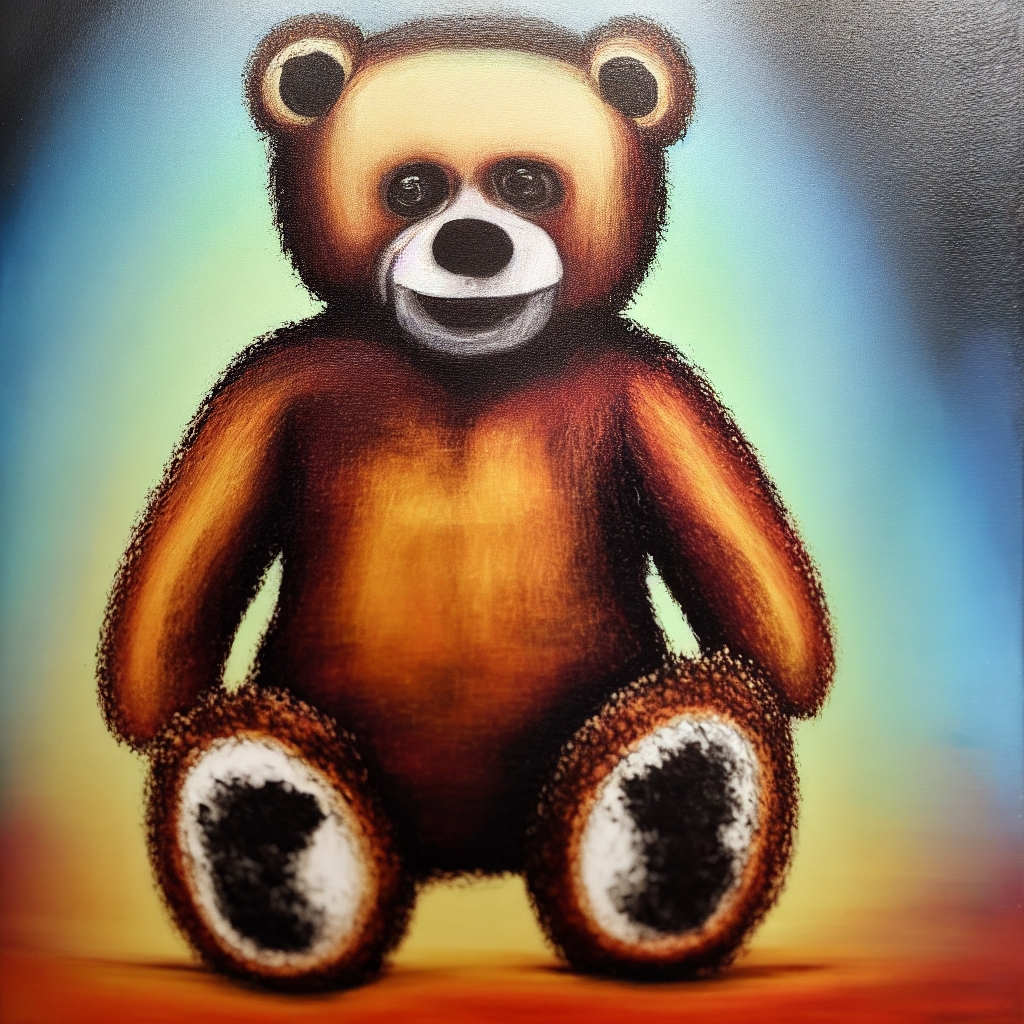}
    \caption{FouriScale}
    \label{fig:fourigen5}    
    \end{subfigure}
 \caption{A teddy bear mad scientist mixing chemicals depicted in oil painting style}
  \label{fig:5teddy}
\end{figure}

\begin{figure}
\centering
\begin{subfigure}{0.48\textwidth}
        \centering      \includegraphics[width=\linewidth]{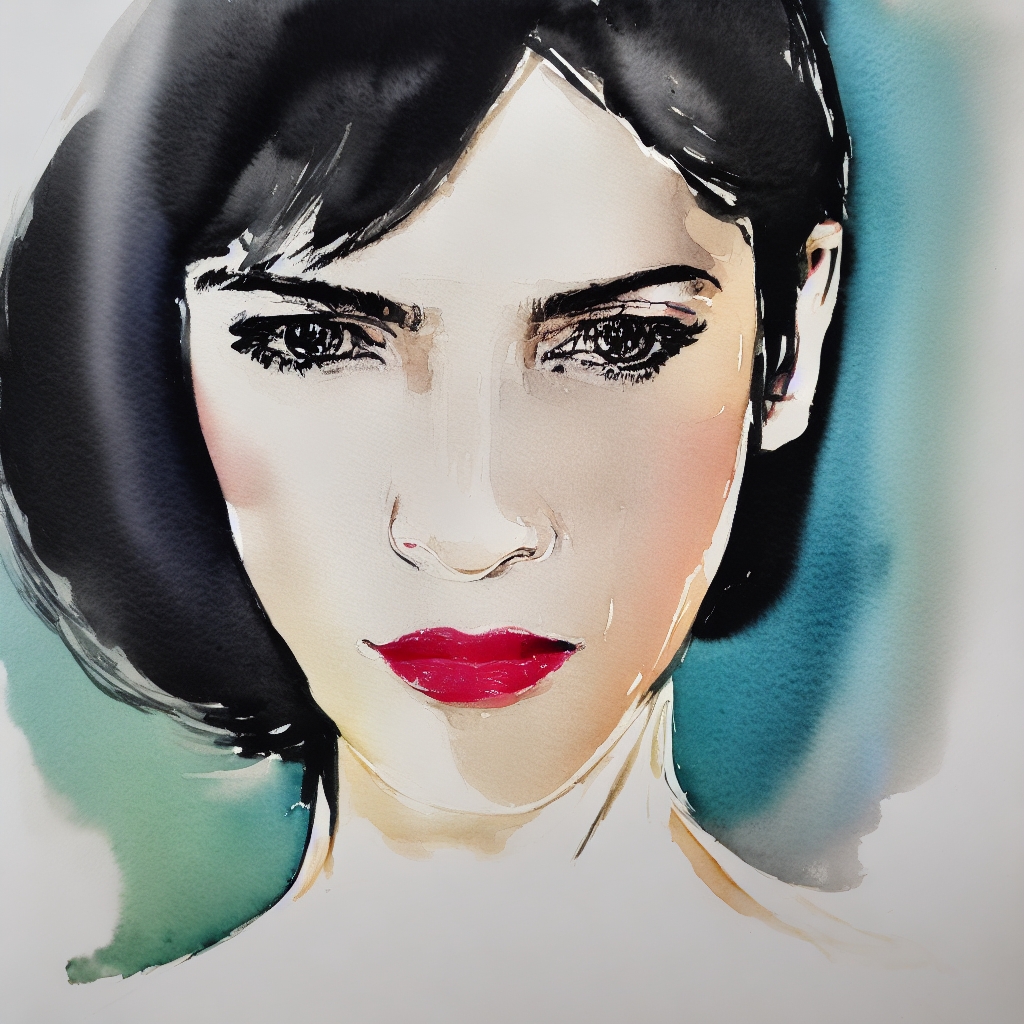}
        \caption{Our approach}
    \label{fig:ourgen6}    
    \end{subfigure}
  \begin{subfigure}{0.48\textwidth}
        \centering      \includegraphics[width=\linewidth]{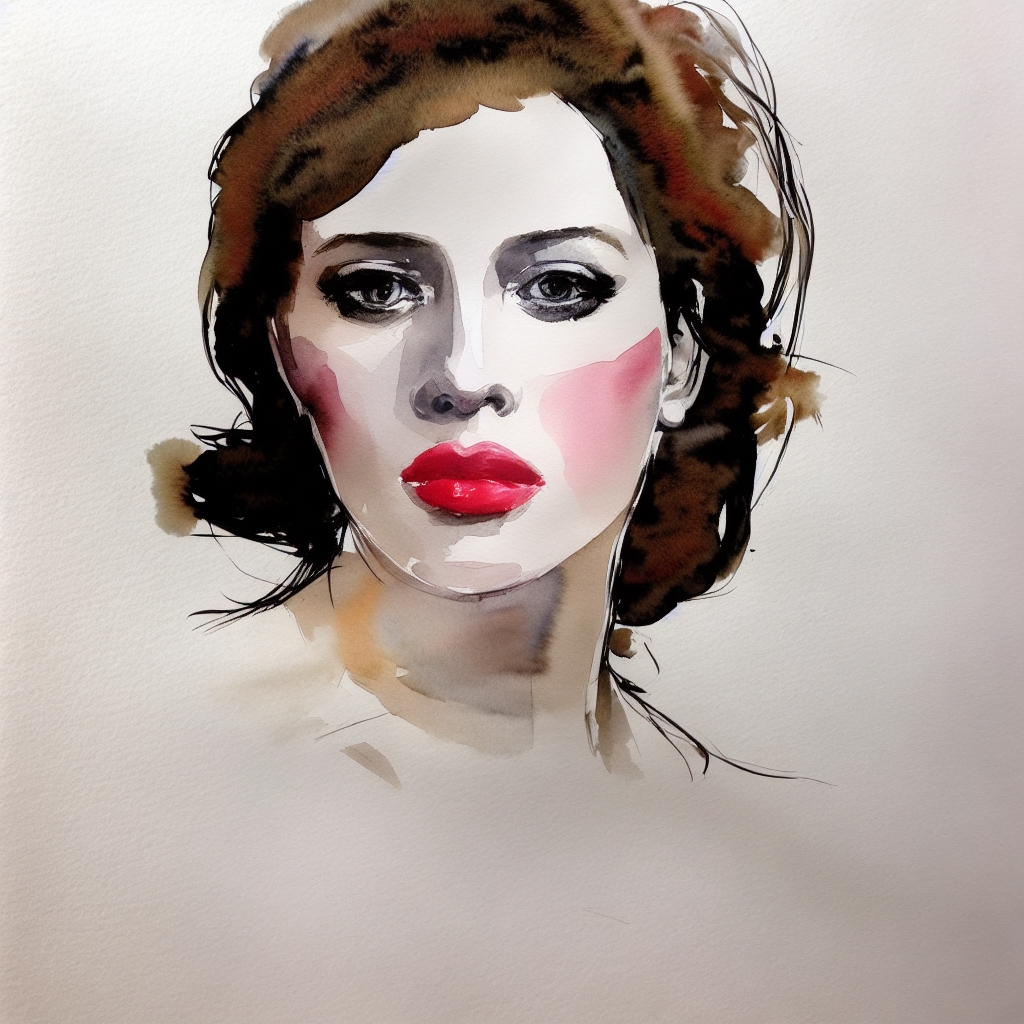}
        \caption{ScaleCrafter}
    \label{fig:scalegen6}    
    \end{subfigure}
    \begin{subfigure}{0.48\textwidth}
        \centering      \includegraphics[width=\linewidth]{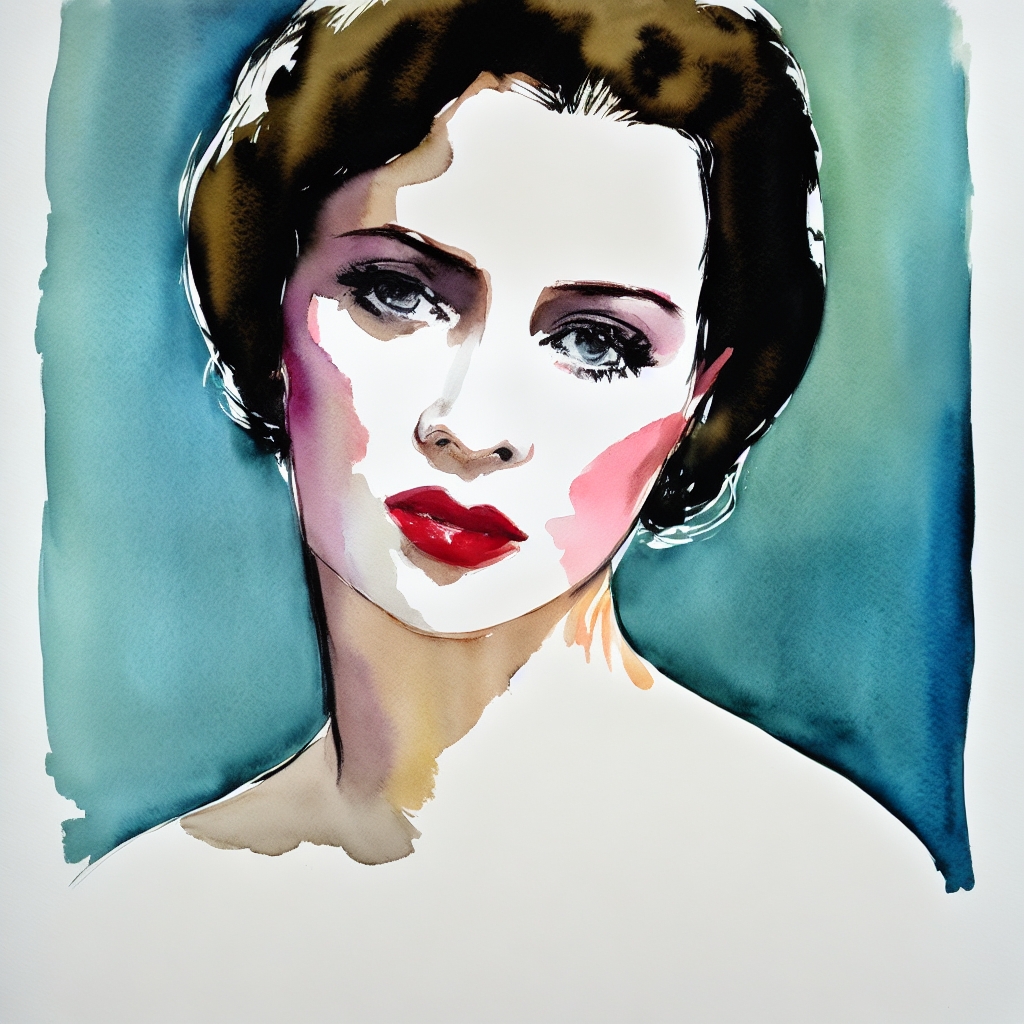}
        \caption{FouriScale}
    \label{fig:fourigen6}    
    \end{subfigure}    
    \caption{A watercolor portrait of a woman by Luke Rueda Studios and David Downton.}
    \label{fig:6women}
\end{figure}

\begin{figure} 
\centering
    \begin{subfigure}{0.48\textwidth}
        \centering      \includegraphics[width=\linewidth]{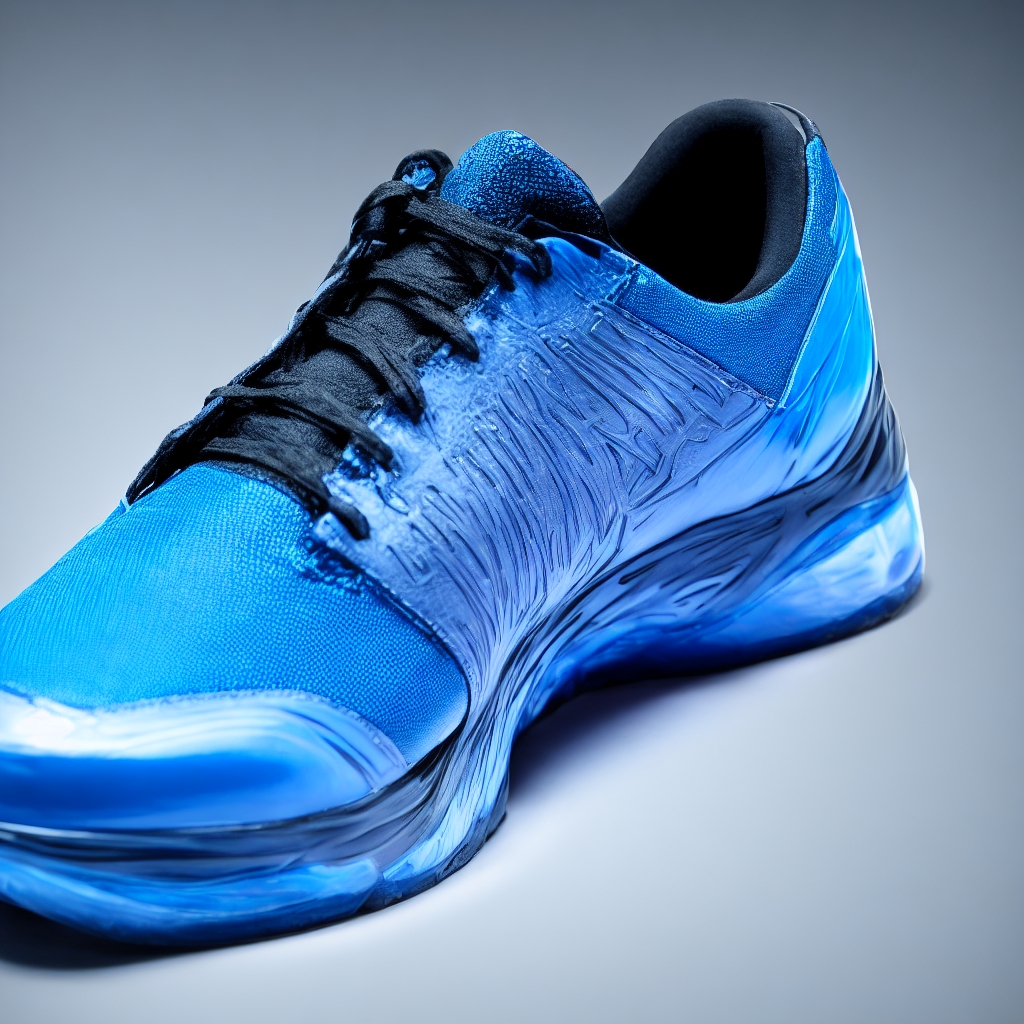}
        \caption{Our approach}
    \label{fig:ourgen7}    
    \end{subfigure}  
     \begin{subfigure}{0.48\textwidth}
        \centering      \includegraphics[width=\linewidth]{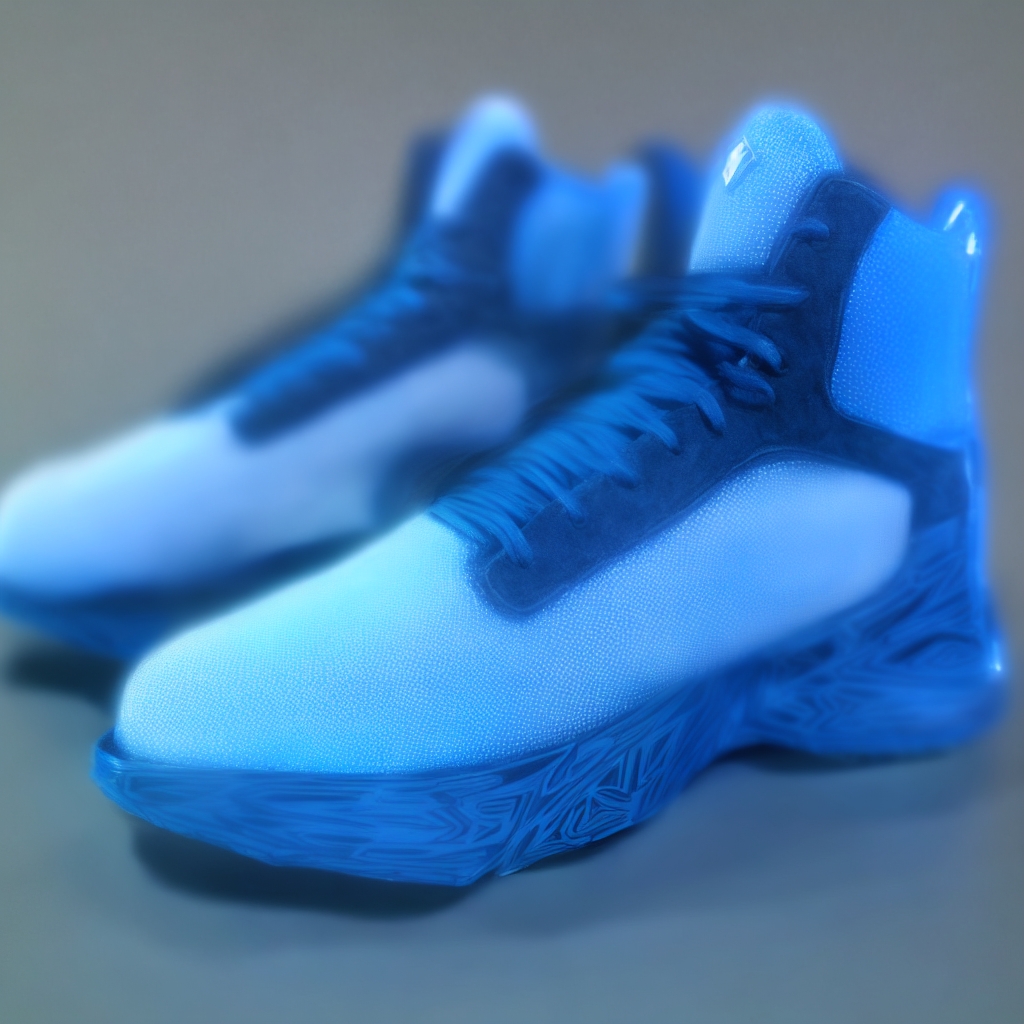}
        \caption{ScaleCrafter}
    \label{fig:scalegen7}    
    \end{subfigure}  
    \begin{subfigure}{0.48\textwidth}
        \centering      \includegraphics[width=\linewidth]{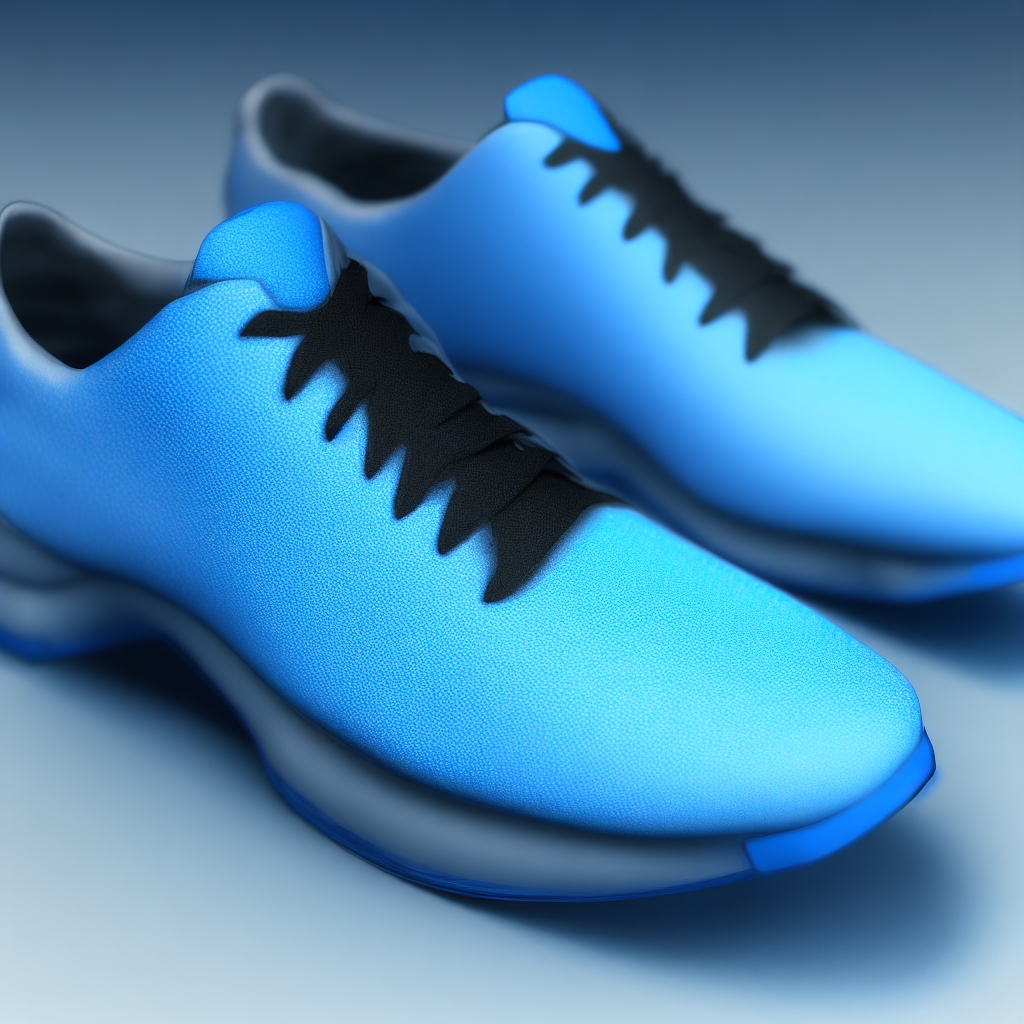}
        \caption{FouriScale}
    \label{fig:fourigen7}    
    \end{subfigure}      
    \caption{Side-view blue-ice sneaker inspired by Spiderman created by Weta FX}
    \label{fig:7shoe}
\end{figure}

\begin{figure}
\centering
    \begin{subfigure}{0.48\textwidth}
        \centering      \includegraphics[width=\linewidth]{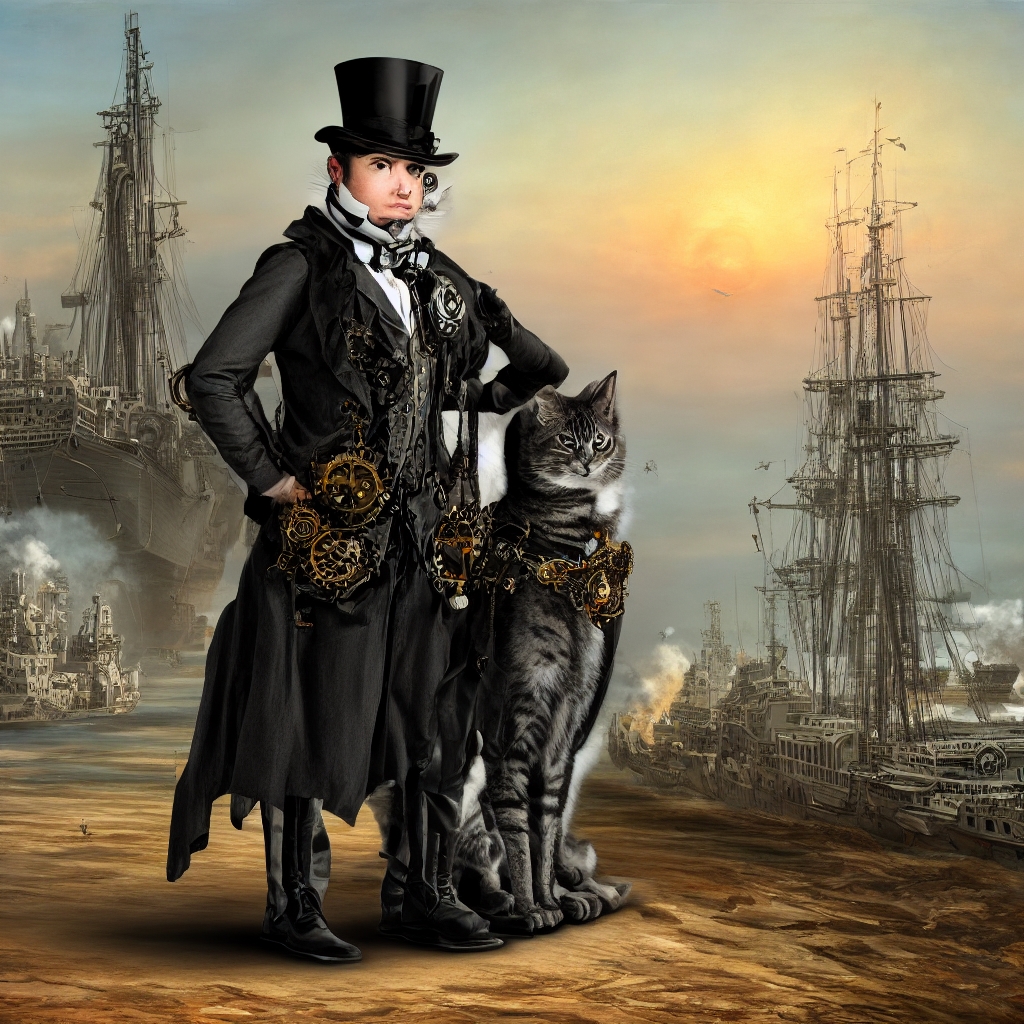}
        \caption{Our approach}
    \label{fig:ourgen8}    
    \end{subfigure}        
        \begin{subfigure}{0.48\textwidth}
        \centering      \includegraphics[width=\linewidth]{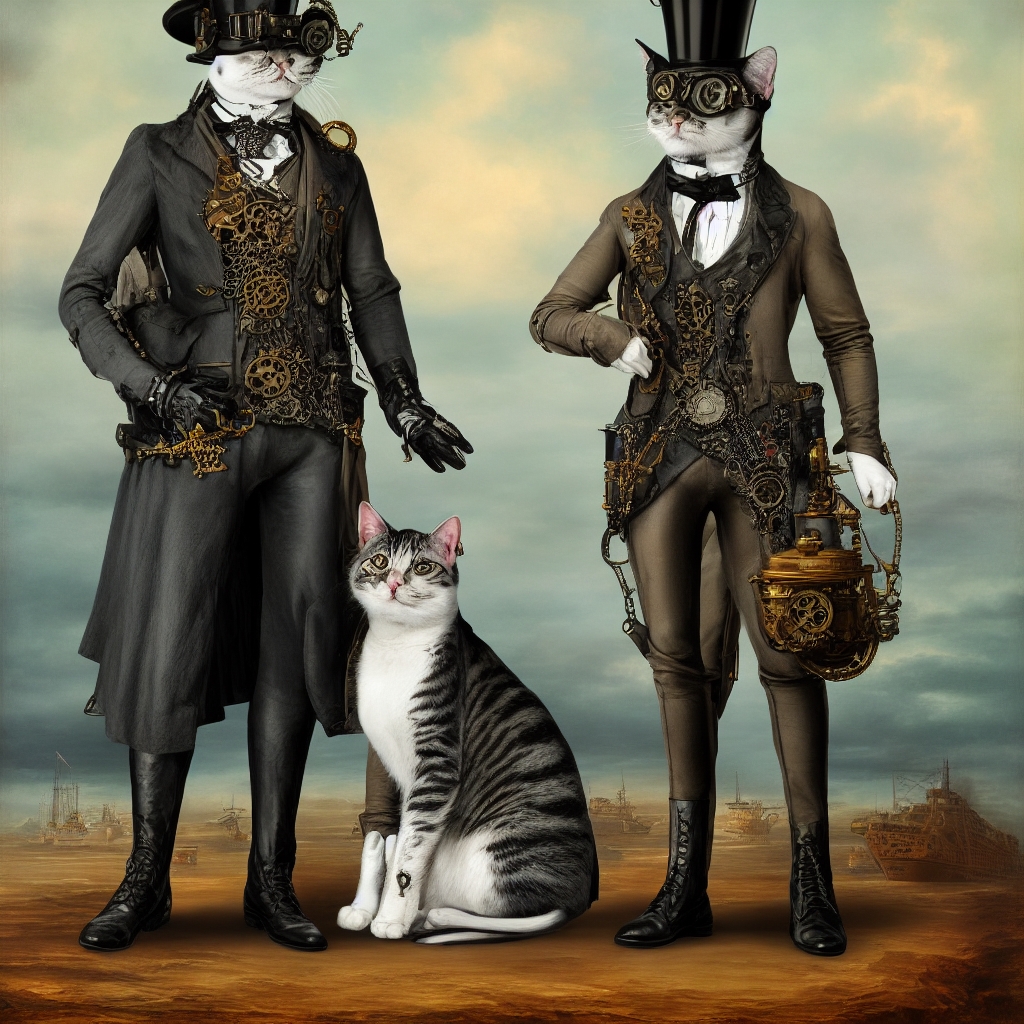}
        \caption{ScaleCrafter}
    \label{fig:scalegen8}    
    \end{subfigure}        
      \begin{subfigure}{0.48\textwidth}
        \centering      \includegraphics[width=\linewidth]{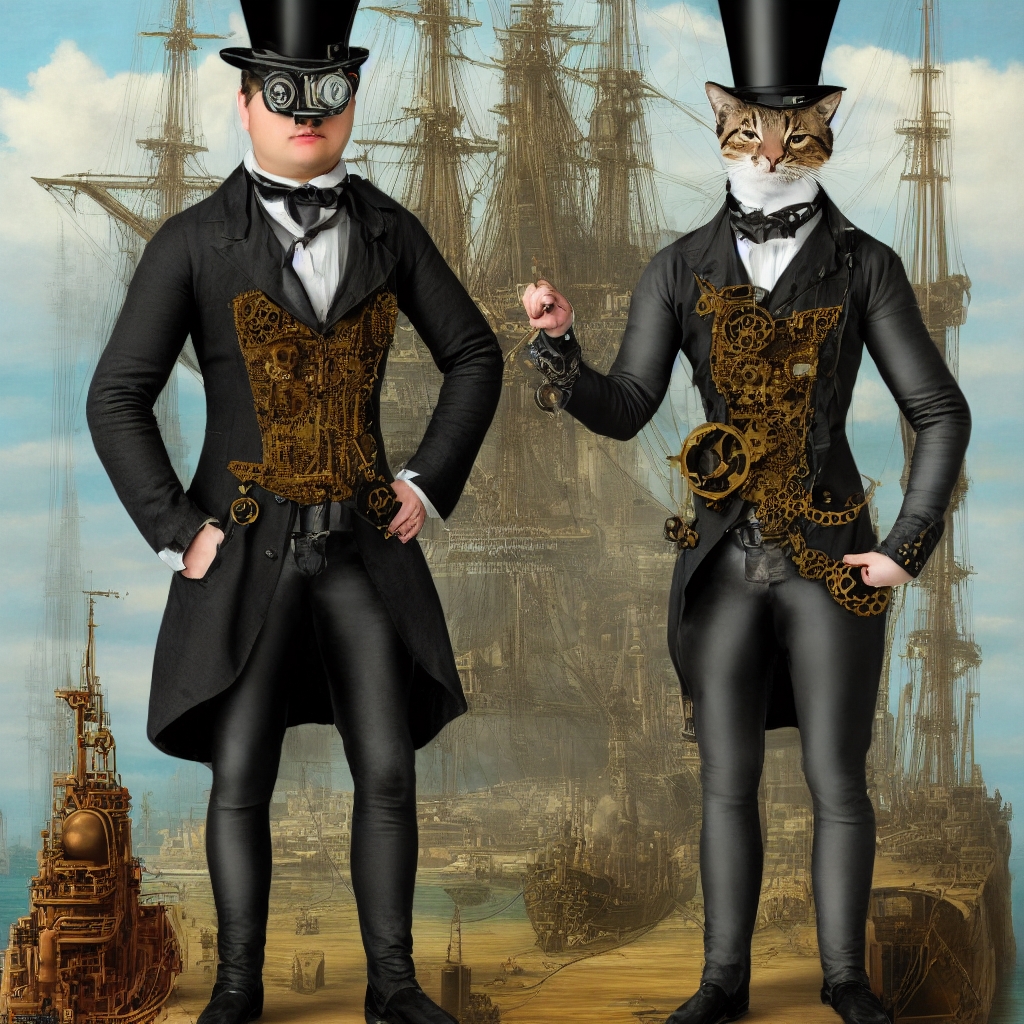}
        \caption{FouriScale}
    \label{fig:fourigen8}    
    \end{subfigure}      
 \caption{Two cats, grey and black, are wearing steampunk attire and standing in front of a ship in a heavily detailed painting.}    
 \label{fig:8cats}
\end{figure}

\begin{figure}[!h]
\centering
    \begin{subfigure}{0.48\textwidth}
        \centering      \includegraphics[width=\linewidth]{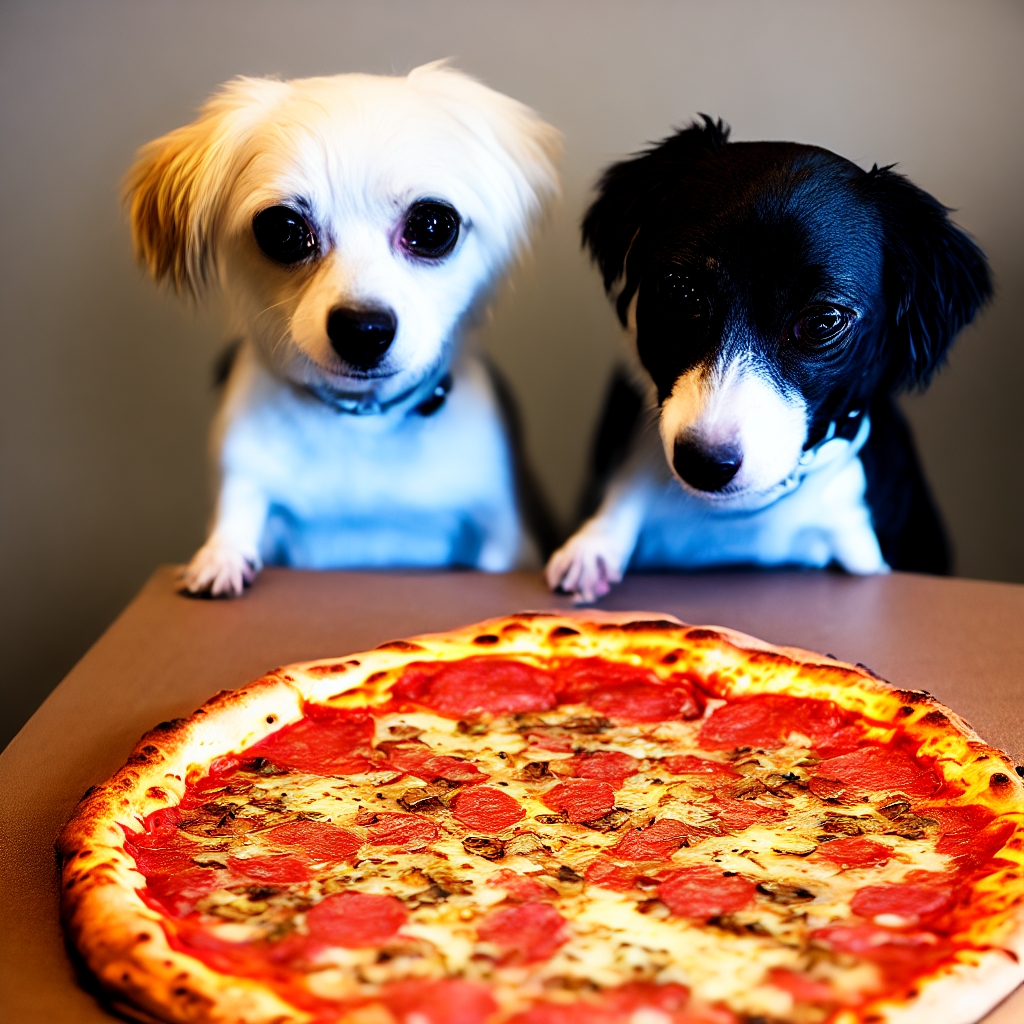}
        \caption{Our approach}
    \label{fig:ourgen9}    
    \end{subfigure} 
    \begin{subfigure}{0.48\textwidth}
        \centering      \includegraphics[width=\linewidth]{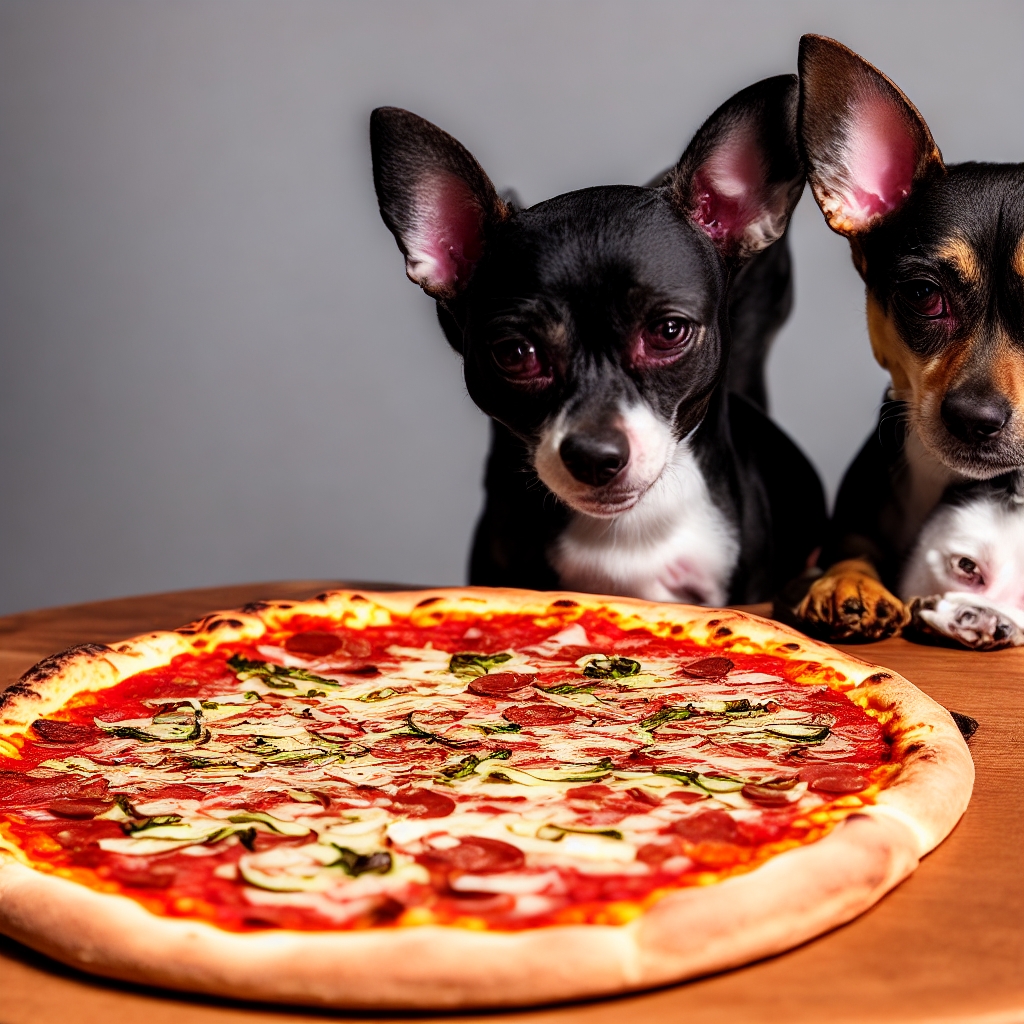}
        \caption{ScaleCrafter}
    \label{fig:scalegen9}    
    \end{subfigure} 
     \begin{subfigure}{0.48\textwidth}
        \centering      \includegraphics[width=\linewidth]{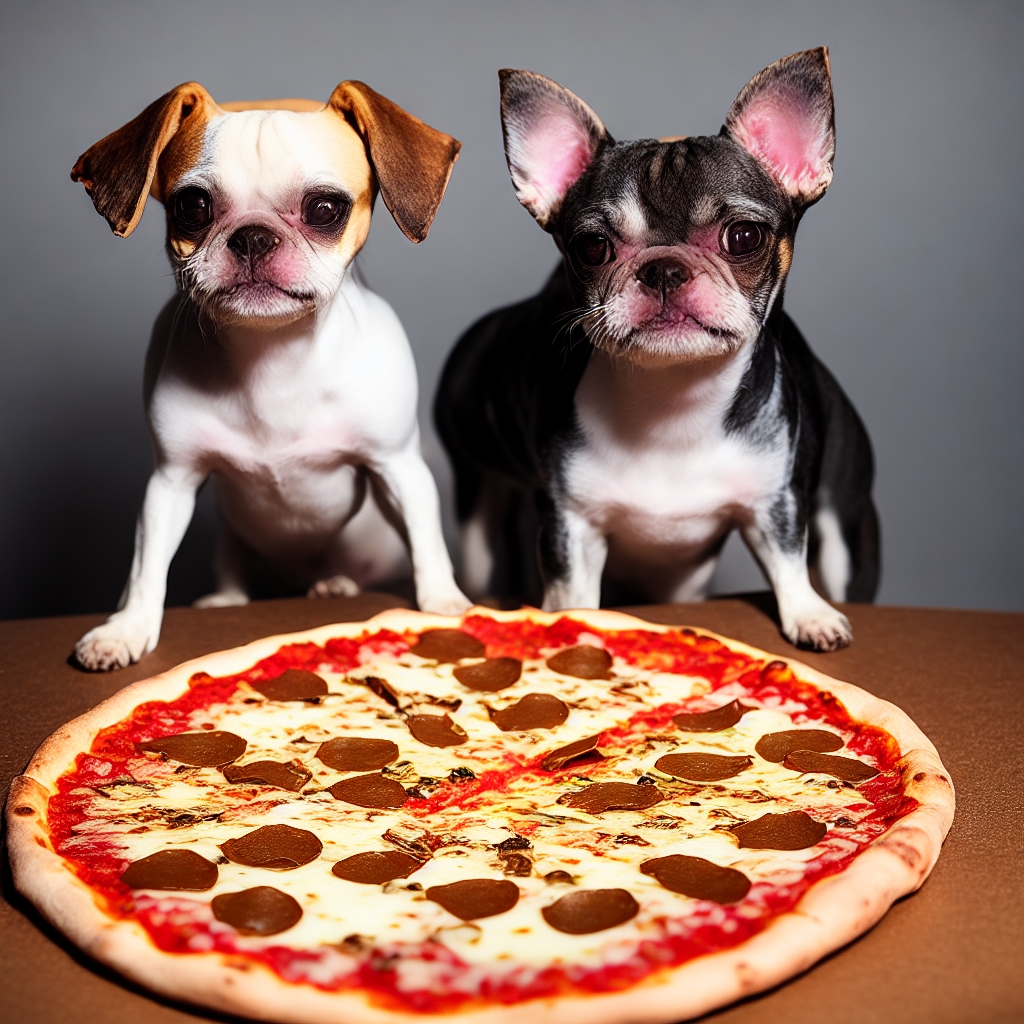}
        \caption{FouriScale}
    \label{fig:fourigen9}    
    \end{subfigure} 
    \caption{Two little dogs looking a large pizza sitting on a table}
\label{fig:9dogimages}
\end{figure}

\section{Discussion}
\label{sec:discussion}
We observe the superior two-pronged applicability of our method. Firstly, in the case of generating beyond training resolution images with Stable Diffusion, we attained competitive results, relying solely on modifying the convolution layer. We avoided the use of low-pass filtering and generating an extra conditional noise estimation, and depended solely on interpolating convolution layers. Again, in terms of qualitative analysis, we observe near-negligible differences in picture quality and adherence to input prompts. Secondly, we also demonstrated the general applicability of our method in interpolating layers beyond convolution, such as the fully-connected layers, when we interpolated whole deep neural networks to adapt for beyond-training-resolution inputs. Yet it is important to discuss further applicability of our methods in terms of reducing memory footprints for training deep neural networks.

\subsection{Reducing memory footprints with train-interpolate-fine-tune method}
One of the most crucial issues faced these days is the unavailability and high cost of DRAMs. A high-memory footprint of training data is significantly responsible for such inconvenience. Hence, utilizing our method, we believe the memory footprints could be significantly reduced. This is possible by shrinking or subsampling (notably audio, image, and video) data to lower dimensions, and training a neural network until convergence. It would be recommended to use the smallest $3\times3$ kernels of convolution for the initial training. Subsequently, the data and the neural network can be interpolated and then finetuned (for a few epochs) until satisfactory performance is achieved again. 

Since the majority of the training of the neural network is performed on low-dimensional data and neural networks, memory requirements are slashed in multiplicative order. For instance, if the initial training uses $\sim 4\times$ smaller input and neural network, the memory requirements would also be reduced $4\times$. This process is already shown in \Cref{sec:dnn-interp}. But we did not fine-tune to increase performance. But as we already saw, the performance drop post-interpolation is negligible or virtually non-existent. Hence, attaining a slightly better performance would require fine-tuning with a small learning rate for a few epochs. The memory consumption for that stage would be almost negligible compared to the initial training, allowing significant memory savings.

\subsection{Why dilation would not work in this case?}
A valid inquiry would be why dilation would be ineffective in such a case instead of interpolation. The reason is that dilation creates sparse convolution kernels and fully-connected layers, which would probably start to train from scratch during the fine-tuning stage. On the other hand, the interpolated values are close to the ideal kernel-filter or fully-connected layer, which would in turn result in faster convergence in the fine-tuning stage. For this reason, dilation may not be as effective in terms of fine-tuning neural networks for convergence, given higher-dimensional inputs. This, in turn, could totally nullify the memory reduction advantage as discussed earlier. 

Therefore, we understand the crucial advantage our method has over reducing memory footprints in neural networks. In the next section, we discuss the limitations and future applications of our work while concluding this paper.

% \subsection{Interpolation vs Extrapolation}
% \hl{Thus far, we have discussed the application of interpolation for supersampling convolution kernels and neural networks. However, extrapolation can also supersample signals. Hence, a valid query could be why extrapolation would not work in this case.}

% \hl{While interpolation utilizes known data to generate points within a specific range of a signal, extrapolation extends the signal beyond that range} \cite{press2007numericalinterpolationextrapolation}. \hl{This means, interpolation effectively magnifies the signal, while extrapolation extends the signal.}

% \hl{However, the scope of this work is to magnify the view of the neural networks for generating and classifying data beyond-training-resolutions. This demands magnification, not extension. Hence, extrapolation cannot be applied to convolution kernels/neural networks for beyond-training-resolution image generation or classification.}

% \hl{Still, many works extrapolated images utilizing deep-learning based image outpainting} \cite{Cheng_2022_CVPR_imageout1,Yu_2024_CVPR_imageout2,11073175_imageout3,CHEN2026130248_imageout4,sun2025sphericalnested_imageout5,10.1145/3746027.3755278_imageout6}. \hl{But, as the focus of this work is to supersample convolution kernels and neural networks (not images directly), we consider image/data extrapolation to be beyond the scope of this work}.

\section{Conclusion}
\label{sec:conclusion}
In this work, we showed a theoretical method of interpolating convolutional layers to generate beyond-training-resolution images for Stable Diffusion without training. Simultaneously, we provided empirical evidence that backed our theoretical analysis. Our method achieved competitive performance against the state-of-the-art with no modifications on the self-attention layers or low-pass filtering, unlike the methods compared with. Finally, we demonstrated the general applicability of our method on interpolating neural networks for accepting beyond-training-resolution inputs, where we interpolated all convolution layers and the first fully-connected layer to ensure adaptability. The results show negligible or virtually non-existent loss in performance. 

We believe this is revolutionary, as models can be shrunk or expanded based on the input size with almost no loss in performance, and then fine-tuned. This may save countless hours on adapting a model for a specific hardware, especially embedded systems, while reducing memory requirements for training.

\subsection{Limitations}
Although our method demonstrated competitive performance, it did not exceed that of the state-of-the-art. However, this may be attributed to not adjusting the attention layer as \cite{he2023scalecrafter} or utilizing low-pass filtering as \cite{fouriscale2024}. The adaptations made were purely towards the convolution layers. 

Again, in the case of interpolating deep neural networks, we could not interpolate attention layers. This is because it is tied to the channel dimension of the convolution projection layer, where interpolating across the hidden dimension would have no mathematical meaning. 

\subsection{Recommendations for Future Work}
Future work could focus on shrinking larger models, while using greater than $3\times3$ convolution on the base models for speed and efficiency. Also, allowing image-supersampling models to generate beyond-training resolution images with no training is an interesting avenue that could revolutionize deep-learning-based super-sampling. 

In the end, it can be said that the method for interpolating neural networks is effective, and could positively change how we train and adapt neural networks for various input sizes and devices, making the end-to-end pipeline of training and evaluating neural networks highly efficient.

% \bibliographystyle{unsrt}  
% \bibliography{references}  %%% Remove comment to use the external .bib file (using bibtex).
%%% and comment out the ``thebibliography'' section.
\printbibliography

@article{zhu2022efficientbicubic,
  title={An efficient bicubic interpolation implementation for real-time image processing using hybrid computing},
  author={Zhu, Yubin and Dai, Yonghang and Han, Kaining and Wang, Junchao and Hu, Jianhao},
  journal={Journal of Real-Time Image Processing},
  volume={19},
  number={6},
  pages={1211--1223},
  year={2022},
  publisher={Springer}
}

@article{Andrews01011998,
author = {George E. Andrews},
title = {The Geometric Series in Calculus},
journal = {The American Mathematical Monthly},
volume = {105},
number = {1},
pages = {36--40},
year = {1998},
publisher = {Taylor \& Francis},
doi = {10.1080/00029890.1998.12004846},
URL={https://doi.org/10.1080/00029890.1998.12004846},
eprint={https://doi.org/10.1080/00029890.1998.12004846}
}

@book{stewart2008calculus6,
  title     = {Calculus: Early Transcendentals},
  author    = {Stewart, James},
  edition   = {6},
  year      = {2008},
  publisher = {Thomson Brooks/Cole},
  address   = {Belmont, CA},
  note      = {See page 33 for periodicity: \(\cos(x + 2\pi) = \cos x\)}  
}

@book{digitalimageprocessing,
  title     = {Digital Image Processing},
  author    = {Gonzalez, Rafel C. and Woods, Richard E.},
  edition   = {4},
  year      = {2018},
  publisher = {Pearson Education Limited},
  address   = {Edinburgh Gate, Harlow, Essex CM20 2JE, England},
  note      = {See page 253, Equation 4-94, for the formulation of discrete 2D convolution. Page 208 for the formulation of Fourier series. Page 217, Equation 4-31 for the Fourier transform of sampled functions.}  
}

@inproceedings{smootheddilationwang,
author = {Wang, Zhengyang and Ji, Shuiwang},
title = {Smoothed Dilated Convolutions for Improved Dense Prediction},
year = {2018},
isbn = {9781450355520},
publisher = {Association for Computing Machinery},
address = {New York, NY, USA},
url = {https://doi.org/10.1145/3219819.3219944},
doi = {10.1145/3219819.3219944},
booktitle = {Proceedings of the 24th ACM SIGKDD International Conference on Knowledge Discovery \& Data Mining},
pages = {2486–2495},
numpages = {10},
location = {London, United Kingdom},
series = {KDD '18}
}

@article{welaratna2002effects,
  title={Effects of sampling and aliasing on the conversion of analog signals to digital format},
  author={Welaratna, Ruwan},
  journal={Sound and Vibration},
  volume={36},
  number={12},
  pages={12--13},
  year={2002}
}

@inproceedings{fouriscale2024,
author = {Huang, Linjiang and Fang, Rongyao and Zhang, Aiping and Song, Guanglu and Liu, Si and Liu, Yu and Li, Hongsheng},
title = {FouriScale: A Frequency Perspective on Training-Free High-Resolution Image Synthesis},
year = {2024},
isbn = {978-3-031-73253-9},
publisher = {Springer-Verlag},
address = {Berlin, Heidelberg},
url = {https://doi.org/10.1007/978-3-031-73254-6_12},
doi = {10.1007/978-3-031-73254-6_12},
booktitle = {Computer Vision – ECCV 2024: 18th European Conference, Milan, Italy, September 29–October 4, 2024, Proceedings, Part XII},
pages = {196–212},
numpages = {17},
location = {Milan, Italy}
}

@misc{ldm-main,
      title={High-Resolution Image Synthesis with Latent Diffusion Models}, 
      author={Robin Rombach and Andreas Blattmann and Dominik Lorenz and Patrick Esser and Björn Ommer},
      year={2022},
      eprint={2112.10752},
      archivePrefix={arXiv},
      primaryClass={cs.CV},
      url={https://arxiv.org/abs/2112.10752}, 
}

@inproceedings{lin2024accdiffusion,
  title={AccDiffusion : An Accurate Method for Higher-Resolution Image Generation},
  author={Lin, Zhihang and Lin, Mingbao and Meng, Zhao and Ji, Rongrong},
  booktitle={ECCV},
  year={2024}
}

@inproceedings{du2024demofusion,
  title={DemoFusion: Democratising High-Resolution Image Generation With No \$\$\$},
  author={Du, Ruoyi and Chang, Dongliang and Hospedales, Timothy and Song, Yi-Zhe and Ma, Zhanyu},
  booktitle={CVPR},
  year={2024}
}

@article{bar2023multidiffusion,
  title={MultiDiffusion: Fusing Diffusion Paths for Controlled Image Generation},
  author={Bar-Tal, Omer and Yariv, Lior and Lipman, Yaron and Dekel, Tali},
  journal={arXiv preprint arXiv:2302.08113},
  year={2023}
}

@inproceedings{zhang2025hidiffusion,
  title={Hidiffusion: Unlocking higher-resolution creativity and efficiency in pretrained diffusion models},
  author={Zhang, Shen and Chen, Zhaowei and Zhao, Zhenyu and Chen, Yuhao and Tang, Yao and Liang, Jiajun},
  booktitle={European Conference on Computer Vision},
  pages={145--161},
  year={2024},
  organization={Springer}
}

@inproceedings{he2023scalecrafter,
  title={Scalecrafter: Tuning-free higher-resolution visual generation with diffusion models},
  author={He, Yingqing and Yang, Shaoshu and Chen, Haoxin and Cun, Xiaodong and Xia, Menghan and Zhang, Yong and Wang, Xintao and He, Ran and Chen, Qifeng and Shan, Ying},
  booktitle={The Twelfth International Conference on Learning Representations},
  year={2024}
}

@inproceedings{
schuhmann2022laionb,
title={{LAION}-5B: An open large-scale dataset for training next generation image-text models},
author={Christoph Schuhmann and Romain Beaumont and Richard Vencu and Cade W Gordon and Ross Wightman and Mehdi Cherti and Theo Coombes and Aarush Katta and Clayton Mullis and Mitchell Wortsman and Patrick Schramowski and Srivatsa R Kundurthy and Katherine Crowson and Ludwig Schmidt and Robert Kaczmarczyk and Jenia Jitsev},
booktitle={Thirty-sixth Conference on Neural Information Processing Systems Datasets and Benchmarks Track},
year={2022},
url={https://openreview.net/forum?id=M3Y74vmsMcY}
}

@misc{stable-diffusion-v1-5,
  author       = {{CompVis} and {Stability AI}},
  title        = {Stable Diffusion v1-5},
  year         = 2024,
  howpublished = {\url{https://huggingface.co/stable-diffusion-v1-5/stable-diffusion-v1-5}},
  note         = {Accessed: 2025-06-28}
}

@misc{nanobanana,
  title        = {Introducing Gemini 2.5 Flash Image, our state-of-the-art image model},
  author       = {Fortin, Alisa and Vernade, Guillaume and Kampf, Kat and Reshi, Ammaar},
  year         = {2025},
  month        = aug,
  day          = {26},
  howpublished = {https://developers.googleblog.com/introducing-gemini-2-5-flash-image/},
  note         = {Google Developers Blog},
}

@misc{labs2025flux1kontextflowmatching,
      title={FLUX.1 Kontext: Flow Matching for In-Context Image Generation and Editing in Latent Space}, 
      author={Black Forest Labs and Stephen Batifol and Andreas Blattmann and Frederic Boesel and Saksham Consul and Cyril Diagne and Tim Dockhorn and Jack English and Zion English and Patrick Esser and Sumith Kulal and Kyle Lacey and Yam Levi and Cheng Li and Dominik Lorenz and Jonas Müller and Dustin Podell and Robin Rombach and Harry Saini and Axel Sauer and Luke Smith},
      year={2025},
      eprint={2506.15742},
      archivePrefix={arXiv},
      primaryClass={cs.GR},
      url={https://arxiv.org/abs/2506.15742}, 
}

@misc{wu2025qwenimagetechnicalreport,
      title={Qwen-Image Technical Report}, 
      author={Chenfei Wu and Jiahao Li and Jingren Zhou and Junyang Lin and Kaiyuan Gao and Kun Yan and Sheng-ming Yin and Shuai Bai and Xiao Xu and Yilei Chen and Yuxiang Chen and Zecheng Tang and Zekai Zhang and Zhengyi Wang and An Yang and Bowen Yu and Chen Cheng and Dayiheng Liu and Deqing Li and Hang Zhang and Hao Meng and Hu Wei and Jingyuan Ni and Kai Chen and Kuan Cao and Liang Peng and Lin Qu and Minggang Wu and Peng Wang and Shuting Yu and Tingkun Wen and Wensen Feng and Xiaoxiao Xu and Yi Wang and Yichang Zhang and Yongqiang Zhu and Yujia Wu and Yuxuan Cai and Zenan Liu},
      year={2025},
      eprint={2508.02324},
      archivePrefix={arXiv},
      primaryClass={cs.CV},
      url={https://arxiv.org/abs/2508.02324}, 
}

@misc{openai2025gpt4oimage,
  title        = {Introducing 4o Image Generation},
  author       = {{OpenAI}},
  year         = {2025},
  month        = mar,
  day          = {25},
  howpublished = {\url{https://openai.com/index/introducing-4o-image-generation/}},
  note         = {OpenAI Product Blog},
}

@misc{bbc2026articleram,
  author       = {Gerken,Tom},
  title        = {Why everything from your phone to your PC may get pricier in 2026},
  year         = {2026},
  url          = {https://www.bbc.com/news/articles/c1dzdndzlxqo},
  note         = {Accessed: 2026-02-08}
}

@misc{micronexitram,
  author       = {Plungy, Mark and Kumar, Satya},
  title        = {Micron Announces Exit from Crucial Consumer Business},
  day          = {3},
  month        = dec,
  year         = {2025},
  url          = {https://investors.micron.com/news-releases/news-release-details/micron-announces-exit-crucial-consumer-business},
  note         = {Accessed: 2026-02-08}
}

@misc{gpuramprice,
  author       = {Martindale, Jon},
  title        = {Rumor Tips 15\% Price Hikes on GPUs From Asus, Gigabyte},
  day          = {20},
  month        = jan,
  year         = {2026},
  url          = {https://www.pcmag.com/news/rumor-tips-15-price-hikes-on-gpus-from-asus-gigabyte?test_uuid=04IpBmWGZleS0I0J3epvMrC&test_variant=B},
  note         = {Accessed: 2026-02-08}
}

@InProceedings{sha256,
author="Gilbert, Henri
and Handschuh, Helena",
editor="Matsui, Mitsuru
and Zuccherato, Robert J.",
title="Security Analysis of SHA-256 and Sisters",
booktitle="Selected Areas in Cryptography",
year="2004",
publisher="Springer Berlin Heidelberg",
address="Berlin, Heidelberg",
pages="175--193",
isbn="978-3-540-24654-1"
}

@article{fid,
  title={Gans trained by a two time-scale update rule converge to a local nash equilibrium},
  author={Heusel, Martin and Ramsauer, Hubert and Unterthiner, Thomas and Nessler, Bernhard and Hochreiter, Sepp},
  journal={Advances in neural information processing systems},
  volume={30},
  year={2017}
}

@inproceedings{kid,
title={Demystifying {MMD} {GAN}s},
author={Mikołaj Bińkowski and Dougal J. Sutherland and Michael Arbel and Arthur Gretton},
booktitle={International Conference on Learning Representations},
year={2018},
url={https://openreview.net/forum?id=r1lUOzWCW},
}

@misc{relaion,
  author={{LAION eV}},
  title = {{relaion-high-resolution}},
  howpublished = {\url{https://huggingface.co/datasets/laion/relaion-high-resolution}},
  note = {Accessed: 2026-02-19}
}

@misc{nvidiaa100,
    author = {{NVIDIA}},
    title = {{NVIDIA A100 Tensor Core GPU}},
    howpublished = {\url{https://www.nvidia.com/en-us/data-center/a100/}},
    note = {Accessed: 2026-02-20}
}

@misc{intelxeon,
  author={{Intel}},
  title = {{Intel® Xeon® 6 Processors}},
  howpublished = {\url{https://www.intel.com/content/www/us/en/products/details/processors/xeon.html}},
  note = {Accessed: 2026-02-20}
}

@article{li2021surveyconvspecialcase,
  title={A survey of convolutional neural networks: analysis, applications, and prospects},
  author={Li, Zewen and Liu, Fan and Yang, Wenjie and Peng, Shouheng and Zhou, Jun},
  journal={IEEE transactions on neural networks and learning systems},
  volume={33},
  number={12},
  pages={6999--7019},
  year={2021},
  publisher={IEEE}
}

@misc{simonyan2015deepconvolutionalnetworkslargescalevggnet,
      title={Very Deep Convolutional Networks for Large-Scale Image Recognition}, 
      author={Karen Simonyan and Andrew Zisserman},
      year={2015},
      eprint={1409.1556},
      archivePrefix={arXiv},
      primaryClass={cs.CV},
      url={https://arxiv.org/abs/1409.1556}, 
}

@inproceedings{he2016deepresnet,
  title={Deep residual learning for image recognition},
  author={He, Kaiming and Zhang, Xiangyu and Ren, Shaoqing and Sun, Jian},
  booktitle={Proceedings of the IEEE conference on computer vision and pattern recognition},
  pages={770--778},
  year={2016}
}

@inproceedings{vit,
title={An Image is Worth 16x16 Words: Transformers for Image Recognition at Scale},
author={Alexey Dosovitskiy and Lucas Beyer and Alexander Kolesnikov and Dirk Weissenborn and Xiaohua Zhai and Thomas Unterthiner and Mostafa Dehghani and Matthias Minderer and Georg Heigold and Sylvain Gelly and Jakob Uszkoreit and Neil Houlsby},
booktitle={International Conference on Learning Representations},
year={2021},
url={https://openreview.net/forum?id=YicbFdNTTy}
}
\appendix
\section{Theorems and Proofs}
\label{sec:theorems-and-proofs}
In this section, we provide our theorems, corollary, and their respective proof.

\ratiotheorem*

\begin{proof}
\label{pf:ratio}
Let \(u\), \(v\), \&, \(w\) be three frequencies of a discrete cosine function with a phase \(\phi\) and resolution $M \times N \times C$ with amplitude \(A\). Where $M,N,\& \ C$ are height, width, and channel dimensions respectively. We prove that the ratio of amplitudes of the cosine function and its supersampled counterpart is proportional to the ratio of their resolutions. Let the proportionality constant be $K$.  The Discrete Fourier Transform (DFT) of the cosine function for its matching frequencies $(u, v, w)$ is as follows:
\[
=DFT_{M\times N}\{A cos(2\pi\frac{u}{M}m+2\pi\frac{v}{N}n + 2\pi\frac{w}{C}c +\phi)\}
\]
\begin{align*}
    =&DFT_{M\times N}\Biggl\{\frac{A}{2}\exp(j(2\pi\frac{u}{M}m+2\pi\frac{v}{N}n+2\pi\frac{w}{C}c+\phi)) \\
    &+\frac{A}{2}\exp(-j(2\pi\frac{u}{M}m+2\pi\frac{v}{N}n+2\pi\frac{w}{C}c))\Biggr\}
\end{align*}
First, we expand the term with positive power along with the Fourier coefficient:
\[
 = \frac{A}{2}\sum_{m=0}^{M-1}\sum_{n=0}^{N-1}\sum_{c=0}^{C-1}\exp(j(2\pi\frac{u}{M}m+2\pi\frac{v}{N}n+2\pi\frac{w}{C}c+\phi))\cdot\exp(-j(2\pi\frac{u}{M}m+2\pi\frac{v}{N}n+2\pi\frac{w}{C}c)) 
\]
\[
 = \frac{A}{2}\sum_{m=0}^{M-1}   \exp(j2\pi\frac{u}{M}m+\phi)\cdot\exp(-j2\pi\frac{u}{M}m)\cdot\sum_{n=0}^{N-1}\exp(j2\pi\frac{v}{N}n)\cdot\exp(-j2\pi\frac{v}{N}n) 
\]
\[
\cdot\sum_{c=0}^{C-1}\exp(j2\pi\frac{w}{C}c)\cdot\exp(-j2\pi\frac{w}{C}c) 
\]
\[
 = \frac{A}{2}\sum_{m=0}^{M-1}   \exp(j\phi)\cdot\sum_{n=0}^{N-1} \exp(0) \cdot \sum_{c=0}^{C-1} \exp(0)
\]
\[
 = \frac{A\cdot\exp(j\phi)}{2}\sum_{m=0}^{M-1}\sum_{n=0}^{N-1}\sum_{c=0}^{C-1}1
\]

\[
= \boxed{
 \frac{AMNC\cdot\exp(j\phi)}{2}
}
\]
Subsequently, we expand the term with the negative power:
\[
 = \frac{A}{2}\sum_{m=0}^{M-1}\sum_{n=0}^{N-1}\sum_{c=0}^{C-1}   \exp(-j(2\pi\frac{u}{M}m+2\pi\frac{v}{N}n+2\pi\frac{w}{C}c))\cdot\exp\left(-j(2\pi\frac{u}{M}m+2\pi\frac{v}{N}n+2\pi\frac{w}{C}v)\right) 
\]
\[
 = \frac{A}{2}\sum_{m=0}^{M-1} \exp(-j(4\pi\frac{u}{M}m)) \cdot \sum_{n=0}^{N-1} \exp\left(-j(4\pi\frac{v}{N}n)\right) \sum_{c=0}^{C-1}  \exp\left(-j(4\pi\frac{w}{C}c)\right) 
\]
The expression above is a geometric series, hence we can apply the corresponding formula where $r_1=\exp(-j4\pi\frac{u}{M})$, $r_2=\exp(-j4\pi\frac{v}{N})$, and $r_3=\exp(-j4\pi\frac{c}{W})$ \cite{Andrews01011998}:
\[
 = \frac{A}{2}\sum_{m=0}^{M-1} \frac{1-\left(\exp(-j4\pi\frac{u}{M})\right)^M}{1-\exp(-j4\pi\frac{u}{M})} \cdot \sum_{n=0}^{N-1} \frac{1-\left(\exp(-j4\pi\frac{v}{N})\right)^N}{1-\exp(-j4\pi\frac{v}{N})} \sum_{c=0}^{C-1}\frac{1-\left(\exp(-j4\pi\frac{w}{C})\right)^C}{1-\exp(-j4\pi\frac{w}{C})}
\]
\[
 = \frac{A}{2}\sum_{m=0}^{M-1} \frac{1-\exp(-j4\pi u)}{1-\exp(-j4\pi\frac{u}{M})} \cdot \sum_{n=0}^{N-1} \frac{1-\exp(-j4\pi v)}{1-\exp(-j4\pi\frac{v}{N})} \sum_{c=0}^{C-1}\frac{1-\exp(-j4\pi w)}{1-\exp(-j4\pi\frac{w}{C})}
\]
\[
 = \frac{A}{2}\sum_{m=0}^{M-1} \frac{1-1}{1-\exp(-j4\pi\frac{u}{M})} \cdot \sum_{n=0}^{N-1} \frac{1-1}{1-\exp(-j4\pi\frac{v}{N})} \sum_{c=0}^{C-1}\frac{1-1}{1-\exp(-j4\pi\frac{w}{C})}= 0
\]
Here, both $r_1^M=r_2^N=r_3^C=1$ as $u, v$ \& $w$ are integers \cite{stewart2008calculus6}. This results in an expression with a negative power that evaluates to zero. Thus, the final result of the DFT:
\begin{equation}
\label{eqn:MN-th}
\boxed{
DFT_{M\times N\times}\{A cos(2\pi\frac{u}{M}m+2\pi\frac{v}{N}n+2\pi\frac{w}{C}c+\phi)\} = \frac{AMNC\cdot\exp(j\phi)}{2}}
\end{equation}

Similarly, for dimensions $aM, bN$, \& $C$ where the corresponding frequencies are $u,v$ \& $w$, where $a,b \in R^+$. Since the channel dimension, $C$, is not supersampled, we keep it as is:
\[
= DFT\{A cos(2\pi\frac{u}{aM\cdot\frac{\Delta T}{a}}m\cdot\frac{\Delta T}{a}+2\pi\frac{v}{bN\cdot\frac{\Delta T}{b}}n\cdot\frac{\Delta T}{b} +2\pi\frac{w}{C}c+\phi\}
\]
\[
= DFT\{A cos(2\pi\frac{u}{M}\cdot\frac{m}{a}+2\pi\frac{v}{N}\cdot\frac{n}{b}+2\pi\frac{w}{C}c +\phi)\}
\]
\[
= \frac{A}{2}\sum_{m=0}^{aM-1}\exp(j2\pi\frac{u}{aM}m+\phi) \cdot \exp(-j2\pi\frac{u}{aM}m) \sum_{n=0}^{bN-1}\exp(j2\pi\frac{v}{bN}n) \cdot \exp(-j2\pi\frac{v}{bN}n)
\]
\[
\cdot \sum_{c=0}^{C-1}\exp(j2\pi\frac{w}{C}c)\cdot\exp(-j2\pi\frac{w}{C}c) 
\]
\[
\vdots
\]
\[
= \frac{AabMNC\cdot\exp(j\phi)}{2}
\]
Since the term with negative power will result in zero, as shown before, we skip its expansion.
\begin{equation}
\label{eqn:aMbN-th}
\boxed{
DFT_{aM\times bN}\{A cos(2\pi\frac{u}{aM}m+2\pi\frac{v}{bN}n+2\pi\frac{w}{C}c+\phi)\} = \frac{AabMNC\cdot\exp(j\phi)}{2}}
\end{equation}

We divide Equation \ref{eqn:aMbN-th} by \ref{eqn:MN-th} to get the amplitude-ratio of the supersampled image and its initial counterpart:
\[
ratio_{a} = \frac{DFT_{aM\times bN}\{A cos(2\pi\frac{u}{aM}m+2\pi\frac{v}{bN}n+2\pi\frac{w}{C}c+\phi)\}} {DFT_{M\times N}\{A cos(2\pi\frac{u}{M}m+2\pi\frac{v}{N}n+2\pi\frac{w}{C}c+\phi)\}} = \frac{\frac{AabMNC\cdot\exp(j\phi)}{2}}{\frac{AMNC\cdot\exp(j\phi)}{2}}
\]
\begin{equation}
\label{eq:ratio-dft-th}
\boxed{ratio_{a}=a \cdot b}
\end{equation}
We also find the ratio of their resolutions:

\[
ratio_{r}=\frac{Resolution \ of \ aM\times bN \times C} {Resolution \ of \ M\times N \times C} = \frac{abMNC}{MNC}
\]
\begin{equation}
\label{eq:ratio-res-th}
\boxed{ratio_{r}= a\cdot b}
\end{equation}
Finally, we show the proportionality of both ratios presented in Equations \ref{eq:ratio-dft-th} and \ref{eq:ratio-res-th} respectively:
\begin{equation}
\label{eq:prop-const-th}
\boxed{K=\frac{ratio_{a}}{ratio_{r}}=\frac{a\cdot b}{a \cdot b} = 1}
\end{equation}
We observe that the proportionality is indeed constant, where $\boxed{K=1}$.
\end{proof}

\dilationcorollary*
\begin{proof}
   Based on Theorem \ref{th:ratio}, we can show that, due to dilation, the amplitude of a 3D cosine wave is attenuated by the factor of its super-resolution. We continue from the Euler term with positive power to find the amplitude of the discrete cosine wave of resolution $aM\times bM\times C$, as the term with negative power will give zero from Proof \ref{pf:ratio}. We also keep the channel dimension as is, since it is not dilated:
\[
= \frac{A}{2}\sum_{m=0}^{aM-1}\exp(j2\pi\frac{u}{aM}m+\phi) \cdot \exp(-j2\pi\frac{u}{aM}m) \sum_{n=0}^{bN-1}\exp(j2\pi\frac{v}{bN}n) \cdot \exp(-j2\pi\frac{v}{bN}n)
\]
\[
\cdot\sum_{c=0}^{C-1}\exp(j2\pi\frac{w}{C}c)\cdot\exp(-j2\pi\frac{w}{C}c) 
\]
Here, we take $m=q_a+r_a$ and $n=q_b+r_b$ where $q_a \in [0, M-1], q_b \in [0, N-1], r_a \in [0, a-1], r_b \in [0, b-1]$. Then we omit all the frequencies where $m \  \%  \ a, n \ \% \ b > 0$, since only frequencies that are multiples of $a, b$ are kept during dilation per their respective dimensions.

\[
= \frac{A}{2}\biggl\{\sum_{m=0}^{aM-1}\exp(j2\pi\frac{u}{aM}m+\phi)-\sum_{q_a=0}^{M-1}\sum_{r_a=1}^{a-1}\exp(j2\pi\frac{u}{aM}(a q_a+r_a))\biggl\}  \cdot \exp(-j2\pi\frac{u}{aM}m)
\]
\[
\cdot \biggl\{\sum_{n=0}^{bN-1}\exp(j2\pi\frac{v}{bN}n)-\sum_{q_b=0}^{N-1}\sum_{r_b=1}^{b-1}\exp(j2\pi\frac{v} {bN}(bq_b+r_b))\biggl\}\cdot\exp(-j2\pi\frac{v}{bN}n)
\]
\[
\cdot\sum_{c=0}^{C-1}\exp(j2\pi\frac{w}{C}c)\cdot\exp(-j2\pi\frac{w}{C}c) 
\]
\[
= \frac{A}{2}\biggl\{\sum_{q_a=0}^{M-1}\sum_{r_a=0}^{a-1}\exp(j2\pi\frac{u}{aM}(a q_a+r_a)+\phi)-\sum_{q_a=0}^{M-1}\sum_{r_a=1}^{a-1}\exp(j2\pi\frac{u}{aM}(a q_a+r_a))\biggl\}  
\]
\[
\cdot \biggl\{\sum_{q_a=0}^{M-1}\sum_{r_a=0}^{a-1}\exp(-j2\pi\frac{u}{aM}(a q_a+r_a))-\sum_{q_a=0}^{M-1}\sum_{r_a=1}^{a-1}\exp(-j2\pi\frac{u}{aM}(a q_a+r_a))\biggl\}  
\]
\[
\cdot \biggl\{\sum_{q_b=0}^{N-1}\sum_{r_b=0}^{b-1}\exp(j2\pi\frac{v}{bN}(bq_b+r_b))-\sum_{q_b=0}^{N-1}\sum_{r_b=1}^{b-1}\exp(j2\pi\frac{v} {bN}(bq_b+r_b))\biggl\}
\]
\[
\cdot \biggl\{\sum_{q_b=0}^{N-1}\sum_{r_b=0}^{b-1}\exp(-j2\pi\frac{v}{bN}(bq_b+r_b))-\sum_{q_b=0}^{N-1}\sum_{r_b=1}^{b-1}\exp(-j2\pi\frac{v} {bN}(bq_b+r_b))\biggl\}
\]
\[
\cdot\sum_{c=0}^{C-1}\exp(j2\pi\frac{w}{C}c)\cdot\exp(-j2\pi\frac{w}{C}c) 
\]
\[
 = \frac{A}{2}\sum_{q_a=0}^{M-1}\exp(j2\pi\frac{u}{M}q_a+\phi)\cdot\exp(-j2\pi\frac{u}{M}q_a)\cdot\sum_{q_b=0}^{N-1}\exp(j2\pi\frac{v}{N}q_b)\cdot\exp(-j2\pi\frac{v}{N}q_b) 
\]
\[
\cdot\sum_{c=0}^{C-1}\exp(j2\pi\frac{w}{C}c)\cdot\exp(-j2\pi\frac{w}{C}c) 
\]
\[
= \boxed{
 \frac{AMNC\cdot\exp(j\phi)}{2}
}
\]

Thus, we observe that the amplitude of $aM\times bM \times C$ discrete cosine wave is attenuated by $a\cdot b$ as we divide the DFT of the dilated discrete cosine wave by that of the supersampled wave (Equation \ref{eqn:aMbN-th}):
\[ \boxed{
    Attenuation \  Factor = \frac{ \frac{AMNC\cdot\exp(j\phi)}{2}}{ \frac{AabMNC\cdot\exp(j\phi)}{2}} = \frac{1}{a\cdot b}
}
\]
\end{proof}

\convinterp*
\begin{proof}
Let a discrete convolution operation \(f(m, n, c)\) consist of $C$ output channels. Then, the height, width, and input channel dimensions for each kernel \(\theta(c, m, n, c_{in})\) per output channel will be $M, \ N, \ \& \ C_{in}$ respectively, with a distinct bias $b_c$, and an input \(I(m, n, c_{in})\) with the same dimensions \cite{digitalimageprocessing}. Hence, \(f(m, n, c)\) can be expressed as:

\begin{equation}
\label{eq:conv-discrete-th}
    f(m, n, c) = \sum_{x=0}^{M-1}\sum_{y=0}^{N-1}\sum_{c_{in}=0}^{C_{in}-1}  I(x, y, c_{in})  \cdot \theta(c, m-x, n-y, c_{in}) + b_c
\end{equation}
    
However, according to the Fourier series \cite{digitalimageprocessing}, we can interpret any function as a sum of cosines. Thus, Equation \ref{eq:conv-discrete-th} becomes:
\begin{align}
\label{eq:conv-discrete-fourier-th}
     f(m, n, c) = \sum_{u=0}^{M-1}\sum_{v=0}^{N-1}\sum_{w=0}^{C-1}A_{u, v, w} \cos (2\pi\frac{u}{M}m+2\pi\frac{v}{N}n + 2\pi\frac{w}{C}c +\phi_{u, v, w}) + b_c
\end{align}

by assuming ---

\begin{align}
  \sum_{x=0}^{M-1}\sum_{y=0}^{N-1}\sum_{c_{in}=0}^{C_{in}-1}  I(x, y, c_{in})  & \cdot \theta(c, m-x, n-y, c_{in}) = \\ &\sum_{u=0}^{M-1}\sum_{v=0}^{N-1}\sum_{w=0}^{C-1}A_{u, v, w} \cos (2\pi\frac{u}{M}m+2\pi\frac{v}{N}n + 2\pi\frac{w}{C}c +\phi_{u, v, w})
  \label{eq:conv-reform-th}
\end{align}

Now, we perform D.F.T on Equation \ref{eq:conv-discrete-fourier-th}:
\begin{align*}
    DFT(f(m, n, &c_{out})) = \\
    &DFT\left(\sum_{u=0}^{M-1}\sum_{v=0}^{N-1}\sum_{w=0}^{C-1}A_{u, v, w} \cos (2\pi\frac{u}{M}m+2\pi\frac{v}{N}n + 2\pi\frac{w}{C}c +\phi_{u, v, w}) + b_c\right)
\end{align*}
\begin{align*}
    D&FT(f(m, n, c)) = \\
    &DFT\bigg\{\sum_{u=0}^{M-1}\sum_{v=0}^{N-1}\sum_{w=0}^{C-1}\left(A_{u, v, w} \cos (2\pi\frac{u}{M}m+2\pi\frac{v}{N}n + 2\pi\frac{w}{C}c +\phi_{u, v, w}) + \frac{b_c}{MNC}\right)\biggr\}
\end{align*}
\begin{align}
\label{eq:conv-fourier-base-th}
   F(u, v, w) = \frac{A_{u,v,w}MNC\exp(j\phi_{u,v,w})}{2} + b_c
\end{align}
Now, as we supersample the kernel to $aM\times bN \times C$, we will get: 
\begin{align}
\label{eq:conv-fourier-mult-th}
   F_{a\cdot b}(u, v, w) = \frac{A_{u,v,w}abMNC\exp(j\phi_{u,v,w})}{2} + b_c
\end{align}
We can observe that, for the same bias in Equation \ref{eq:conv-fourier-mult-th} we get a scaled pre-activation term in the Fourier space by a factor of $\frac{1}{a\cdot b}$ compared to Equation \ref{eq:conv-fourier-base-th}. Hence, we make both of the equations equivalent as such -
\begin{align}
\label{eq:conv-equiv-th}
   F_s(u, v, w) = \frac{1}{a\cdot b}\frac{A_{u,v,w}abMNC\exp(j\phi_{u,v,w})}{2} + b_c
\end{align}
Now, we perform IDFT of the Equation \ref{eq:conv-equiv-th} to get the time-domain equivalent:
\[
IDFT(F_s(u, v, w)) = IDFT(\frac{1}{a\cdot b} \frac{A_{u,v,w}abMNC\exp(j\phi_{u,v,w})}{2} + b_c)
\]
\begin{align*}
  IDF&T(F_s(u, v, w)) = \frac{1}{a^2b^2MNC}\sum_{u=0}^{aM-1}\sum_{v=0}^{bN-1} \sum_{w=0}^{C-1}\\ &\left(\frac{A_{u,v,w}abMNC \exp(j\phi_{u,v,w})}{2}  + \frac{b}{abMNC}\right)
  \cdot \exp(j(2\pi\frac{u}{aM}m+2\pi\frac{v}{bN}n+2\pi\frac{w}{C}c))   
\end{align*}
\[
 IDFT(F_s(u, v, w)) = \frac{1}{a\cdot b}A_{u,v,w}\cos (2\pi\frac{u}{aM}m+2\pi\frac{v}{bN}n + 2\pi\frac{w}{C}c +\phi_{u, v, w}) + b_c
\]
The IDFT operation returns the cosine for a single $u,v,w$ combination. After repeating the DFT-IDFT steps with scaling for all $u,v,w$, we get ---
\[
 f_s(m,n,c) = \frac{1}{a\cdot b}\sum_{u=0}^{aM-1}\sum_{v=0}^{bN-1} \sum_{w=0}^{C-1}A_{u,v,w}\cos (2\pi\frac{u}{aM}m+2\pi\frac{v}{bN}n + 2\pi\frac{w}{C}c +\phi_{u, v, w}) + b_c
\]

The above expression can be reformulated following \Cref{eq:conv-reform-th} as:

\begin{equation}
\label{eq:scaled-conv-th}
\boxed{
f_s(m, n, c) = \frac{1}{a\cdot b} \sum_{x=0}^{aM-1}\sum_{y=0}^{bN-1}\sum_{c_{in}=0}^{C_{in}-1}  I_s(x, y, c_{in})  \cdot \theta_s(c, m-x, n-y, c_{in}) + b_c
}
\end{equation}
Here, $I_s, \theta_s \in R^{aM\times bN\times C}$. The bias $b_c$, per output channel, will remain unchanged as \Cref{eq:conv-discrete-th}. Finally, as $F_s = F$ by definition, we can say -
\begin{equation}
\label{eq:scaled-conv-equality-th}
\boxed{
f_s(m, n, c) = f(m, n, c)
}
\end{equation}
Hence, Equation \ref{eq:scaled-conv-equality-th} can be applied to interpolate convolution layers. 
\end{proof}

%%% Comment out this section when you \bibliography{references} is enabled.
\end{document}